\pdfoutput=1
\documentclass[11pt]{article}
\usepackage[font=libertinus, citestyle=authoryear]{kurbanlab}

\DeclareAffiliation{hbku}{%
  College of Science and Engineering, Hamad Bin Khalifa University, Doha, Qatar}

\DeclareAffiliation{tamu}{%
  Department of Computer and Electrical Engineering,
  Texas A\&M University, College Station, TX, USA}

\DeclareAffiliation{iub}{%
  Luddy School of Informatics, Computing, and Engineering,
  Indiana University Bloomington, Bloomington, IN, USA}

\DeclareAffiliation{uis}{%
  Department of Computer Science,
  University of Illinois Springfield, Springfield, IL, USA}

\usepackage{booktabs}
\usepackage{colortbl}
\newcommand{\tabstyle}{\small
  \renewcommand{\arraystretch}{1.0}%
  \setlength{\aboverulesep}{0pt}\setlength{\belowrulesep}{0pt}}
\newcommand{\hdrstrut}{\rule[-0.85ex]{0pt}{3.5ex}}
\newcommand{\hdr}[1]{{\bfseries\color{kilink}\hdrstrut #1}}

\definecolor{helpblue}{HTML}{0072B2}
\definecolor{hurtrust}{HTML}{D55E00}

\title{When Does Retrieval Help Time-Series Forecasting?}
\RunningTitle{When does retrieval help time-series forecasting?}

\Author{Mert Onur Cakiroglu}{iub}
\Author{Elham Buxton}{uis}
\Author{Mehmet Dalkilic}{iub}
\Author[corresponding=hkurban@hbku.edu.qa, orcid=0000-0003-3142-2866]{Hasan Kurban}{hbku}

\Keywords{time-series forecasting; retrieval-augmented forecasting; evaluation protocol; seasonality; foundation models}
\CodeURL{https://github.com/KurbanIntelligenceLab/retrieval-regime}
\Venue{Under review}

\begin{document}
\maketitle

\begin{abstract}
Retrieval plug-ins supply a deep forecaster with information its lookback window cannot carry. Published evaluations report consistent gains, and each credits its own mechanism. We show that the benefit belongs instead to the operating point: the relation between window length $S$ and dominant seasonal period $L$, an axis the standard protocol never varies. Stratifying the evaluation by that relation exposes the regime. At $S{=}12$, a simple control that repeats the last observed period beats the six standard backbones, in aggregate, on four of seven benchmarks by $8\%$ to $44\%$ of MSE. It beats the strongest plug-in we run on ETTm1 and matches it on ECL. It is worse by up to $25\%$ on the three datasets whose training-split spectra lack a concentrated, shared period. A controlled synthetic sweep of horizon, period, and window shows the benefit boundary tracks the period (correlation $+0.71$), not the horizon ($-0.23$). A paired control with no phase to recover nearly erases the effect, consistent with phase starvation. Zero-shot pretraining does not escape it: a foundation model trails trained backbones by $22\%$ to $50\%$ on the periodic benchmarks. Within our instrument, exact lookup matches graph diffusion: the payoff is consulting the record, not the machinery on top. Two interpretable statistics, a trend test and a staleness rate, predict the sign of the per-cell benefit at $0.76$ accuracy under leave-one-dataset-out evaluation, a suggestive margin over the $0.69$ majority rule, where a 22-feature stack manages $0.57$. We propose no new plug-in. The contribution is the regime map, the protocol that reveals it, and two statistics that screen it before deployment.
Code: \url{https://github.com/KurbanIntelligenceLab/retrieval-regime}.
\end{abstract}

\printkeywords

\section{Introduction}

Retrieval plug-ins are an increasingly common companion to deep time-series forecasters. They inject structure from the training record that the window cannot carry: FAN \citep{ye2024fan} through a learned frequency prior, GTR \citep{cao2026gtr} through a learned cycle embedding, and RAFT \citep{han2025raft} by retrieving raw training windows. All three report consistent gains at their published operating points and credit their own mechanisms. FAN and GTR fix the lookback at $S{=}96$, while RAFT tunes it to 720 on the benchmarks we share.

While such evaluations ask whether a mechanism improves its chosen configuration, deployment asks whether retrieval of any kind helps at the operating point in hand. These questions differ because retrieval supplies cross-window structure, particularly information about the phase of the dominant seasonal cycle. A window covering a full period may contain this information on its own, whereas a shorter window may leave the phase unresolved. Whether retrieval has useful information to add depends mainly on the window \(S\) relative to the period \(L\), while the horizon \(H\) plays a secondary role. The mechanism determines how effectively that information is used. Consequently, we pose the question: \emph{When does retrieval help time-series forecasting?}

\begin{figure}[t]
\centering
\includegraphics[width=0.73\textwidth]{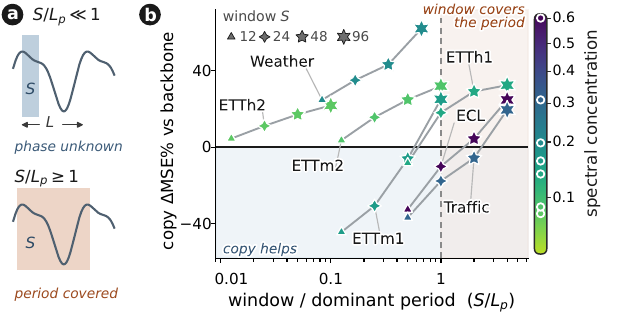}
\caption{The paper in one figure. (a) The operating point: a short window $S$ cannot fix the phase of period $L$, a covering one can. (b) Per-benchmark, per-window points, the period tile's benefit over the six trained backbones (paired $\Delta$MSE\%; $S{=}12$ as in Table~\ref{tab:copy}, larger windows from the sweep) against the covered fraction $S/L_p$ (color: spectral concentration, the share of detrended training-split power at the dominant period; size: window). Concentrated datasets gain up to 44\% when the window cannot cover the period and lose the gain as $S/L_p$ grows; weakly concentrated or heterogeneous ones never gain.}
\label{fig:teaser}
\end{figure}

\begin{figure*}[t]
\centering
\includegraphics[width=\textwidth]{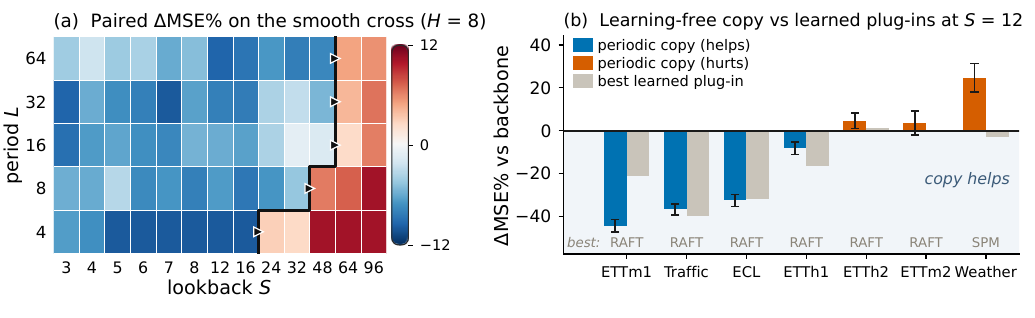}
\caption{The operating regime, in synthetic and real form. (a) Paired $\Delta$MSE\% of our probe, the symbolic periodic memory (SPM), against its DLinear backbone on the smooth synthetic cross at $H{=}8$. Blue means retrieval helps; each marker ($\blacktriangleright$) is the per-row zero crossing, and the helpful region widens as the period $L$ grows. (b) The period tile against the best learned plug-in per dataset at $S{=}12$, as paired $\Delta$MSE\% versus six trained backbones over four horizons and three seeds. Tile bars: 95\% CIs across the twenty-four backbone-horizon cells. The tile helps where the training-split periodogram shows a strong period and fails where the spectrum is trend dominated (ETTh2), weakly structured (ETTm2), or heterogeneous (Weather).}
\label{fig:regime}
\end{figure*}

The standard protocol cannot isolate this effect because \(S=96\) already spans the dominant daily cycle on most benchmarks. Weather, ECL, and Traffic are the main exceptions: their 144- or 168-step cycles extend beyond the lookback, so retrieval can still add information the window lacks. Recent benchmark critiques similarly argue that rigid task formulations obscure such structure \citep{qiao2026time}. Figure~\ref{fig:teaser} reveals it by relating retrieval benefit to the ratio of lookback length to the dominant period.

Varying the window makes the regime visible. At $S{=}12$, we use a simple parameter-free control, the \emph{period tile}, which repeats the most recent complete seasonal period (Figure~\ref{fig:regime}(b)). Aggregated across the six deep backbones, it reduces MSE by 8\% to 44\% on four of seven benchmarks. It also outperforms the strongest plug-in we test on ETTm1 and performs comparably to it on ECL, but provides no benefit on the remaining three benchmarks, increasing MSE by as much as 25\%. The results align with the training-split spectra: the tile helps when a strong, shared period extends beyond the lookback, but not when periodic structure is weak or trend-dominated (ETTm2, ETTh2) or when dominant periods differ across channels (Weather).

The same pattern holds across the window sweep: as the window grows, the benefit of copying fades and eventually becomes a small loss, with longer-period datasets reaching the crossing at larger windows. A controlled synthetic cross of 1{,}656 runs varies the period, horizon, and window independently and shows that the crossing depends on the period rather than the horizon. Replacing the smooth cycle with discrete motifs that recur at random times removes any fixed position within the cycle, and the benefit nearly disappears. We call this position the phase, and the condition in which the window is too short to identify it \emph{phase starvation}.

At this phase-starved operating point, three published plug-ins, our probe, and a zero-shot foundation model succeed or fail according to the same regime, regardless of their sophistication. Two interpretable statistics help identify the regime before deployment, while each mechanism's design determines the magnitude of the benefit.

We make three contributions. \textbf{First, a regime-stratified evaluation.} We introduce a protocol that varies the lookback $S$ relative to a spectral estimate of the dominant period $L$ across seven benchmarks and six backbones, together with a controlled synthetic $H \times L \times S$ study and a phase-free control. \textbf{Second, a replicated empirical finding.} Across the mechanisms we test, retrieval benefit is governed primarily by the window-to-period relation rather than the forecast horizon. On real data, a parameter-free period tile outperforms trained backbones on four of seven benchmarks at $S{=}12$, while zero-shot pretraining remains subject to the same regime. The pattern also transfers to withheld datasets and, along the concentration axis, to retail forecasting, and Proposition~\ref{prop:phase} provides a corresponding lower bound for window-only prediction. \textbf{Third, a pre-deployment diagnostic.} Spectral concentration identifies the coarse regime, while two interpretable statistics for trend and staleness predict whether retrieval will help at the cell level and, under leave-one-dataset-out evaluation, transfer better than generic feature-based baselines. Proposition~\ref{prop:risk} connects these statistics to a deployment risk bound.

\section{An Anatomy of Retrieval Plug-ins}

\paragraph{Mechanisms} Existing plug-ins differ along three axes. \emph{What is stored}: a learned prior over instance-extracted frequencies (FAN \citep{ye2024fan}), dataset-level reference frequencies (MFRS \citep{mfrs2025}), a learned per-position cycle embedding (GTR \citep{cao2026gtr}, CycleNet \citep{cyclenet2024}), or the raw training corpus (RAFT \citep{han2025raft}), compressed to cluster medoids in PFRP \citep{du2026pfrp}. \emph{How it is keyed}: by frequency content, by timestamp modulo a cycle length, or by window similarity, jointly across channels or per channel \citep{kang2026craft}. \emph{How it is fused}: by normalization, an additive prior, gated combination \citep{tsmemory2026}, or diffusion guidance \citep{ratd2024}. Despite these differences, all of these methods inject information that the current window does not contain. Pretraining on large corpora serves a related role \citep{ansari2024chronos} and can itself be combined with retrieval \citep{tsrag2025}. We therefore treat retrieval plug-ins as one class based on their shared function.

\paragraph{What prior evaluations condition on} RAFT analyzes retrieval gains in relation to pattern rarity and process autocorrelation, but only on synthetic data at a fixed lookback. Concurrent work modulates retrieval diversity using dataset-level stationarity \citep{zhou2026saraf} or relates retrieval benefit to series smoothness \citep{ride2026}. None of these analyses varies $S$ relative to $L$. At the other extreme, \citet{ahuja2026retrieval} show retrieval outperforming very long contexts, with $S$ ranging from 720 to 3{,}000, where the main challenge is context noise rather than phase starvation.

\paragraph{Window-length theory and practice} Concurrent theory relates the required lookback to a process's memory length \citep{butera2026context}, but this characterization can be loose for periodic signals. Our analysis instead focuses on the complementary role of the dominant period. Because lookback mis-specification can reverse model rankings \citep{lookbackbias2025}, our protocol varies \(S\) as the main experimental factor while holding the tuning budget fixed within each cell.

\paragraph{Simple baselines and diagnostics} Simple baselines have repeatedly recalibrated time-series forecasting \citep{zeng2023transformers,tan2024llm,sun2022fredo}, and seasonal naive remains a standard reference \citep{gifteval2024,hyndman2018forecasting}. Context parroting \citep{zhang2025parroting} shows that copying from a sufficiently long input context can outperform foundation models on dynamical systems. The period tile addresses the complementary case by using the most recent observed period when it lies outside the model's lookback. On the diagnostic side, FFORMS, FFORMA, and related methods use generic series features to predict the best forecaster, combination weights, or method-specific error \citep{talagala2023fforms,montero2020fforma,talagala2022fformpp,abdallah2025autoforecast}. Recent evidence questions whether such features transfer across datasets \citep{akinci2026selection}. Our results support this concern for generic feature sets, while showing that statistics tied to retrieval's failure modes transfer better.

\section{Study Design}

\paragraph{Setting and notation} A forecaster maps a lookback window $X_{t-S+1:t} \in \mathbb{R}^{S \times D}$ from a $D$-channel series to the next $H$ steps. For each channel, we estimate the dominant period $L$ from the strongest peak in the linearly detrended training-split periodogram over periods 2 to 1{,}024. Together, the lookback $S$, period $L$, and horizon $H$ define the \emph{operating point} $(S,L,H)$. We define a \emph{regime} as a region where the sign and magnitude of retrieval benefit remain qualitatively similar. Our \emph{regime claim} is that retrieval helps most when $S \ll L$, weakens as $S$ approaches $L$, and becomes negligible or harmful once the window spans the period. The horizon $H$ changes the magnitude of the effect, but not where this boundary lies. We summarize channel-level periods by their dataset-level median $L_p$, except where noted.

\begin{table}[htbp]
\centering
\tabstyle
\setlength{\tabcolsep}{7pt}

\begin{tabular}{
@{}lrr
lrr@{}
}
\toprule
\rowcolor{kilmist}  \multicolumn{3}{@{}c}{\hdr{\textit{ETT family}}} & \multicolumn{3}{c@{}}{\hdr{\textit{Other datasets}}} \\
\cmidrule(r{0.6em}){1-3}
\cmidrule(l{0.6em}){4-6}
\rowcolor{kilmist}  \hdr{Dataset} & \hdr{$L_p$} & \hdr{Conc.} & \hdr{Dataset} & \hdr{$L_p$} & \hdr{Conc.} \\
\midrule
ETTh1 & 24  & 0.136 & ECL     & 24  & 0.602 \\
ETTh2 & 960 & 0.082 & Traffic & 24  & 0.310 \\
ETTm1 & 96  & 0.160 & Weather & 144 & 0.198 \\
ETTm2 & 96  & 0.089 &         &     &       \\
\bottomrule
\end{tabular}

\caption{Dataset-level dominant period $L_p$ and spectral concentration (Conc.). $L_p$ is the median channel-level periodogram peak, except for ETTh2, where we report the modal trend-scale peak. Concentration is the median channel-level fraction of detrended power at the dataset's modal period. ETTm2 and ETTh2 have the lowest concentration and are the only datasets where no copy or retrieval mechanism reliably helps.}
\label{tab:spectra}
\end{table}

\paragraph{Benchmarks, backbones, protocol} We evaluate seven standard benchmarks (ETTh1, ETTh2, ETTm1, ETTm2, Weather, ECL, and Traffic) using six backbones: DLinear \citep{zeng2023transformers}, PatchTST \citep{nie2023patchtst}, iTransformer \citep{liu2024itransformer}, TimesNet \citep{wu2023timesnet}, TimeMixer \citep{wang2024timemixer}, and TimeXer \citep{wang2024timexer}. We use forecast horizons $H \in \{96,192,336,720\}$ and three seeds. We report mean squared error (MSE) on z-score-normalized data. Mean absolute error (MAE) gives the same conclusion for every in-regime win, although small out-of-regime effects can change sign under MAE (technical appendix). We make paired comparisons within the same backbone, horizon, and seed. For each pair, a mechanism's effect is the percentage change in test MSE relative to the backbone without the mechanism. We report the mean and 95\% Student-$t$ confidence interval across the twenty-four backbone-horizon cell means, averaged over three seeds. At the headline operating point, $S{=}12$ spans at most half of the representative period on every benchmark. On the four benchmarks with $L_p \ge 96$, it covers no more than 12.5\% of the period. Additional window sweeps use $S \in \{24,48,96\}$, while analyses outside the headline tables use $H \in \{96,336\}$. For tractability, the $S{=}12$ experiments on ECL and Traffic use the first 20 channels in every comparison. For the published plug-ins, FAN uses a validation-selected top-4 frequency budget, GTR uses a fixed 168-step cycle (weekly for hourly data), and RAFT uses its published period set with a fixed top-20 retrieval budget.

\paragraph{Reading the periodogram} When a clear calendar cycle is present, the periodogram identifies it as $L$. We also measure how strongly each channel follows the dataset's most common period. We call this \emph{spectral concentration} and compute it as the fraction of detrended periodogram power at that period. Concentration distinguishes benchmarks that share the same nominal cycle but differ in periodic strength (Table~\ref{tab:spectra}). For example, ETTh2's modal peak near 960 steps reflects trend rather than seasonality, while ETTm2 has little power at its nominal daily period. Weather instead has channel-specific periods ranging from 144 to beyond 900 steps, so no single lag fits all channels. These cases explain the tile's failures and Weather's preference for a per-channel mechanism. Concentration therefore provides a coarse regime axis, complemented later by diagnostics for trend and staleness.

\paragraph{The embarrassingly simple control} The period tile repeats the most recent fully observed period, forecasting $\hat{y}_{t+h}=y_{t-L+1+((h-1)\bmod L)}$. This is the causal multi-period extension of the textbook seasonal-naive forecast \citep{hyndman2018forecasting}. By contrast, the common shorthand $\hat{y}_{t+h}=y_{t+h-L}$ is valid only for $h\leq L$; beyond one period, it indexes future targets and leaks ground truth (technical appendix). We first evaluate the tile with one calendar period per dataset. We use the daily cycle for ETT and Weather and the 168-step weekly cycle for ECL and Traffic. We then let the lag vary by channel and select it on the validation split with a causal tiling criterion.

\paragraph{Symbolic periodic memory} The tile tests whether periodic information is useful, but not whether a learned mechanism can exploit it. We therefore use symbolic periodic memory (SPM), a cheap, per-channel, switchable mechanism whose failures remain interpretable. SPM discretizes each channel into $V{=}5$ bins and indexes every length-3 training tuple by the mean future window that followed it. At inference, it looks up the discretized suffix $u^{*}$ of the input window, using a nearest-neighbor fallback when needed, and fuses the retrieved payload with the backbone through a learned per-channel sigmoid gate. SPM combines established components \citep{rabanser2020discretization,lin2003sax,ansari2024chronos,dragon2025,tsmemory2026}, adding under 5\% parameters and a 7\% median step-time overhead. It is designed as a diagnostic rather than a proposed forecasting method: discrete keys, per-channel independence, and a disengageable gate keep the regime measurement interpretable. The gate equation, state bound, early-stopping account, and graph-diffusion variant appear in the technical appendix.

\section{The Regime on Real Benchmarks}

\begin{table*}[htbp]
\centering
\tabstyle
\setlength{\tabcolsep}{7.5pt}
\begin{tabular}{@{}lrrrrrrr@{}}
\toprule
\rowcolor{kilmist} & \multicolumn{2}{@{}c}{\hdr{Learning-free tiles ($\pm$95\% CI)}} & \multicolumn{4}{c}{\hdr{Learned plug-ins}} & \multicolumn{1}{c@{}}{\hdr{Zero-shot}} \\
\cmidrule(lr){2-3} \cmidrule(lr){4-7} \cmidrule(l){8-8}
\rowcolor{kilmist}  \hdr{Dataset} & \hdr{Period tile} & \hdr{Best-lag tile} & \hdr{FAN} & \hdr{GTR} & \hdr{SPM} & \hdr{RAFT} & \hdr{Chronos} \\
\midrule
\multicolumn{8}{@{}l}{\emph{Concentrated period, shared across channels (Table~\ref{tab:spectra}); the tile helps in aggregate:}} \\
ETTm1   & $\textcolor{helpblue}{-\mathbf{44.3} \pm 2.9}$ & $\textcolor{helpblue}{-37.4 \pm 3.5}$ & $\textcolor{helpblue}{-16.7}$ & $\textcolor{helpblue}{-17.2}$ & $\textcolor{helpblue}{-\phantom{0}5.8}$ & $\textcolor{helpblue}{-21.2}$ & $\textcolor{hurtrust}{+35.7}$ \\
Traffic & $\textcolor{helpblue}{-36.7 \pm 2.5}$ & $\textcolor{helpblue}{-34.1 \pm 2.5}$ & $\textcolor{helpblue}{-11.2}$ & $\textcolor{hurtrust}{+19.5}$ & $\textcolor{helpblue}{-\phantom{0}1.0}$ & $\textcolor{helpblue}{\mathbf{-39.9}}$ & $\textcolor{hurtrust}{+49.8}$ \\
ECL     & $\textcolor{helpblue}{-32.5 \pm 2.8}$ & $\textcolor{helpblue}{-\mathbf{35.3} \pm 2.8}$ & $\textcolor{helpblue}{-\phantom{0}7.3}$ & $\textcolor{hurtrust}{+20.7}$ & $\textcolor{helpblue}{-\phantom{0}2.3}$ & $\textcolor{helpblue}{-31.9}$ & $\textcolor{hurtrust}{+38.5}$ \\
ETTh1   & $\textcolor{helpblue}{-\phantom{0}8.2 \pm 2.9}$ & $\textcolor{hurtrust}{+\phantom{0}3.9 \pm 2.7}$ & $\textcolor{hurtrust}{+\phantom{0}1.2}$ & $\textcolor{hurtrust}{+\phantom{0}8.4}$ & $\textcolor{helpblue}{-\phantom{0}0.5}$ & $\textcolor{helpblue}{\mathbf{-16.3}}$ & $\textcolor{hurtrust}{+22.2}$ \\
\midrule
\multicolumn{8}{@{}l}{\emph{Concentrated but per-channel periods; no single lag fits all channels:}} \\
Weather & $\textcolor{hurtrust}{+24.8 \pm 6.6}$ & $\textcolor{hurtrust}{+12.6 \pm 4.1}$ & $\textcolor{hurtrust}{+\phantom{0}1.2}$ & $\textcolor{hurtrust}{+13.1}$ & $\textcolor{helpblue}{\mathbf{-\phantom{0}2.9}}$ & $\textcolor{hurtrust}{+\phantom{0}8.8}$ & $\textcolor{hurtrust}{+\phantom{0}6.4}$ \\
\midrule
\multicolumn{8}{@{}l}{\emph{Weak or trend-dominated spectra; no method separably wins:}} \\
ETTh2   & $\textcolor{hurtrust}{+\phantom{0}4.6 \pm 3.6}$ & $\textcolor{hurtrust}{+\phantom{0}3.8 \pm 5.2}$ & $\textcolor{hurtrust}{+50.1}$ & $\textcolor{hurtrust}{+54.8}$ & $\textcolor{hurtrust}{+\phantom{0}4.8}$ & $\textcolor{hurtrust}{+\phantom{0}1.5}$ & $\textcolor{helpblue}{-\phantom{0}0.1}$ \\
ETTm2   & $\textcolor{hurtrust}{+\phantom{0}3.7 \pm 5.6}$ & $\textcolor{hurtrust}{+\phantom{0}4.6 \pm 4.6}$ & $\textcolor{hurtrust}{+17.9}$ & $\textcolor{hurtrust}{+54.6}$ & $\textcolor{hurtrust}{+\phantom{0}3.0}$ & $\textcolor{hurtrust}{+\phantom{0}0.4}$ & $\textcolor{hurtrust}{+\phantom{0}3.6}$ \\
\bottomrule
\end{tabular}
\normalsize
\caption{Learning-free tiles and learned plug-ins at $S{=}12$, reported as paired $\Delta$MSE\% relative to each trained backbone. Negative values indicate improvement. Results aggregate 24 backbone-horizon cells (six backbones and four horizons), each averaged over three seeds. Tile columns show 95\% Student-$t$ CIs. Mechanism CIs and Wilcoxon tests appear in the technical appendix, where the closest entries in the bottom block are at parity. SPM is our instrument. Chronos is zero-shot Chronos-Bolt evaluated on the same normalized windows and is not considered for bolding. All methods follow one protocol, with plug-in settings listed under \textit{Study Design}. Bold marks the best value in a row when its 95\% CI excludes zero. Blue indicates improvement and red degradation.}
\label{tab:copy}
\label{tab:paradigm}
\end{table*}

\textbf{\emph{Does a learning-free tile really beat trained deep models?}} Yes, on four of seven benchmarks (Table~\ref{tab:copy}). At $S{=}12$, the period tile reduces aggregate MSE by 44.3\% on ETTm1, 36.7\% on Traffic, and 32.5\% on ECL, winning every paired run for $H \in \{96,336\}$. It also reduces MSE by 8.2\% on ETTh1, winning 92\% of paired runs, with one backbone-level margin within noise. Against learned plug-ins, it beats the best on ETTm1, matches RAFT on ECL within the cell-level CIs, and trails RAFT by three points on Traffic. On ETTm1 at $H{=}96$, reducing $S$ from 96 to 12 raises mean backbone MSE from 0.332 to 0.834. The tile recovers most of this loss, reaching 0.427. The tile instead hurts ETTh2 ($+4.6\%$), ETTm2 ($+3.7\%$), and Weather ($+24.8\%$). Per-channel validation-selected lags do not reverse these failures. On Weather, they reduce the penalty to $+12.6\%$. A periodogram-lag variant performs worse still.

\textbf{\emph{Does retrieval benefit fade as the window grows?}} To isolate this effect, we use SPM rather than the tile. At each $S$, SPM and its baseline share the same backbone, so the difference measures only what retrieval adds. Figure~\ref{fig:ssweep} shows that SPM's benefit consistently fades as $S$ increases from 12 to 96. Every dataset that benefits at $S{=}12$ crosses to zero or worse within this range. The crossings loosely track $L_p$: ETTh1 and Traffic ($L_p{=}24$) cross between 12 and 24; ECL ($L_p{=}24$) and ETTm1 ($L_p{=}96$) between 24 and 48; and Weather ($L_p{=}144$) between 48 and 96. ETTh2 and ETTm2, the two low-concentration datasets, never benefit at any tested $S$. On ETTm2, shortening the window from 96 to 12 raises backbone MSE only from 0.181 to 0.228 at $H{=}96$. This leaves little for a cross-window mechanism to recover. ETTm1 and ETTm2 share the same $S$, $L$, and $H$, yet SPM helps only on ETTm1. The contrast shows that the period must be strong, not merely present.

The real-data sweep shows that the boundary moves with $L$, but not by a fixed scaling law. Crossings lie near $L$ on long-period datasets and within a small multiple of $L$ on daily ones, yet ECL and Traffic share $L_p{=}24$ and cross at different windows. Their tile sweep also uses a fixed 168-step lag. We therefore quantify the boundary using the synthetic cross, where lag and period coincide by construction. Against retrained backbones, the period tile shows the same attenuation and crosses zero on every dataset where it helps at $S{=}12$ (technical appendix). Published gains at $S{=}96$ are consistent with the map. Weather's 144-step period and the 168-step weekly cycles in ECL and Traffic still exceed the window, placing FAN and GTR near the boundary where small gains are expected. By contrast, RAFT uses validation-tuned lookbacks of 720 on nearly every benchmark, placing it on the covered side.

\begin{figure}[t]
\centering
\includegraphics[width=0.72\textwidth]{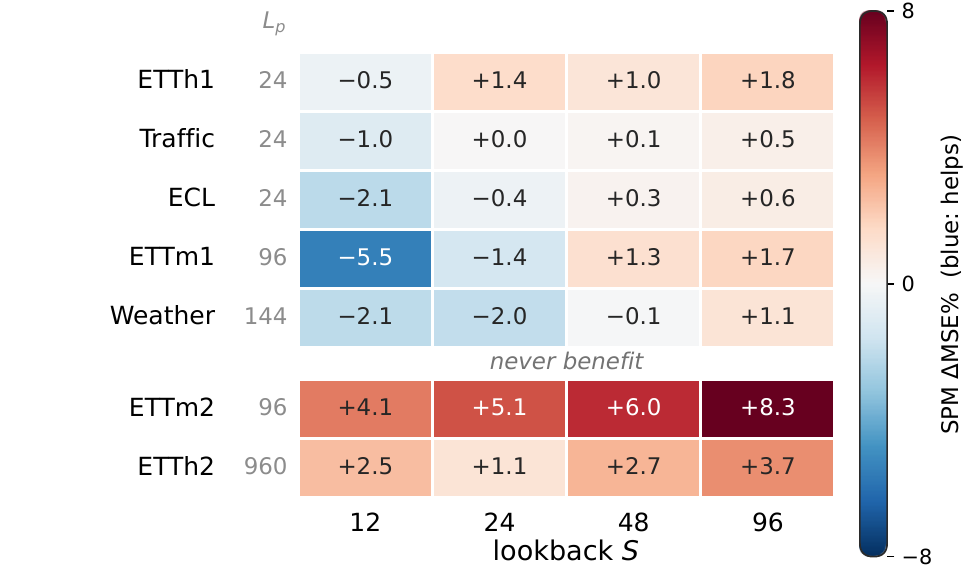}
\caption{Paired $\Delta$MSE\% of the SPM instrument as the window grows from 12 to 96, one row per dataset, benefiting rows ordered by $L_p$ and the never-benefiting pair grouped below, aggregated over twelve backbone-horizon cells (six backbones at $H{\in}\{96,336\}$, three seeds each); $L_p$ is the training-split median per-channel periodogram period (ETTh2: modal, trend-scale value). Blue (negative) means the instrument helps. Benefit attenuates and flips sign in the order of $L_p$, and the two low-concentration datasets (bottom) never benefit at any tested window.}
\label{fig:ssweep}
\end{figure}

\section{A Paired Synthetic Cross Isolates the Mechanism}

Real benchmarks confound period with sampling rate, channel count, and nonstationarity. We therefore sweep the three quantities in our claim independently: $H \in \{4,8,16,32\}$, $L \in \{4,8,16,32,64\}$, and 13 values of $S$ from 3 to 96, extended to 192 for the two largest periods. Using three seeds, a DLinear backbone, and the SPM probe gives 1{,}656 runs. Figure~\ref{fig:regime}(a) shows the $H{=}8$ slice for DLinear with SPM, our controlled backbone-probe pair. The same attenuation-and-crossing signature appears on five additional backbones in the synthetic cross and with FAN, GTR, and RAFT on real benchmarks, as shown under \textit{Robustness and Generalization}. Each series is a strictly periodic smooth waveform built from Fourier harmonics of the base period, with random phases and light observation noise (full specification in the code appendix). Smoothness is essential: the waveform varies between observed points, so accurate forecasting requires recovering its phase.

\textbf{\emph{Does the boundary track the period or the horizon?}} For each $(H,L)$, we smooth the $S$ grid with a three-point median and define the boundary as the first zero crossing after the point of deepest benefit. This prevents isolated fluctuations from being mistaken for the boundary. Because percentage changes become unstable near the noise floor, we report absolute errors in the code appendix. Alternative crossing definitions yield the same conclusion and are tabulated in the technical appendix.

Across all twenty $(H,L)$ pairs, the crossing follows $L$ more closely than $H$, correlating at $+0.71$ with $L$ and $-0.23$ with $H$ on DLinear. Within each horizon, the boundary occurs at the same or larger $S$ as $L$ increases, but it grows more slowly than the period. It lies near $6L$ at $L{=}4$ and between $L$ and $2L$ at $L{=}64$ (Figure~\ref{fig:mechanism}). By contrast, $H$ mainly controls the size of the benefit. At $L{=}64$, the gain deepens from $-9\%$ at $H{=}4$ to $-17\%$ at $H{=}32$, while the boundary remains statistically flat across horizons. Thus, $L$ determines where the regime ends, and $H$ determines how much retrieval helps.

\textbf{\emph{Is phase the quantity that matters?}} A paired phase-free control isolates this effect. It replaces the smooth periodic cycle with motifs of the same length that recur at random times, removing any phase that a short window could miss. Under the same analysis, the boundary correlation with $L$ falls to $-0.25$, all twenty crossings occur by $S{\le}16$, and in-regime gains shrink to $0.7\%$--$2.7\%$, compared with $7\%$--$19\%$ on the smooth cross (Figure~\ref{fig:mechanism}). Thus, removing periodic timing largely eliminates both the $L$-dependent boundary and the large retrieval gains. To formalize this intuition, treat the waveform as unknown and revealed only at the phases covered by the input window.

\begin{proposition}[Phase starvation]\label{prop:phase}
Let $y_t = \Phi(t \bmod L)$, the waveform $\Phi = (\Phi(0), \dots, \Phi(L{-}1))$ drawn from a prior with $\operatorname{Var}(\Phi(q) \mid \Phi(A)) \ge v > 0$ almost surely for every phase $q$ and every set $A$ of at most $S$ phases with $q \notin A$. Call a predictor \emph{window-only} if it is a measurable function of $(y_{t-S+1}, \dots, y_t)$ alone. If $S < L$, every window-only predictor incurs risk at least $v$ on every horizon step whose target phase the window does not cover, hence at least $v\,(1 - S/L)$ per step averaged over any horizon $H = mL$ (integer $m$). A predictor with access to the realized waveform, which the training record supplies once it spans one period, incurs none of this term; at $S \ge L$ the term vanishes, and the period tile attains zero waveform risk.
\end{proposition}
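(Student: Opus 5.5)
The argument is the standard fact that conditional expectation minimizes squared error, applied phase by phase. I fix the forecast origin $t$ and write $A_t=\{(t-S+1) \bmod L,\dots,t \bmod L\}$ for the set of phases the window reveals. When $S<L$ these are $S$ distinct phases, so $|A_t|=S$. Since $y_s=\Phi(s \bmod L)$, the window $(y_{t-S+1},\dots,y_t)$ is a measurable bijective relabeling of $\Phi(A_t)$ for fixed $t$. The $\sigma$-algebras therefore coincide, and any window-only predictor $\hat y_{t+h}$ is a measurable function $g(\Phi(A_t))$. The randomness is the prior over $\Phi$ with $t$ fixed. If the paper intends $t$ (or the phase offset) to be random as well, I would condition on it first and then average, and the argument is unchanged.

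\textbf{Per-step bound.} Fix a horizon step $h$ with target phase $q=(t+h)\bmod L\notin A_t$. Decompose
\[
\mathbb{E}\bigl[(\Phi(q)-g(\Phi(A_t)))^2\bigr]=\mathbb{E}\bigl[\operatorname{Var}(\Phi(q)\mid\Phi(A_t))\bigr]+\mathbb{E}\bigl[(\mathbb{E}[\Phi(q)\mid\Phi(A_t)]-g)^2\bigr].
\]
The cross term vanishes by the tower property, assuming a finite second moment (otherwise the risk is infinite and the bound is trivial). The first term is at least $v$ by the almost-sure hypothesis with $A=A_t$, since $|A_t|\le S$ and $q\notin A_t$. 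This gives the first claim.

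\textbf{Averaging over $H=mL$.} The steps $h=1,\dots,mL$ run through each residue class mod $L$ exactly $m$ times, so the target phases cover every phase exactly $m$ times. Exactly $L-S$ phases lie outside $A_t$. At least $m(L-S)$ of the $mL$ steps therefore incur risk $\ge v$, and the remaining steps incur risk $\ge 0$. Dividing by $mL$ gives the average bound $v(1-S/L)$. The integrality $H=mL$ is exactly what makes this count exact, and this counting step is the only place it is needed.

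\textbf{Contrasting claims.} A predictor that knows the realized $\Phi$, for instance from a training record of length at least $L$ on the same series, can output $\Phi(q)$ exactly. Its risk on this waveform term is zero; under light observation noise only the noise term would remain, and the proposition explicitly sets that aside. If $S\ge L$, the window contains $L$ consecutive indices, so $A_t$ is every phase. The tile's index $t-L+1+((h-1)\bmod L)$ lies in the window and satisfies
\[
t-L+1+((h-1)\bmod L)\equiv t+h \pmod L.
\]
So $\hat y_{t+h}=\Phi((t+h)\bmod L)=y_{t+h}$ exactly, and the tile has zero waveform risk.

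The main obstacle is interpretive rather than technical. One must pin down the probability space: the risk is over the prior on $\Phi$, with the origin's phase fixed or conditioned on. One must also note that the hypothesis is needed for the specific set $A_t$ of size exactly $S$, which "at most $S$" covers. Beyond that, the $L^2$ projection argument does all the work.
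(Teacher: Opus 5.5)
Your proposal is correct and follows essentially the same route as the paper's proof. At each starved step you lower-bound the risk by the Bayes risk $\mathbb{E}[\operatorname{Var}(\Phi(q)\mid\Phi(A_t))]\ge v$, you count exactly $m(L-S)$ starved steps when $H=mL$, and you check that for $S\ge L$ the tile reads a window coordinate congruent to $t+h$ modulo $L$. The only difference is presentational: the paper reveals the alignment $a$ and then argues that coarsening to the window's $\sigma$-algebra can only raise the expected conditional variance, which also covers random alignment and independent observation noise, whereas you fix $t$ and use exact equality of the two $\sigma$-algebras.
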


\noindent\emph{Proof sketch.} Squared-error risk is at least the conditional variance of the target given the window. The window is a function of the covered phase values and the alignment, and revealing the alignment only strengthens the bound, so the assumption gives the per-step floor. Exactly $L{-}S$ of every $L$ consecutive horizon steps land on uncovered phases. Observation noise adds to every term. The full proof and two priors satisfying the assumption, one exactly and one off degenerate configurations, are in the technical appendix.

The bound matches the geometry in Figure~\ref{fig:teaser}: window-only risk grows with the starved fraction $1-S/L$. The term $v$ captures waveform uncertainty that the observed phases do not resolve, for which spectral concentration serves as an empirical proxy. A trained backbone is not window-only because its weights encode information from the training record. Its failure to close the gap at $S \ll L$ is therefore an empirical result, not a consequence of the proposition.

\textbf{\emph{Does concentration cause the benefit, or only correlate with it?}} To isolate its effect, a third sweep holds the waveform and operating point fixed across two starved grids while varying only the share of periodic power. Retrieval benefit increases monotonically with concentration, from $0.4\%$ at ETTh2's level of 0.08 to $12\%$ at ECL's level of 0.60 ($\Delta$MSE\% against concentration: $r{=}{-}0.93$ over 60 paired runs; construction in the technical appendix).

\begin{figure}[t]
\centering
\includegraphics[width=0.590\textwidth]{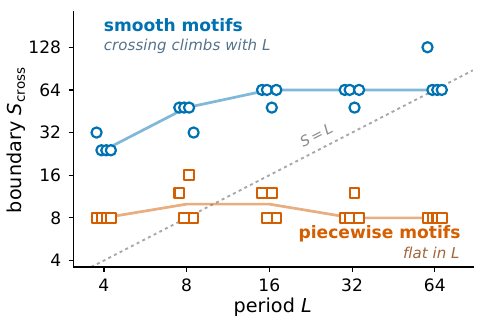}
\caption{Boundary location versus period for the two synthetic designs, all horizons overlaid, on log-log axes. Solid lines trace per-$L$ medians. With smooth, phase-informative waveforms (circles) the zero crossing climbs with the period $L$; with discrete-motif waveforms (squares), aperiodic and hence phase-free by construction, it does not. The smooth-cross grid extends to $S{=}192$ for the two largest periods, so no crossing is censored.}
\label{fig:mechanism}
\end{figure}

\section{Plug-in Families at the Operating Point}
The tile columns of Table~\ref{tab:copy} show that the starved operating point leaves useful information outside the window. The mechanism columns test whether different methods can recover that information. They compare four plug-in families and a zero-shot foundation model at the same operating point. Each method succeeds only when its design can carry the missing structure from the training record. Weather's middle row separates the two outer blocks because its periods are strong within channels but differ across them. The variation across channels favors a per-channel mechanism over a single-lag tile.

\paragraph{Parametric priors pay only where their assumption holds} FAN helps on ECL, Traffic, and ETTm1 ($-7.3\%$, $-11.2\%$, $-16.7\%$) and hurts elsewhere, up to $+50.1\%$ on ETTh2. Its frequency budget is capped by the window: a 12-point input has six positive-frequency rFFT bins, so a top-8 budget is not runnable at $S{=}12$, and FAN's published ETTm1 (top-11) and Traffic (top-30) budgets are likewise infeasible. GTR's learned cycle embedding helps on ETTm1 ($-17.2\%$) and fails by up to $+54.8\%$ elsewhere. Tuning its cycle per dataset leaves it a net failure on six of seven (technical appendix), so at $S{=}12$ the failure belongs to the mechanism, not the cycle.

\paragraph{Corpus retrieval pays at corpus prices} RAFT is the strongest learned mechanism on every strongly periodic benchmark, as the regime account expects, since it injects the very structure the window lacks. Its ETTh2 and ETTm2 results ($+1.5\%$, $+0.4\%$) are at parity, both intervals straddling zero. $S{=}12$ sits far from RAFT's published lookbacks, tuned over a 96-to-720 grid and 720 on nearly every benchmark, so its strength here is evidence about the mechanism class, not a re-evaluation of the method. The costs are corpus sized, with storage $O(NHD)$ in the training length $N$, and it fails where stored windows cannot resolve heterogeneous per-channel phases (Weather, $+8.8\%$).

\paragraph{A bounded instrument} SPM is the top scorer only on Weather, the one dataset where it is also the only learned mechanism that helps and where per-channel independence is the requirement. Its dataset-level failures stay within $+4.8\%$, against $+50.1\%$ for FAN and $+54.8\%$ for GTR. Instrumented runs attribute the bound to early stopping rather than movement of the per-channel gate, and it is loose: the worst single cell reaches $+31.5\%$ at the longest horizon (technical appendix).

\textbf{\emph{Does zero-shot pretraining escape the regime?}} No. Chronos-Bolt \citep{ansari2024chronos}, given the same 12-step normalized windows, trails trained backbones by $+22.2\%$ to $+49.8\%$ on every strongly periodic benchmark and stays within single digits only where the spectrum is weak or heterogeneous (ETTh2 $-0.1\%$, ETTm2 $+3.6\%$, Weather $+6.4\%$). A zero-shot model is window-only in the sense of Proposition~\ref{prop:phase}: its weights encode a prior, not this series' record. At $S \ll L$ the starved phases are therefore unavailable to it. A matched $S{=}96$ control agrees: the deficits collapse to parity on ECL and Traffic, where the window covers the period, and widen on ETTm1, whose 96-step period the window only just reaches, and on Weather, whose 144-step period it does not cover.

\textbf{\emph{Does mechanism sophistication matter inside the regime?}} Within the instrument, surprisingly little. Replacing the instrument's graph-diffusion retrieval with the exact tuple entry, across 7 datasets $\times$ 6 backbones $\times$ 2 horizons $\times$ 3 seeds, changes the result by at most $0.28\%$ (pooled $+0.07 \pm 0.10$, Wilcoxon over 252 runs, $p{=}0.88$), licensing SPM as a clean within-family probe. Yet Table~\ref{tab:paradigm} shows that crossing families buys 16 to 39 points on the periodic benchmarks at corpus prices.

\section{A Pre-Deployment Diagnostic}

To locate the regime before deployment, concentration and $S/L$ provide a coarse map, while two label-free statistics track the remaining failure modes. The channel-averaged training-split ADF $p$-value \citep{adf1979,said1984adf} tracks trend domination, as in ETTh2; the symbolic out-of-vocabulary rate (OOV), the worst per-channel fraction of deployment tuples absent from the training index, tracks memory staleness. Proposition~\ref{prop:risk} formalizes both. Weather needs no additional diagnostic because SPM's per-channel design handles its heterogeneity (Table~\ref{tab:paradigm}).

\begin{proposition}[Deployment risk of exact lookup]\label{prop:risk}
With targets bounded by $B$, write $R_{\mathrm{sym}}$ for the Bayes risk of predicting the future from the instrument's queried symbolic state $u^*$ under the test law, $D_{\mathrm{TV}}(u)$ for the train-test total-variation drift of the conditional future law at $u$, and $\delta$ for the out-of-vocabulary mass. Exact lookup at training-population payloads satisfies, per forecast coordinate,
\begin{equation}\label{eq:riskmain}
\mathbb{E}_{\mathrm{te}}\big[(\hat{y}_{\mathrm{retr}} - y)^2\big] \le R_{\mathrm{sym}} + 4B^2 \mathbb{E}_{\mathrm{te}}\big[D_{\mathrm{TV}}(u^*)\big] + 4B^2 \delta.
\end{equation}
\end{proposition}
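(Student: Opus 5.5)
We split the test law on whether the queried state $u^*$ is in the training index, and decompose the error on each piece. The convention is that exact lookup returns the training-population conditional mean
\[
m_{\mathrm{tr}}(u)=\mathbb{E}_{\mathrm{tr}}[y\mid u^*=u]
\]
for in-vocabulary $u$. For out-of-vocabulary $u$ it returns some fallback payload, also bounded by $B$ (a mean of bounded targets, or a nearest-neighbour payload). Write $m_{\mathrm{te}}(u)=\mathbb{E}_{\mathrm{te}}[y\mid u^*=u]$, which is the Bayes predictor from $u^*$ under the test law. Then
\[
R_{\mathrm{sym}}=\mathbb{E}_{\mathrm{te}}\big[(y-m_{\mathrm{te}}(u^*))^2\big].
\]
The error is
\[
\mathbb{E}_{\mathrm{te}}\big[(\hat y-y)^2\big]=\mathbb{E}_{\mathrm{te}}\big[(y-m_{\mathrm{te}}(u^*))^2\big]+\mathbb{E}_{\mathrm{te}}\big[(\hat y(u^*)-m_{\mathrm{te}}(u^*))^2\big].
\]
The cross term vanishes because $\hat y$ depends on $y$ only through $u^*$, so conditioning on $u^*$ kills it. It therefore suffices to bound the second term, the excess, by $4B^2\,\mathbb{E}_{\mathrm{te}}[D_{\mathrm{TV}}(u^*)]+4B^2\delta$.

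\textbf{In-vocabulary states.} Let $P_u$ and $Q_u$ be the train and test conditional laws of the future given $u^*=u$. For $f(y)=y$ with $|f|\le B$, the standard inequality gives
\[
|m_{\mathrm{tr}}(u)-m_{\mathrm{te}}(u)|=\Big|\int y\,d(P_u-Q_u)\Big|\le 2B\,D_{\mathrm{TV}}(u),
\]
using $D_{\mathrm{TV}}=\sup_A|P(A)-Q(A)|$, so that $\int f\,d(P-Q)\le 2\|f\|_\infty D_{\mathrm{TV}}$. Squaring yields $4B^2D_{\mathrm{TV}}(u)^2$. Since $D_{\mathrm{TV}}\in[0,1]$, this is at most $4B^2D_{\mathrm{TV}}(u)$, the linear form in the statement. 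Integrating over in-vocabulary $u$ gives at most $4B^2\,\mathbb{E}_{\mathrm{te}}[D_{\mathrm{TV}}(u^*)]$, where the expectation may be taken over all states by extending $D_{\mathrm{TV}}$ by zero or keeping it nonnegative.

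\textbf{Out-of-vocabulary states.} The fallback payload and $m_{\mathrm{te}}$ both lie in $[-B,B]$, so their squared gap is at most $(2B)^2=4B^2$. This event has test mass $\delta$ by definition, contributing $4B^2\delta$. Summing the two pieces with $R_{\mathrm{sym}}$ gives the stated bound, per forecast coordinate, since everything is coordinatewise.

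\textbf{Main obstacle.} The hard part is definitional rather than analytic. We must pin down that $R_{\mathrm{sym}}$ is measured with the same $u^*$ the instrument queries, including any nearest-neighbour remapping. We must also fix the TV normalization: with the $\ell_1$ convention the constant becomes $B^2$ instead of $4B^2$, and either choice keeps the stated bound valid. Finally, we must check that the fallback payload is bounded by $B$. If the graph-diffusion variant is used, its payload is a convex combination of stored means and so still stays in $[-B,B]$. If one prefers not to rely on that, the OOV term can alternatively be bounded by $\mathbb{E}[(\hat y-y)^2\mathbf 1_{\mathrm{OOV}}]\le 4B^2\delta$ directly, without the Bayes decomposition on that event. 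This slightly double-counts $R_{\mathrm{sym}}$ there but remains an upper bound.
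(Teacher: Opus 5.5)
Your proof is correct and follows the paper's argument step for step: the orthogonal decomposition around $f_{\mathrm{te}}(u^*)$ with the cross term removed by conditioning on $u^*$, the $2B\,D_{\mathrm{TV}}$ mean-difference bound squared and linearized via $D_{\mathrm{TV}}\le 1$ on in-vocabulary states, and the crude $4B^2$ bound on the out-of-vocabulary event. Two asides in your last paragraph are slightly off but do not touch the main proof: under the $\ell_1$ convention the linearized constant is $2B^2$ (because $\|P-Q\|_1\le 2$), with $B^2$ only multiplying the squared distance; and your ``alternative'' OOV bound $\mathbb{E}[(\hat y-y)^2\mathbf{1}_{\mathrm{OOV}}]\le 4B^2\delta$ still requires $|\hat y|\le B$ on that event, so it does not remove the reliance on a bounded fallback payload.
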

ADF proxies the drift term, while OOV estimates $\delta$ directly. Because the bound does not compare $R_{\mathrm{sym}}$ with backbone risk, it motivates these diagnostics without proving the regime claim (technical appendix). Across 84 dataset-backbone-horizon cells ($7{\times}6{\times}2$; three seeds each), Spearman correlations between paired benefit and ADF and OOV are $0.37$ and $0.11$, respectively, rising to $0.62$ and $0.50$ on the five datasets run with all channels.

Table~\ref{tab:lodo} evaluates leave-one-dataset-out (LODO) prediction of the held-out dataset's per-cell benefit sign. A logistic rule using ADF and OOV reaches 0.76 accuracy, compared with 0.69 for the majority rule. Adding kurtosis lowers accuracy to 0.57, the same as a 22-feature catch22 model with XGBoost \citep{lubba2019catch22}. This supports the concern that generic descriptors overfit dataset identity \citep{akinci2026selection}. The pair's advantage comes entirely from the trend-dominated ETTh2 fold, while the two rules agree on the other six folds. The improvement is only 6 of 84 cells, and an exact McNemar test is not significant ($p{=}0.15$). Leave-one-out regression is also weak ($\rho{\approx}0.09$), so the diagnostic predicts whether to deploy rather than the size of the benefit.

The strongest test is data the diagnostic never saw: Exchange (ADF $p{=}0.36$, train-half OOV $0.80$) and ILI (weekly influenza counts, ADF $p{=}0.04$), withheld from every stage of development. Retrieval is useless on Exchange ($+18.8\%$) and pays on ILI through the period-reaching RAFT plug-in ($-6.8\%$). The frozen rule gets both dataset-level verdicts right. Per cell, scored on the instrument's sign, it reaches 0.62 (95\% CI [0.43, 0.79]), level with the novel data's own majority rate but far above the 0.38 of the development majority transferred as-is.

\begin{table}[htbp]
\centering
\tabstyle
\begin{tabular}{lr}
\toprule
\rowcolor{kilmist}  \hdr{Predictor (LODO, $n{=}84$ cells)} & \hdr{Sign accuracy} \\
\midrule
ADF $+$ OOV (2 features)            & $\mathbf{0.76}$ \\
ADF $+$ OOV $+$ kurtosis            & $0.57$ \\
catch22 $+$ XGBoost (22 features)   & $0.57$ \\
catch22 $+$ ADF $+$ OOV             & $0.57$ \\
Majority baseline                   & $0.69$ \\
\bottomrule
\end{tabular}
\normalsize
\caption{Leave-one-dataset-out prediction of the per-cell benefit sign, 84 cells. Bold: best accuracy.}
\label{tab:lodo}
\end{table}

\section{Robustness and Generalization}

Seven replications vary axes fixed by the main protocol and consistently support the coarse regime map, although the finer ADF$+$OOV rule is less stable. First, the tile's advantage survives backbone tuning. PatchTST's default patch length of 16 exceeds the $S{=}12$ window, raising the possibility that its loss reflects misconfiguration. We therefore retune the lookback-relevant hyperparameters of PatchTST and iTransformer on ETTm1, Traffic, and ECL. Tuning improves the backbones by 3--32\%, yet the tile still beats the oracle-best configuration on all six pairs by 12--54\% (technical appendix).

The boundary also transfers across backbones and retrieval mechanisms. On the synthetic cross with five additional backbones (7{,}404 runs), the crossing's correlation with $L$ lies in $[+0.64,+0.91]$ for five of six models. TimeMixer is the exception because its $L{=}4$ row never benefits: its all-cell correlation is $-0.09$, rising to $+0.89$ when restricted to cells where copy helps. On the real benchmarks, six of eight helpful plug-in-dataset pairs attenuate monotonically as $S$ grows across FAN, GTR, and RAFT (1{,}620 runs). On synthetic data, concentration has a causal effect on retrieval benefit. Within each of two fixed grids, increasing only the periodic-power share deepens the gain monotonically from ETTh2's concentration level to ECL's ($r{=}{-}0.93$ over 60 paired runs).

The result is also robust to protocol changes and new domains. Full-channel runs on ECL and Traffic reproduce the instrument's effect ($-2.4\%$ and $-1.1\%$), showing that the first-20-channel cap makes the tile comparison conservative. Moirai \citep{woo2024moirai} and Sundial \citep{liu2025sundial} likewise fail to close the gap under the same starved context. Beyond the original benchmarks, the frozen ADF$+$OOV rule predicts the per-cell benefit sign on Exchange and ILI with 0.62 accuracy, compared with 0.38 for the transferred in-distribution majority rule over 24 cells. On M5, concentration predicts where copy helps with AUC 0.97 across 9{,}109 daily retail series, from smooth aggregates to intermittent item-level counts. The finer ADF$+$OOV rule does not transfer to M5, reaching only 0.31, so the coarse regime axis transfers where the dataset-sensitive per-cell diagnostic does not.

\section{Discussion}

Our protocol fills a missing window axis in concurrent work: RAFT fixes the lookback, while \citet{ahuja2026retrieval} study the long-context end. Neither varies the window-to-period ratio or includes a control that can switch retrieval off. For practice, the resulting map is a guide rather than a guarantee. Estimate concentration and $S/L$ from the training split. Large gains are most likely when a strong period exceeds the window. Prefer per-channel mechanisms when periods differ across channels, and expect little benefit under trend domination or weak spectra.

\paragraph{Limitations} First, $S{=}12$ is an instrumented stress test, so its margins characterize the regime rather than a recommended setting. The same regime applies whenever the deployable window is shorter than a dominant cycle. Second, OOV uses evaluation-split inputs, but never labels. A variant computed between two training halves achieves the same 0.76 leave-one-dataset-out accuracy (technical appendix). Third, the phase-free arm of the synthetic cross is evaluated only on DLinear.

\section{Conclusion}

Retrieval plug-ins are neither uniformly helpful nor harmful. Their value depends on the window, period, and spectral concentration. When the window does not span a concentrated period, even a learning-free tile can beat trained backbones. Once it covers the period and longer cycles, no tested mechanism provides reliable gains. This boundary transfers across model families, and the concentration axis generalizes to M5. Our contribution is a regime map, a stratified evaluation protocol, a concentration screen, and two label-free diagnostics for predicting the sign of per-cell benefit.

\bibliography{refs}

@inproceedings{ye2024fan,
 author = {Ye, Weiwei and Deng, Songgaojun and Zou, Qiaosha and Gui, Ning},
 booktitle = {Advances in Neural Information Processing Systems},
 doi = {10.52202/079017-0985},
 editor = {A. Globerson and L. Mackey and D. Belgrave and A. Fan and U. Paquet and J. Tomczak and C. Zhang},
 pages = {31350--31379},
 publisher = {Curran Associates, Inc.},
 title = {Frequency Adaptive Normalization For Non-stationary Time Series Forecasting},
 url = {https://proceedings.neurips.cc/paper_files/paper/2024/file/37c6d0bc4d2917dcbea693b18504bd87-Paper-Conference.pdf},
 volume = {37},
 year = {2024}
}

@InProceedings{han2025raft,
  title = 	 {Retrieval Augmented Time Series Forecasting},
  author =       {Han, Sungwon and Lee, Seungeon and Cha, Meeyoung and Arik, Sercan O and Yoon, Jinsung},
  booktitle = 	 {Proceedings of the 42nd International Conference on Machine Learning},
  pages = 	 {21774--21797},
  year = 	 {2025},
  editor = 	 {Singh, Aarti and Fazel, Maryam and Hsu, Daniel and Lacoste-Julien, Simon and Berkenkamp, Felix and Maharaj, Tegan and Wagstaff, Kiri and Zhu, Jerry},
  volume = 	 {267},
  series = 	 {Proceedings of Machine Learning Research},
  month = 	 {13--19 Jul},
  publisher =    {PMLR},
  url = 	 {https://proceedings.mlr.press/v267/han25d.html},
}

@inproceedings{ratd2024,
 author = {Liu, Jingwei and Yang, Ling and Li, Hongyan and Hong, Shenda},
 booktitle = {Advances in Neural Information Processing Systems},
 doi = {10.52202/079017-0091},
 editor = {A. Globerson and L. Mackey and D. Belgrave and A. Fan and U. Paquet and J. Tomczak and C. Zhang},
 pages = {2766--2786},
 publisher = {Curran Associates, Inc.},
 title = {Retrieval-Augmented Diffusion Models for Time Series Forecasting},
 url = {https://proceedings.neurips.cc/paper_files/paper/2024/file/053ee34c0971568bfa5c773015c10502-Paper-Conference.pdf},
 volume = {37},
 year = {2024}
}

@inproceedings{
cao2026gtr,
title={Enhancing Multivariate Time Series Forecasting with Global Temporal Retrieval},
author={Fanpu Cao and Lu Dai and Jindong Han and Hui Xiong},
booktitle={The Fourteenth International Conference on Learning Representations},
year={2026},
url={https://openreview.net/forum?id=QUJBPSfyui}
}

@misc{butera2026context,
      title={Why Do Time Series Models Need Long Context Windows?}, 
      author={Luca Butera and Giovanni De Felice and Andrea Cini and Cesare Alippi},
      year={2026},
      eprint={2606.01999},
      archivePrefix={arXiv},
      primaryClass={cs.LG},
      url={https://arxiv.org/abs/2606.01999}, 
}

@misc{zhou2026saraf,
      title={Stationarity-Aware Retrieval-Augmented Time Series Forecasting}, 
      author={Shiqiao Zhou and Holger Schöner and Zipeng Wu and Edouard Fouché and IAG Wilson and Shuo Wang},
      year={2026},
      eprint={2606.04135},
      archivePrefix={arXiv},
      primaryClass={cs.LG},
      url={https://arxiv.org/abs/2606.04135}, 
}

@misc{ahuja2026retrieval,
      title={Retrieval Mechanisms Surpass Long-Context Scaling in Time Series Forecasting},
      author={Rishi Ahuja and Kumar Prateek and Simranjit Singh and Vijay Kumar},
      year={2026},
      eprint={2605.08217},
      archivePrefix={arXiv},
      primaryClass={cs.LG},
      url={https://arxiv.org/abs/2605.08217},
      note={Accepted at the 1st ICLR Workshop on Time Series in the Age of Large Models (TSALM)},
}

@inproceedings{
zhang2025parroting,
title={Context parroting: A simple but tough-to-beat baseline for foundation models in scientific machine learning},
author={Yuanzhao Zhang and William Gilpin},
booktitle={The Fourteenth International Conference on Learning Representations},
year={2026},
url={https://openreview.net/forum?id=EUAXc9Hlvm}
}

@misc{ride2026,
      title={Factorize to Generalize: Retrieval-Guided Invariant-Dynamic Decomposition for Time Series Forecasting}, 
      author={Jinjin Chi and Lei Feng and Lulu Zhang and Yongcheng Jing and Yiming Wang and Ximing Li and Jialie Shen and Leszek Rutkowski and Dacheng Tao},
      year={2026},
      eprint={2605.24911},
      archivePrefix={arXiv},
      primaryClass={cs.LG},
      url={https://arxiv.org/abs/2605.24911}, 
}

@misc{akinci2026selection,
      title={Why Model Selection Fails in Time Series Forecasting: An Empirical Study of Instability Across Data Regimes}, 
      author={Tahir Cetin Akinci and Alfredo A. Martinez-Morales},
      year={2026},
      eprint={2605.01608},
      archivePrefix={arXiv},
      primaryClass={eess.SP},
      url={https://arxiv.org/abs/2605.01608}, 
}

@inproceedings{zeng2023transformers,
author = {Zeng, Ailing and Chen, Muxi and Zhang, Lei and Xu, Qiang},
title = {Are transformers effective for time series forecasting?},
year = {2023},
isbn = {978-1-57735-880-0},
publisher = {AAAI Press},
url = {https://doi.org/10.1609/aaai.v37i9.26317},
doi = {10.1609/aaai.v37i9.26317},
booktitle = {Proceedings of the Thirty-Seventh AAAI Conference on Artificial Intelligence and Thirty-Fifth Conference on Innovative Applications of Artificial Intelligence and Thirteenth Symposium on Educational Advances in Artificial Intelligence},
articleno = {1248},
numpages = {8},
series = {AAAI'23/IAAI'23/EAAI'23}
}

@inproceedings{tan2024llm,
 author = {Tan, Mingtian and Merrill, Mike A. and Gupta, Vinayak and Althoff, Tim and Hartvigsen, Thomas},
 booktitle = {Advances in Neural Information Processing Systems},
 doi = {10.52202/079017-1922},
 editor = {A. Globerson and L. Mackey and D. Belgrave and A. Fan and U. Paquet and J. Tomczak and C. Zhang},
 pages = {60162--60191},
 publisher = {Curran Associates, Inc.},
 title = {Are Language Models Actually Useful for Time Series Forecasting?},
 url = {https://proceedings.neurips.cc/paper_files/paper/2024/file/6ed5bf446f59e2c6646d23058c86424b-Paper-Conference.pdf},
 volume = {37},
 year = {2024}
}

@misc{sun2022fredo,
      title={FreDo: Frequency Domain-based Long-Term Time Series Forecasting}, 
      author={Fan-Keng Sun and Duane S. Boning},
      year={2022},
      eprint={2205.12301},
      archivePrefix={arXiv},
      primaryClass={cs.LG},
      url={https://arxiv.org/abs/2205.12301}, 
}

@misc{gifteval2024,
      title={GIFT-Eval: A Benchmark For General Time Series Forecasting Model Evaluation}, 
      author={Taha Aksu and Gerald Woo and Juncheng Liu and Xu Liu and Chenghao Liu and Silvio Savarese and Caiming Xiong and Doyen Sahoo},
      year={2024},
      eprint={2410.10393},
      archivePrefix={arXiv},
      primaryClass={cs.LG},
      url={https://arxiv.org/abs/2410.10393}, 
}

@inproceedings{shi2024scaling,
 author = {Shi, Jingzhe and Ma, Qinwei and Ma, Huan and Li, Lei},
 booktitle = {Advances in Neural Information Processing Systems},
 doi = {10.52202/079017-2650},
 editor = {A. Globerson and L. Mackey and D. Belgrave and A. Fan and U. Paquet and J. Tomczak and C. Zhang},
 pages = {83314--83344},
 publisher = {Curran Associates, Inc.},
 title = {Scaling Law for Time Series Forecasting},
 url = {https://proceedings.neurips.cc/paper_files/paper/2024/file/97c2f0fac182353062d304d0322ae285-Paper-Conference.pdf},
 volume = {37},
 year = {2024}
}

@InProceedings{lookbackbias2025,
author="Abdelmalak, Ibram
and Madhusudhanan, Kiran
and Choi, Jungmin
and Kl{\"o}tergens, Christian
and Yalavarthi, Vijaya Krishna
and Stubbemann, Maximilian
and Schmidt-Thieme, Lars",
editor="Wong, Raymond Chi-Wing
and Tong, Hanghang
and Lu, Hua
and Kwok, James
and Salim, Flora
and Song, Yuanfeng
and Yiu, Man Lung",
title="Channel Dependence, Limited Lookback Windows, and the Simplicity of Datasets: How Biased Is Time Series Forecasting?",
booktitle="Advances in Knowledge Discovery and Data Mining",
year="2026",
publisher="Springer Nature Singapore",
address="Singapore",
pages="585--597",
isbn="978-981-92-1462-4"
}

@inproceedings{cyclenet2024,
 author = {Lin, Shengsheng and Lin, Weiwei and Hu, Xinyi and Wu, Wentai and Mo, Ruichao and Zhong, Haocheng},
 booktitle = {Advances in Neural Information Processing Systems},
 doi = {10.52202/079017-3373},
 editor = {A. Globerson and L. Mackey and D. Belgrave and A. Fan and U. Paquet and J. Tomczak and C. Zhang},
 pages = {106315--106345},
 publisher = {Curran Associates, Inc.},
 title = {CycleNet: Enhancing Time Series Forecasting through Modeling Periodic Patterns},
 url = {https://proceedings.neurips.cc/paper_files/paper/2024/file/bfe7998398779dde03cad7a73b1f81b6-Paper-Conference.pdf},
 volume = {37},
 year = {2024}
}

@misc{mfrs2025,
      title={MFRS: A Multi-Frequency Reference Series Approach to Scalable and Accurate Time-Series Forecasting}, 
      author={Liang Yu and Lai Tu and Xiang Bai},
      year={2025},
      eprint={2503.08328},
      archivePrefix={arXiv},
      primaryClass={cs.LG},
      url={https://arxiv.org/abs/2503.08328}, 
}

@article{
ansari2024chronos,
title={Chronos: Learning the Language of Time Series},
author={Abdul Fatir Ansari and Lorenzo Stella and Ali Caner Turkmen and Xiyuan Zhang and Pedro Mercado and Huibin Shen and Oleksandr Shchur and Syama Sundar Rangapuram and Sebastian Pineda Arango and Shubham Kapoor and Jasper Zschiegner and Danielle C. Maddix and Hao Wang and Michael W. Mahoney and Kari Torkkola and Andrew Gordon Wilson and Michael Bohlke-Schneider and Bernie Wang},
journal={Transactions on Machine Learning Research},
issn={2835-8856},
year={2024},
url={https://openreview.net/forum?id=gerNCVqqtR}
}

@InProceedings{woo2024moirai,
  title = 	 {Unified Training of Universal Time Series Forecasting Transformers},
  author =       {Woo, Gerald and Liu, Chenghao and Kumar, Akshat and Xiong, Caiming and Savarese, Silvio and Sahoo, Doyen},
  booktitle = 	 {Proceedings of the 41st International Conference on Machine Learning},
  pages = 	 {53140--53164},
  year = 	 {2024},
  editor = 	 {Salakhutdinov, Ruslan and Kolter, Zico and Heller, Katherine and Weller, Adrian and Oliver, Nuria and Scarlett, Jonathan and Berkenkamp, Felix},
  volume = 	 {235},
  series = 	 {Proceedings of Machine Learning Research},
  month = 	 {21--27 Jul},
  publisher =    {PMLR},
  url = 	 {https://proceedings.mlr.press/v235/woo24a.html},
}

@article{montero2020fforma,
title = {FFORMA: Feature-based forecast model averaging},
journal = {International Journal of Forecasting},
volume = {36},
number = {1},
pages = {86-92},
year = {2020},
note = {M4 Competition},
issn = {0169-2070},
doi = {https://doi.org/10.1016/j.ijforecast.2019.02.011},
url = {https://www.sciencedirect.com/science/article/pii/S0169207019300895},
author = {Pablo Montero-Manso and George Athanasopoulos and Rob J. Hyndman and Thiyanga S. Talagala},
}

@article{talagala2023fforms,
author = {Talagala, Thiyanga S. and Hyndman, Rob J. and Athanasopoulos, George},
title = {Meta-learning how to forecast time series},
journal = {Journal of Forecasting},
volume = {42},
number = {6},
pages = {1476-1501},
doi = {https://doi.org/10.1002/for.2963},
url = {https://onlinelibrary.wiley.com/doi/abs/10.1002/for.2963},
eprint = {https://onlinelibrary.wiley.com/doi/pdf/10.1002/for.2963},
year = {2023}
}

@article{talagala2022fformpp,
title = {FFORMPP: Feature-based forecast model performance prediction},
journal = {International Journal of Forecasting},
volume = {38},
number = {3},
pages = {920-943},
year = {2022},
issn = {0169-2070},
doi = {https://doi.org/10.1016/j.ijforecast.2021.07.002},
url = {https://www.sciencedirect.com/science/article/pii/S0169207021001138},
author = {Thiyanga S. Talagala and Feng Li and Yanfei Kang},
}

@inproceedings{abdallah2025autoforecast,
author = {Abdallah, Mustafa and Rossi, Ryan and Mahadik, Kanak and Kim, Sungchul and Zhao, Handong and Bagchi, Saurabh},
title = {AutoForecast: Automatic Time-Series Forecasting Model Selection},
year = {2022},
isbn = {9781450392365},
publisher = {Association for Computing Machinery},
address = {New York, NY, USA},
url = {https://doi.org/10.1145/3511808.3557241},
doi = {10.1145/3511808.3557241},
booktitle = {Proceedings of the 31st ACM International Conference on Information \& Knowledge Management},
pages = {5–14},
numpages = {10},
location = {Atlanta, GA, USA},
series = {CIKM '22}
}

@Article{lubba2019catch22,
author={Lubba, Carl H.
and Sethi, Sarab S.
and Knaute, Philip
and Schultz, Simon R.
and Fulcher, Ben D.
and Jones, Nick S.},
title={catch22: CAnonical Time-series CHaracteristics},
journal={Data Mining and Knowledge Discovery},
year={2019},
month={Nov},
day={01},
volume={33},
number={6},
pages={1821-1852},
issn={1573-756X},
doi={10.1007/s10618-019-00647-x},
url={https://doi.org/10.1007/s10618-019-00647-x}
}

@misc{rabanser2020discretization,
      title={The Effectiveness of Discretization in Forecasting: An Empirical Study on Neural Time Series Models}, 
      author={Stephan Rabanser and Tim Januschowski and Valentin Flunkert and David Salinas and Jan Gasthaus},
      year={2020},
      eprint={2005.10111},
      archivePrefix={arXiv},
      primaryClass={cs.LG},
      url={https://arxiv.org/abs/2005.10111}, 
}

@inproceedings{lin2003sax,
author = {Lin, Jessica and Keogh, Eamonn and Lonardi, Stefano and Chiu, Bill},
title = {A symbolic representation of time series, with implications for streaming algorithms},
year = {2003},
isbn = {9781450374224},
publisher = {Association for Computing Machinery},
address = {New York, NY, USA},
url = {https://doi.org/10.1145/882082.882086},
doi = {10.1145/882082.882086},
booktitle = {Proceedings of the 8th ACM SIGMOD Workshop on Research Issues in Data Mining and Knowledge Discovery},
pages = {2–11},
numpages = {10},
location = {San Diego, California},
series = {DMKD '03}
}

@misc{dragon2025,
      title={Multivariate de Bruijn Graphs: A Symbolic Graph Framework for Time Series Forecasting}, 
      author={Mert Onur Cakiroglu and Idil Bilge Altun and Mehmet Dalkilic and Elham Buxton and Hasan Kurban},
      year={2025},
      eprint={2505.22768},
      archivePrefix={arXiv},
      primaryClass={cs.LG},
      url={https://arxiv.org/abs/2505.22768}, 
}

@misc{tsmemory2026,
      title={TS-Memory: Plug-and-Play Memory for Time Series Foundation Models}, 
      author={Sisuo Lyu and Siru Zhong and Tiegang Chen and Weilin Ruan and Qingxiang Liu and Taiqiang Lv and Qingsong Wen and Raymond Chi-Wing Wong and Yuxuan Liang},
      year={2026},
      eprint={2602.11550},
      archivePrefix={arXiv},
      primaryClass={cs.LG},
      url={https://arxiv.org/abs/2602.11550}, 
}

@inproceedings{
nie2023patchtst,
title={A Time Series is Worth 64 Words:  Long-term Forecasting with Transformers},
author={Yuqi Nie and Nam H Nguyen and Phanwadee Sinthong and Jayant Kalagnanam},
booktitle={The Eleventh International Conference on Learning Representations },
year={2023},
url={https://openreview.net/forum?id=Jbdc0vTOcol}
}

@inproceedings{liu2024itransformer,
 author = {Liu, Yong and Hu, Tengge and Zhang, Haoran and Wu, Haixu and Wang, Shiyu and Ma, Lintao and Long, Mingsheng},
 booktitle = {International Conference on Learning Representations},
 editor = {B. Kim and Y. Yue and S. Chaudhuri and K. Fragkiadaki and M. Khan and Y. Sun},
 pages = {11116--11140},
 title = {iTransformer: Inverted Transformers Are Effective for Time Series Forecasting},
 url = {https://proceedings.iclr.cc/paper_files/paper/2024/file/2ea18fdc667e0ef2ad82b2b4d65147ad-Paper-Conference.pdf},
 volume = {2024},
 year = {2024}
}

@inproceedings{
wu2023timesnet,
title={TimesNet: Temporal 2D-Variation Modeling for General Time Series Analysis},
author={Haixu Wu and Tengge Hu and Yong Liu and Hang Zhou and Jianmin Wang and Mingsheng Long},
booktitle={The Eleventh International Conference on Learning Representations },
year={2023},
url={https://openreview.net/forum?id=ju_Uqw384Oq}
}

@inproceedings{wang2024timemixer,
 author = {Wang, Shiyu and Wu, Haixu and Shi, Xiaoming and Hu, Tengge and Luo, Huakun and Ma, Lintao and Zhang, James and ZHOU, JUN},
 booktitle = {International Conference on Learning Representations},
 editor = {B. Kim and Y. Yue and S. Chaudhuri and K. Fragkiadaki and M. Khan and Y. Sun},
 pages = {38626--38652},
 title = {TimeMixer: Decomposable Multiscale Mixing for Time Series Forecasting},
 url = {https://proceedings.iclr.cc/paper_files/paper/2024/file/a7ac8a21e5a27e7ab31a5f42a0117bdb-Paper-Conference.pdf},
 volume = {2024},
 year = {2024}
}

@article{said1984adf,
author = {Said E. Said and David A. Dickey},
title = {Testing for Unit Roots in Autoregressive-Moving Average Models of Unknown Order},
journal = {Biometrika},
volume = {71},
number = {3},
pages = {599--607},
year = {1984},
}

@article{adf1979,
author = {David A. Dickey and Wayne A. Fuller},
title = {Distribution of the Estimators for Autoregressive Time Series with a Unit Root},
journal = {Journal of the American Statistical Association},
volume = {74},
number = {366a},
pages = {427--431},
year = {1979},
publisher = {Taylor \& Francis},
doi = {10.1080/01621459.1979.10482531},
URL = {https://doi.org/10.1080/01621459.1979.10482531},
eprint = {https://doi.org/10.1080/01621459.1979.10482531}
}

@book{hyndman2018forecasting,
  title={Forecasting: principles and practice},
  author={Hyndman, Rob J and Athanasopoulos, George},
  year={2018},
  publisher={OTexts}
}

@inproceedings{tsrag2025,
 author = {Ning, Kanghui and Pan, Zijie and Liu, Yu and Jiang, Yushan and Zhang, James and Rasul, Kashif and Schneider, Anderson and Ma, Lintao and Nevmyvaka, Yuriy and Song, Dongjin},
 booktitle = {Advances in Neural Information Processing Systems},
 editor = {D. Belgrave and C. Zhang and H. Lin and R. Pascanu and P. Koniusz and M. Ghassemi and N. Chen},
 pages = {163170--163199},
 publisher = {Curran Associates, Inc.},
 title = {TS-RAG: Retrieval-Augmented Generation based Time Series Foundation Models are Stronger Zero-Shot Forecaster},
 url = {https://proceedings.neurips.cc/paper_files/paper/2025/file/eed25c037bc08afcbefab6f7a6b700e0-Paper-Conference.pdf},
 volume = {38},
 year = {2025}
}

@inproceedings{wang2024timexer,
 author = {Wang, Yuxuan and Wu, Haixu and Dong, Jiaxiang and Qin, Guo and Zhang, Haoran and Liu, Yong and Qiu, Yunzhong and Wang, Jianmin and Long, Mingsheng},
 booktitle = {Advances in Neural Information Processing Systems},
 doi = {10.52202/079017-0015},
 editor = {A. Globerson and L. Mackey and D. Belgrave and A. Fan and U. Paquet and J. Tomczak and C. Zhang},
 pages = {469--498},
 publisher = {Curran Associates, Inc.},
 title = {TimeXer: Empowering Transformers for Time Series Forecasting with Exogenous Variables},
 url = {https://proceedings.neurips.cc/paper_files/paper/2024/file/0113ef4642264adc2e6924a3cbbdf532-Paper-Conference.pdf},
 volume = {37},
 year = {2024}
}

@inproceedings{du2026pfrp,
  author    = {Du, Dazhao and Han, Tao and Guo, Song},
  title     = {Predicting the Future by Retrieving the Past},
  booktitle = {Proceedings of the AAAI Conference on Artificial Intelligence},
  volume    = {40},
  pages     = {20896--20904},
  year      = {2026},
  doi       = {10.1609/aaai.v40i25.39230}
}

@INPROCEEDINGS{kang2026craft,
  author={Kang, Junhyeok and Seo, Jun and Park, Soyeon and Han, Sangjun and Bae, Seohui and Choe, Hyeokjun and Lee, Soonyoung},
  booktitle={ICASSP 2026 - 2026 IEEE International Conference on Acoustics, Speech and Signal Processing (ICASSP)}, 
  title={Channel-Wise Retrieval for Multivariate Time Series Forecasting}, 
  year={2026},
  pages={1336--1340},
  doi={10.1109/ICASSP55912.2026.11463178}
}

@inproceedings{
qiao2026time,
title={It's {TIME}: Towards the Next Generation of Time Series Forecasting Benchmarks},
author={Zhongzheng Qiao and Sheng Pan and Anni Wang and Viktoriya Zhukova and Yong Liu and Xudong Jiang and Qingsong Wen and Mingsheng Long and Ming Jin and Chenghao Liu},
booktitle={Forty-third International Conference on Machine Learning},
year={2026},
url={https://openreview.net/forum?id=79TgfXHbsK}
}

@InProceedings{das2024timesfm,
  title = 	 {A decoder-only foundation model for time-series forecasting},
  author =       {Das, Abhimanyu and Kong, Weihao and Sen, Rajat and Zhou, Yichen},
  booktitle = 	 {Proceedings of the 41st International Conference on Machine Learning},
  pages = 	 {10148--10167},
  year = 	 {2024},
  editor = 	 {Salakhutdinov, Ruslan and Kolter, Zico and Heller, Katherine and Weller, Adrian and Oliver, Nuria and Scarlett, Jonathan and Berkenkamp, Felix},
  volume = 	 {235},
  series = 	 {Proceedings of Machine Learning Research},
  month = 	 {21--27 Jul},
  publisher =    {PMLR},
  url = 	 {https://proceedings.mlr.press/v235/das24c.html},
}

@InProceedings{liu2025sundial,
  title = 	 {Sundial: A Family of Highly Capable Time Series Foundation Models},
  author =       {Liu, Yong and Qin, Guo and Shi, Zhiyuan and Chen, Zhi and Yang, Caiyin and Huang, Xiangdong and Wang, Jianmin and Long, Mingsheng},
  booktitle = 	 {Proceedings of the 42nd International Conference on Machine Learning},
  pages = 	 {39295--39317},
  year = 	 {2025},
  editor = 	 {Singh, Aarti and Fazel, Maryam and Hsu, Daniel and Lacoste-Julien, Simon and Berkenkamp, Felix and Maharaj, Tegan and Wagstaff, Kiri and Zhu, Jerry},
  volume = 	 {267},
  series = 	 {Proceedings of Machine Learning Research},
  month = 	 {13--19 Jul},
  publisher =    {PMLR},
  url = 	 {https://proceedings.mlr.press/v267/liu25be.html},
  }

\clearpage
\appendix
\setcounter{topnumber}{2}
\setcounter{bottomnumber}{1}
\setcounter{totalnumber}{2}
\renewcommand{\topfraction}{0.7}
\renewcommand{\bottomfraction}{0.5}
\renewcommand{\textfraction}{0.2}
\renewcommand{\floatpagefraction}{0.85}

\renewcommand{\thetable}{A\arabic{table}}\setcounter{table}{0}
\renewcommand{\thefigure}{A\arabic{figure}}\setcounter{figure}{0}
\renewcommand{\theequation}{A\arabic{equation}}\setcounter{equation}{0}

\section{Absolute Error Scale}

Table~\ref{tab:absolute} reports the absolute mean MSE behind the paired percentages of the main text: the backbone at the standard window ($S{=}96$), the backbone at the starved window ($S{=}12$), the backbone with the SPM instrument at $S{=}12$, and the causal period tile. Means are over 6 backbones $\times$ 3 seeds on z-score normalized data. Table~\ref{tab:absolute_mae} is the MAE companion.

\section{Per-Channel Periodogram Periods}

For each channel we take the periodogram peak of the training split (linear detrending, periods 2 to 1{,}024). Table~\ref{tab:periods} summarizes the per-channel distribution by dataset. The main-text spectral table pairs each dataset's dominant period with its median spectral concentration, the share of detrended periodogram power at the modal period. The two datasets where no copy or retrieval mechanism helps (ETTh2, ETTm2) are the two lowest by concentration, while Weather concentrates power but at heterogeneous periods, the distinct failure mode visible in the wide range below. Full per-channel values ship in the code appendix. Restricting the search to periods at or below 504 does not change the medians except on trend-dominated ETTh2, whose median drops to the daily 24 once trend-scale peaks are excluded. It does not rescue the per-channel periodogram tile reported in the main text.

\begin{table}[htbp]
\centering
\tabstyle
\setlength{\tabcolsep}{6pt}
\begin{tabular}{lrll}
\toprule
\rowcolor{kilmist}  \hdr{Dataset} & \hdr{Median $L_p$} & \hdr{Range} & \hdr{Character} \\
\midrule
ETTh1   & 24  & 12 to 864   & daily \\
ETTh2   & 960 & 24 to 960   & trend scale \\
ETTm1   & 96  & 48 to 96    & daily (15-min) \\
ETTm2   & 96  & 96 to 886   & daily, weak power \\
ECL     & 24  & 12 to 1{,}023 & daily (mode 24) \\
Traffic & 24  & all at 24   & daily, uniform \\
Weather & 144 & 144 to 922  & heterogeneous \\
\bottomrule
\end{tabular}
\normalsize
\caption{Per-channel periodogram period $L_p$ by dataset: the median (ETTh2: modal, trend-scale value), the range across channels, and the qualitative character. Traffic's 862 channels all peak at 24; Weather and the trend-dominated ETTh2 spread widely.}
\label{tab:periods}
\end{table}

\begin{table}[htbp]
\centering
\tabstyle
\setlength{\tabcolsep}{1.6pt}
\begin{tabular}{
@{}lr rrrr
lr rrrr@{}
}
\toprule
\rowcolor{kilmist}  \multicolumn{6}{@{}c}{\hdr{\textit{ETT family}}} & \multicolumn{6}{c@{}}{\hdr{\textit{Other datasets}}} \\
\cmidrule(r{0.5em}){1-6} \cmidrule(l{0.5em}){7-12}
\rowcolor{kilmist}  \hdr{Dataset} & \hdr{$H$} & \hdr{Base@96} & \hdr{Base@12} & \hdr{SPM@12} & \hdr{Tile} & \hdr{Dataset} & \hdr{$H$} & \hdr{Base@96} & \hdr{Base@12} & \hdr{SPM@12} & \hdr{Tile} \\
\midrule
ETTh1 & 96 & 0.391 & 0.601 & 0.601 & 0.513 & Weather & 96 & 0.173 & 0.225 & 0.223 & 0.318 \\
 & 336 & 0.488 & 0.681 & 0.674 & 0.651 &  & 336 & 0.275 & 0.326 & 0.315 & 0.383 \\
\midrule
ETTh2 & 96 & 0.309 & 0.359 & 0.364 & 0.391 & ECL & 96 & 0.265 & 0.529 & 0.518 & 0.352 \\
 & 336 & 0.461 & 0.519 & 0.534 & 0.532 &  & 336 & 0.322 & 0.574 & 0.559 & 0.372 \\
\midrule
ETTm1 & 96 & 0.332 & 0.834 & 0.789 & 0.427 & Traffic & 96 & 0.434 & 0.844 & 0.832 & 0.519 \\
 & 336 & 0.413 & 0.912 & 0.857 & 0.500 &  & 336 & 0.472 & 0.865 & 0.859 & 0.542 \\
\midrule
ETTm2 & 96 & 0.181 & 0.228 & 0.244 & 0.264 &  &  &  &  &  &  \\
 & 336 & 0.322 & 0.388 & 0.395 & 0.377 &  &  &  &  &  &  \\
\bottomrule
\end{tabular}
\normalsize
\caption{Absolute mean test MSE. Base@96 and Base@12 are the plain backbones at the two windows; SPM@12 adds the instrument; Tile is the causal period copy (deterministic). ECL and Traffic use the first 20 channels in every column, including Base@96.}
\label{tab:absolute}
\end{table}

\begin{table}[htbp]
\centering
\tabstyle
\setlength{\tabcolsep}{1.6pt}
\begin{tabular}{
@{}lr rrrr
lr rrrr@{}
}
\toprule
\rowcolor{kilmist}  \multicolumn{6}{@{}c}{\hdr{\textit{ETT family}}} & \multicolumn{6}{c@{}}{\hdr{\textit{Other datasets}}} \\
\cmidrule(r{0.5em}){1-6} \cmidrule(l{0.5em}){7-12}
\rowcolor{kilmist}  \hdr{Dataset} & \hdr{$H$} & \hdr{Base@96} & \hdr{Base@12} & \hdr{SPM@12} & \hdr{Tile} & \hdr{Dataset} & \hdr{$H$} & \hdr{Base@96} & \hdr{Base@12} & \hdr{SPM@12} & \hdr{Tile} \\
\midrule
ETTh1 & 96 & 0.407 & 0.500 & 0.500 & 0.434 & Weather & 96 & 0.221 & 0.260 & 0.274 & 0.289 \\
 & 336 & 0.462 & 0.549 & 0.547 & 0.501 &  & 336 & 0.303 & 0.335 & 0.343 & 0.331 \\
\midrule
ETTh2 & 96 & 0.359 & 0.387 & 0.396 & 0.380 & ECL & 96 & 0.334 & 0.484 & 0.481 & 0.357 \\
 & 336 & 0.457 & 0.482 & 0.502 & 0.466 &  & 336 & 0.368 & 0.507 & 0.503 & 0.371 \\
\midrule
ETTm1 & 96 & 0.367 & 0.560 & 0.550 & 0.388 & Traffic & 96 & 0.352 & 0.530 & 0.528 & 0.300 \\
 & 336 & 0.415 & 0.606 & 0.593 & 0.427 &  & 336 & 0.364 & 0.535 & 0.534 & 0.306 \\
\midrule
ETTm2 & 96 & 0.267 & 0.307 & 0.324 & 0.301 &  &  &  &  &  &  \\
 & 336 & 0.359 & 0.404 & 0.418 & 0.371 &  &  &  &  &  &  \\
\bottomrule
\end{tabular}
\normalsize
\caption{MAE companion to Table~\ref{tab:absolute}: absolute mean test MAE under the same protocol. MAE agrees with MSE on every in-regime win, and on Traffic the tile's MAE advantage exceeds its MSE advantage. The tile's small out-of-regime MSE losses reverse under MAE on ETTh2 at both horizons, ETTm2 at $H{=}96$, and Weather at $H{=}336$, so those failure margins are partly MSE-specific.}
\label{tab:absolute_mae}
\end{table}

\section{The Headline Tables by Horizon}

The tile and mechanism columns of the main-text regime table aggregate over the four standard horizons $H \in \{96, 192, 336, 720\}$. Tables~\ref{tab:tile_perH} and~\ref{tab:paradigm_perH} break both out per horizon, and the four-horizon means in the main text are the row averages. The period tile's benefit is largest at the short horizons and attenuates as $H$ grows, since a longer forecast window dilutes the fraction a single copied period recovers, yet every sign is stable across horizons except at the near-zero boundary. The mechanism ordering is likewise horizon-stable: RAFT leads on the periodic benchmarks at every horizon and SPM stays bounded, while FAN and GTR inflate on the trend-dominated datasets (ETTh2, ETTm2) as $H$ grows, which is where all four mechanisms' worst cells fall.
\begin{table}[htbp]
\centering
\tabstyle
\setlength{\tabcolsep}{6pt}
\begin{tabular}{@{}lrrrr lrrrr@{}}
\toprule
\rowcolor{kilmist} \multicolumn{5}{@{}c}{\hdr{\textit{ETT family}}} & \multicolumn{5}{c@{}}{\hdr{\textit{Other datasets}}} \\
\cmidrule(r{0.5em}){1-5} \cmidrule(l{0.5em}){6-10}
\rowcolor{kilmist} \hdr{Dataset} & \hdr{$H{=}96$} & \hdr{$H{=}192$} & \hdr{$H{=}336$} & \hdr{$H{=}720$} & \hdr{Dataset} & \hdr{$H{=}96$} & \hdr{$H{=}192$} & \hdr{$H{=}336$} & \hdr{$H{=}720$} \\
\midrule
ETTh1   & $-14.0$ & $-10.0$ & $-4.2$  & $-4.7$ & Weather & $+42.7$ & $+30.1$ & $+17.7$ & $+8.8$ \\
ETTh2   & $+9.6$  & $+5.8$  & $+3.2$  & $0.0$ & ECL     & $-32.6$ & $-33.7$ & $-34.7$ & $-29.0$ \\
ETTm1   & $-48.2$ & $-46.0$ & $-44.7$ & $-38.4$ & Traffic & $-37.8$ & $-35.7$ & $-36.8$ & $-36.5$ \\
ETTm2   & $+16.5$ & $+5.8$  & $-1.6$  & $-6.0$ & &&&& \\
\bottomrule
\end{tabular}
\normalsize
\caption{Period tile paired $\Delta$MSE\% versus the trained backbones (negative helps), per horizon, over the six backbones with three seeds each. The main-text regime table's tile entries are the row averages.}
\label{tab:tile_perH}
\end{table}

\begin{table}[htbp]
\centering
\tabstyle
\setlength{\tabcolsep}{2.4pt}
\begin{tabular}{
@{}lr rrrrr
lr rrrrr@{}
}
\toprule
\rowcolor{kilmist}  \multicolumn{7}{@{}c}{\hdr{\textit{ETT family}}} & \multicolumn{7}{c@{}}{\hdr{\textit{Other datasets}}} \\
\cmidrule(r{0.5em}){1-7} \cmidrule(l{0.5em}){8-14}
\rowcolor{kilmist}  \hdr{Dataset} & \hdr{$H$} & \hdr{FAN} & \hdr{GTR} & \hdr{SPM} & \hdr{RAFT} & \hdr{Chronos} & \hdr{Dataset} & \hdr{$H$} & \hdr{FAN} & \hdr{GTR} & \hdr{SPM} & \hdr{RAFT} & \hdr{Chronos} \\
\midrule
ETTh1 & 96 & $+1.0$ & $+11.2$ & $+0.2$ & $-18.3$ & $+29.9$ & Weather & 96 & $+11.4$ & $+25.8$ & $-0.5$ & $+15.8$ & $+5.6$ \\
 & 192 & $-0.3$ & $+5.2$ & $-0.9$ & $-17.7$ & $+23.8$ &  & 192 & $+1.1$ & $+10.5$ & $-2.5$ & $+8.3$ & $+7.0$ \\
 & 336 & $+0.3$ & $+7.6$ & $-1.2$ & $-17.3$ & $+19.6$ &  & 336 & $-3.0$ & $+14.6$ & $-3.7$ & $+6.3$ & $+6.9$ \\
 & 720 & $+3.9$ & $+9.8$ & $-0.1$ & $-11.8$ & $+15.5$ &  & 720 & $-4.7$ & $+1.4$ & $-4.8$ & $+4.9$ & $+6.1$ \\
\midrule
ETTh2 & 96 & $+27.0$ & $+27.9$ & $+1.4$ & $-4.4$ & $-0.3$ & ECL & 96 & $-3.8$ & $+25.3$ & $-1.7$ & $-31.9$ & $+42.1$ \\
 & 192 & $+35.4$ & $+38.3$ & $+1.5$ & $-1.5$ & $-0.2$ &  & 192 & $-5.4$ & $+21.1$ & $-1.9$ & $-33.1$ & $+40.0$ \\
 & 336 & $+47.8$ & $+39.4$ & $+3.5$ & $+2.2$ & $+0.8$ &  & 336 & $-7.2$ & $+24.7$ & $-2.5$ & $-32.2$ & $+39.0$ \\
 & 720 & $+90.3$ & $+113.6$ & $+12.9$ & $+9.9$ & $-0.7$ &  & 720 & $-12.7$ & $+11.6$ & $-3.1$ & $-30.5$ & $+32.9$ \\
\midrule
ETTm1 & 96 & $-16.6$ & $-18.2$ & $-4.9$ & $-22.6$ & $+37.6$ & Traffic & 96 & $-10.4$ & $+22.2$ & $-1.3$ & $-38.3$ & $+53.5$ \\
 & 192 & $-16.3$ & $-18.8$ & $-5.5$ & $-21.3$ & $+36.9$ &  & 192 & $-11.1$ & $+20.7$ & $-1.0$ & $-39.4$ & $+52.7$ \\
 & 336 & $-17.0$ & $-18.7$ & $-6.0$ & $-21.9$ & $+35.4$ &  & 336 & $-11.5$ & $+18.2$ & $-0.8$ & $-40.4$ & $+51.0$ \\
 & 720 & $-17.0$ & $-13.2$ & $-6.6$ & $-19.2$ & $+32.9$ &  & 720 & $-12.0$ & $+17.0$ & $-0.7$ & $-41.5$ & $+42.0$ \\
\midrule
ETTm2 & 96 & $+8.1$ & $+22.6$ & $+6.8$ & $-2.1$ & $+9.4$ &  &  &  &  &  &  \\
 & 192 & $+11.8$ & $+42.4$ & $+2.4$ & $-1.5$ & $+4.7$ &  &  &  &  &  &  \\
 & 336 & $+18.9$ & $+68.5$ & $+1.5$ & $+0.4$ & $+1.8$ &  &  &  &  &  &  \\
 & 720 & $+32.9$ & $+84.8$ & $+1.2$ & $+4.7$ & $-1.5$ &  &  &  &  &  &  \\
\bottomrule
\end{tabular}
\normalsize
\caption{Plug-in families and zero-shot Chronos-Bolt, paired $\Delta$MSE\% versus the trained backbone (negative helps), per horizon, over the six backbones with three seeds each. The main-text regime table's mechanism entries are the four-horizon means. Failures on the trend-dominated datasets grow with the horizon.}
\label{tab:paradigm_perH}
\end{table}

\section{Full-Channel Replication for ECL and Traffic}

The $S{=}12$ protocol caps ECL (321 channels) and Traffic (862 channels) to their first 20 channels in every arm. To verify that the cap favors no mechanism, we reran the base backbones and the SPM instrument on all channels at $S{=}12$ (six backbones, $H \in \{96, 336\}$, three seeds) under the identical pairing convention, and recomputed the deterministic period tile (global weekly lag, all channels) against the full-channel base. Table~\ref{tab:fullchannel} sets the capped headline beside the full-channel result. The instrument reproduces: SPM is $-2.4\%$ on full-channel ECL (against $-2.1\%$ capped) and $-1.1\%$ on full-channel Traffic (against $-1.1\%$), both still small, negative, and bounded away from zero, so the cap does not manufacture the near-zero instrument result. The period tile remains a very strong baseline. On Traffic it is essentially unchanged ($-37.5\%$ against $-37.4\%$), and on ECL it deepens to $-46.0\%$ from $-33.6\%$ because the first twenty ECL channels carry higher normalized error than the dataset average (full-channel base mean MSE 0.43 against the capped 0.53 at $H{=}96$), so the cap if anything understated the tile rather than favoring it. The full-channel Traffic SPM mean is over all twelve backbone-horizon cells, now with complete three-seed coverage, per the strict-cell convention used throughout. Means and per-cell deltas come from the same pairing pipeline that reproduces every capped headline, and the tile is the deterministic global-lag copy, validated against the frozen capped-tile rows to within $1.2\times10^{-7}$ before the full-channel value was taken.

\begin{table}[htbp]
\centering
\tabstyle
\setlength{\tabcolsep}{6pt}
\begin{tabular}{llrr}
\toprule
\rowcolor{kilmist}  \hdr{Dataset} & \hdr{Arm} & \hdr{Capped} & \hdr{Full-channel} \\
\midrule
ECL     & Period tile    & $-33.6$ & $-46.0$ \\
ECL     & SPM instrument & $-2.1$  & $-2.4$  \\
Traffic & Period tile    & $-37.4$ & $-37.5$ \\
Traffic & SPM instrument & $-1.1$  & $-1.1$  \\
\bottomrule
\end{tabular}
\normalsize
\caption{Paired $\Delta$MSE\% at $S{=}12$ under the fixed 20-channel cap versus all channels, under the same pairing convention (negative helps). The tile is backbone-independent (broadcast over the twelve base cells); SPM is per-backbone, seed-paired, mean over the backbone-horizon cell means. Full-channel SPM is over $n{=}12$ complete three-seed cells on each of ECL and Traffic.}
\label{tab:fullchannel}
\end{table}

\section{The Window Sweep Under Both Horizon Conventions}

The main-text sweep aggregates $H \in \{96, 336\}$. The $S{=}24$ and $S{=}48$ sweeps were also run at $H \in \{192, 720\}$. Including all four horizons shifts no crossing by more than one grid step and leaves every ordering statement unchanged.

\section{Synthetic Cross: Grids and Absolute Errors}

The code appendix ships the full per-$(H, L, S)$ grids of paired $\Delta$MSE\% for the smooth and motif-control crosses, the corresponding absolute MSE tables (which confirm the large-$S$ percentage blow-ups sit on a noise-floor denominator near $\sigma^2$), the per-$(H, L)$ depth table, and the boundary statistics under several definitions.

\paragraph{The boundary correlation is robust to how the crossing is defined} On the de-censored grid ($S$ up to 192 for $L \in \{32, 64\}$, all twenty $(H, L)$ pairs defined), three of the four crossing definitions give essentially the same answer: the median-smoothed first crossing, the raw crossing after the deepest point, and the median-smoothed crossing after the deepest point (the one reported in the main text) all place the period correlation near $+0.71$ and the horizon correlation near $-0.23$ (precisely, $+0.71/{-}0.23$, $+0.72/{-}0.20$, and $+0.71/{-}0.23$). Only the raw first crossing differs ($+0.29$ with $L$, $+0.09$ with $H$), because single-point noise blips before the benefit peaks register as spurious crossings. The half-depth boundary gives the same verdict. The out-of-regime extension cells are uniformly positive (harm of $+1$ to $+13\%$ at $S \in \{128, 192\}$), confirming the large-period crossings are genuine.

\paragraph{The boundary survives two generator controls} Both rerun the boundary statistics on the $H{=}8$ row (780 runs) and are reported as sensitivity, not headline correlations. Freezing the harmonic count at 2 instead of scaling it with $L$ leaves the boundary tracking the period (correlation $+0.78$ over five pairs, all crossings interior), so waveform complexity is not what drives it. Quadrupling the observation noise to $\sigma{=}0.2$ leaves the trend intact ($+0.70$), with in-regime depths of $-7$ to $-17\%$.

\paragraph{The signature replicates on five further backbones} Rerunning the cross on all six backbones (7{,}404 runs in all) reproduces the period-tracking signature on every one (Table~\ref{tab:crossbb}). The crossing-versus-$H$ correlation stays near zero throughout ($-0.23$ to $+0.18$). The size of the help region varies, from all twenty $(H, L)$ cells on DLinear to a narrow long-period band on PatchTST, but where copy helps the boundary tracks the period on every backbone. For the headline pair, Fisher-$z$ intervals ($n{=}20$ pairs) put the period correlation in $[+0.40, +0.88]$ and the horizon correlation in $[-0.61, +0.24]$, and a 20{,}000-draw pair bootstrap agrees ($[+0.56, +0.88]$ and $[-0.56, +0.29]$). The horizon interval containing zero is itself the claim: the horizon does not set the boundary. The generator, dispatchers, and the analysis script ship in the code appendix.

\begin{table}[htbp]
\centering
\tabstyle
\setlength{\tabcolsep}{6pt}
\begin{tabular}{lrl}
\toprule
\rowcolor{kilmist}  \hdr{Backbone} & \hdr{corr$(S^*, L)$} & \hdr{Help region} \\
\midrule
DLinear      & $+0.71$ & all twenty $(H, L)$ cells \\
TimeXer      & $+0.91$ & broad \\
TimeMixer    & $+0.89$ & help cells only \\
iTransformer & $+0.87$ & broad \\
TimesNet     & $+0.86$ & reduced grid \\
PatchTST     & $+0.64$ & narrow long-period band \\
\bottomrule
\end{tabular}
\normalsize
\caption{Correlation of the per-row zero crossing $S^*$ with the period $L$, by backbone. DLinear is the full-grid value; TimesNet is on its reduced $H \in \{8, 32\}$, $S \le 96$ grid (against a like-for-like DLinear $+0.67$). TimeMixer's all-pairs correlation reads $-0.09$ because its short-period $L{=}4$ cells never benefit. Over the cells where copy helps it is $+0.89$.}
\label{tab:crossbb}
\end{table}

\section{Concentration as the Causal Variable}

The main-text spectral table is observational: across the seven benchmarks, benefit at the starved operating point correlates with spectral concentration. A controlled sweep makes the dependence causal. We hold a single starved operating point fixed ($S{=}12{<}L{=}24$, $H{=}16$) and a single waveform fixed, and dial only the fraction of variance the periodic structure carries; the run seed varies the observation noise alone. The measured concentration, computed with the exact main-text metric (detrended periodogram peak share over periods in $[2, 1024]$ on the train split), is then a clean monotone function of that fraction. The SPM instrument's paired $\Delta$MSE\% against its DLinear backbone deepens monotonically as concentration rises (Figure~\ref{fig:concsweep}), from neutral at the ETTh2 concentration to a double-digit gain at the ECL concentration. Pooled over the sixty paired runs (both operating points, ten concentration settings, three seeds each) the benefit tracks measured concentration at Pearson $r={-}0.93$ (Spearman $-0.95$, $p{<}10^{-26}$). A second operating point, $S{=}16{<}L{=}32$ at $H{=}16$, gives $r={-}0.93$ over its own grid. The two real anchors reproduce from a generator that shares nothing with the benchmarks but the spectrum: at concentration 0.08, the ETTh2 regime, the benefit is $-0.4\%$; at 0.60, the ECL regime, it is $-12.2\%$. Generator, dispatcher, and the analysis script ship in the code appendix.

\begin{figure}[t]
\centering
\includegraphics[width=0.549\textwidth]{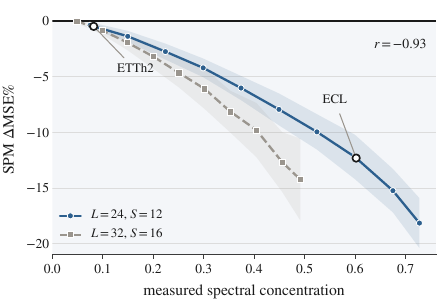}
\caption{Concentration causal sweep at two fixed starved operating points ($L{=}24$, $S{=}12$ and $L{=}32$, $S{=}16$, both at $H{=}16$). Dialing the periodic-power fraction raises measured spectral concentration (the main-text metric); the SPM instrument's paired $\Delta$MSE\% against its DLinear backbone deepens monotonically with it (each point is the mean over three noise seeds, shaded band $\pm 1$ standard deviation). The two open markers are the lowest and highest spectral concentration among the seven real benchmarks, ETTh2 (0.082) and ECL (0.602); the other five fall between them, so the synthetic sweep spans the full real range. At those two concentrations the synthetic benefit reproduces the measured one ($-0.4\%$ at ETTh2's, $-12.2\%$ at ECL's), from a generator that shares nothing with the benchmarks but the spectrum.}
\label{fig:concsweep}
\end{figure}

\section{Zero-Shot Arm Details}

Chronos-Bolt (base, 205M parameters) receives the identical z-score normalized 12-step contexts used by the trained arms, forecasts each channel independently, and rolls its native 64-step output autoregressively on the mean forecast to reach $H \in \{96, 192, 336, 720\}$. Inference uses every eighth test window. The subsampling audit computes the deterministic tile on both stride-1 and stride-8 window sets, and the absolute relative difference is at most 0.18\% across all datasets and horizons, with the maximum 0.174\% on ETTh1 at $H{=}336$. Its raw zero-shot test MSE is in Table~\ref{tab:zeroshot_mse}.

\begin{table}[htbp]
\centering
\tabstyle
\setlength{\tabcolsep}{6pt}
\begin{tabular}{@{}lrrrr lrrrr@{}}
\toprule
\rowcolor{kilmist} \multicolumn{5}{@{}c}{\hdr{\textit{ETT family}}} & \multicolumn{5}{c@{}}{\hdr{\textit{Other datasets}}} \\
\cmidrule(r{0.5em}){1-5} \cmidrule(l{0.5em}){6-10}
\rowcolor{kilmist} \hdr{Dataset} & \hdr{$H{=}96$} & \hdr{$H{=}192$} & \hdr{$H{=}336$} & \hdr{$H{=}720$} & \hdr{Dataset} & \hdr{$H{=}96$} & \hdr{$H{=}192$} & \hdr{$H{=}336$} & \hdr{$H{=}720$} \\
\midrule
ETTh1   & 0.775 & 0.800 & 0.814 & 0.797 & Weather & 0.235 & 0.282 & 0.348 & 0.433 \\
ETTh2   & 0.355 & 0.454 & 0.520 & 0.522 & ECL     & 0.743 & 0.751 & 0.792 & 0.861 \\
ETTm1   & 1.133 & 1.186 & 1.223 & 1.246 & Traffic & 1.282 & 1.249 & 1.296 & 1.283 \\
ETTm2   & 0.248 & 0.318 & 0.390 & 0.495 & &&&& \\
\bottomrule
\end{tabular}
\normalsize
\caption{Raw zero-shot test MSE of Chronos-Bolt (base) at $S{=}12$ across all four horizons, on z-score normalized data, for reference behind the paired percentages.}
\label{tab:zeroshot_mse}
\end{table}

To separate window coverage from the backbones' retraining, a matched control reruns the identical protocol at $S{=}96$ against backbones retrained at $S{=}96$, paired at the two standard horizons ($H \in \{96, 336\}$), the setting used for every analysis beyond the headline tables. The $S{=}12$ column here is therefore the two-horizon value and differs slightly from the four-horizon regime-table entry. Table~\ref{tab:zeroshot_match} shows the two operating points side by side. Where the longer window covers the period, the deficit collapses to parity: on ECL and Traffic (daily period) the gap falls from roughly $+40$ to $+50$ percent down to within its interval of zero, because the model's own MSE falls by over 60 percent once it can see a full day. ETTh1 halves its deficit without closing it, and ETTh2 sits at parity at both windows. On ETTm1, ETTm2, and Weather the gap instead widens sharply, because retrained backbones convert the longer window into large gains a frozen model cannot match, and on Weather the zero-shot error itself rises with the longer context. Two caveats. Horizons beyond the native 64 steps use autoregressive mean rolling, though the widenings are larger at $H{=}96$ than at $H{=}336$, so rolling is not their sole driver. And the comparison deliberately conflates window coverage with retraining, which is exactly the distinction the two operating points expose.

\begin{table}[htbp]
\centering
\tabstyle
\setlength{\tabcolsep}{5pt}
\begin{tabular}{@{}llrr llrr@{}}
\toprule
\rowcolor{kilmist} \multicolumn{4}{@{}c}{\hdr{\textit{ETT family}}} & \multicolumn{4}{c@{}}{\hdr{\textit{Other datasets}}} \\
\cmidrule(r{0.5em}){1-4} \cmidrule(l{0.5em}){5-8}
\rowcolor{kilmist} \hdr{Dataset} & \hdr{$L_p$} & \hdr{$S{=}12$} & \hdr{$S{=}96$} & \hdr{Dataset} & \hdr{$L_p$} & \hdr{$S{=}12$} & \hdr{$S{=}96$} \\
\midrule
ETTh1 & 24 & $+24.7$ & $+16.3$ & ECL & 24 & $+40.6$ & $+0.3$ \\
ETTh2 & 960 & $+0.2$ & $+4.0$ & Traffic & 24 & $+52.1$ & $+6.0$ \\
ETTm1 & 96 & $+36.3$ & $+176.1$ & Weather & 144 & $+6.3$ & $+80.9$ \\
ETTm2 & 96 & $+5.6$ & $+23.7$ &  &  &  &  \\
\bottomrule
\end{tabular}
\normalsize
\caption{Zero-shot Chronos-Bolt paired $\Delta$MSE\% against trained backbones at the two operating points (positive means the frozen model is worse). The deficit collapses where the longer window covers the period (ECL, Traffic) and widens where retrained backbones gain from it (ETTm1, ETTm2, Weather).}
\label{tab:zeroshot_match}
\end{table}

Two further checks confirm the conclusion does not depend on the model. Chronos-Bolt small, at $S{=}12$ under the identical protocol, is statistically indistinguishable from or better than the base model on all seven datasets (for example ETTh1 $+23.7$, ECL $+36.4$, Traffic $+37.9$, against $+24.7$, $+40.6$, $+52.1$ for base, with $\pm 5$ to $\pm 9$ CIs), so the gap is not an artifact of model scale. A second model family, Moirai (zero-shot, patch size fixed at 8), fails harder under the same $S{=}12$ context on all seven datasets, with paired $\Delta$MSE\% from $+69.2$ (ETTm2) to $+548.4$ (Traffic) and never below $+69\%$, uniformly worse than Chronos-Bolt. A third family, Sundial (\texttt{sundial-base-128m}, a 2025 flow-matching generative model), run at $S{=}12$ under the identical protocol, reproduces the Chronos pattern almost exactly: it trails the trained backbones by $+29.2\%$ (ETTh1), $+36.5\%$ (ETTm1), $+46.9\%$ (ECL), and $+53.5\%$ (Traffic) on the periodic benchmarks and stays within single digits on the weak-spectrum ones ($+1.2\%$ ETTh2, $+3.8\%$ ETTm2, $+7.5\%$ Weather). Its matched $S{=}96$ control reproduces the deficit collapse: the periodic gaps fall to $-2.3\%$ (ECL, from $+46.9$), $+12.9\%$ (Traffic, from $+53.5$), and $+5.6\%$ (ETTh1, from $+29.2$) where the longer window covers the daily period, and stay large or widen where it does not ($+40.4\%$ ETTm1, whose 96-step period the window only just reaches; $+12.9\%$ Weather, up from $+7.5$, whose 144-step period it still cannot cover). Traffic and Weather landing on the same $+12.9$ is coincidence, verified against the raw runs. The directions differ, a 40-point collapse against a widening.

\paragraph{Baseline currency} The zero-shot frontier moves quickly, so we add the 2025 flow-matching model Sundial \citep{liu2025sundial} as a third family (above). Further entrants such as TimesFM \citep{das2024timesfm} and mixture-of-experts successors would face the same constraint. The arm's conclusion does not hinge on which member is strongest: a zero-shot forecaster is window-only in the sense of Proposition~1 of the main text, its weights encoding a pretraining prior rather than the deployment series' record, so the starved phase values are unavailable to it at $S \ll L$ whatever the family or scale. A stronger model would have to overcome the same information constraint. The matched $S{=}96$ control isolates exactly this mechanism: the deficit collapses where the window covers the period and widens where it does not. The three families tested here (Chronos, Moirai, Sundial), across scales and architectures, show the pattern is not an artifact of one model.

\section{GTR Cycle Tuning}

The main-text regime table runs GTR with a fixed 168-step cycle. To check that GTR's failures reflect the mechanism rather than this choice, we tuned the cycle per dataset: a held-out probe on DLinear selected each dataset's better of a daily and a weekly cycle, and the winner was then run on all six backbones. Table~\ref{tab:gtrtune} reports the winners and the tuned result. The fixed 168-step choice was already optimal on the four hourly datasets. Even with the cycle tuned per dataset, GTR is a net failure on six of seven, never changes sign into a win, and is never the best mechanism in any row. Tuning the cycle does not rescue GTR, so its failures are a property of the cycle-prior mechanism at $S{=}12$, not of the fixed cycle length.

\begin{table}[htbp]
\centering
\tabstyle
\setlength{\tabcolsep}{6pt}
\begin{tabular}{@{}lrr lrr@{}}
\toprule
\rowcolor{kilmist} \multicolumn{3}{@{}c}{\hdr{\textit{ETT family}}} & \multicolumn{3}{c@{}}{\hdr{\textit{Other datasets}}} \\
\cmidrule(r{0.5em}){1-3} \cmidrule(l{0.5em}){4-6}
\rowcolor{kilmist} \hdr{Dataset} & \hdr{Tuned cycle} & \hdr{GTR $\Delta$MSE\%} & \hdr{Dataset} & \hdr{Tuned cycle} & \hdr{GTR $\Delta$MSE\%} \\
\midrule
ETTh1 & 168 & $+9.4$ & ECL & 168 & $+24.9$ \\
ETTh2 & 168 & $+33.7$ & Traffic & 168 & $+20.1$ \\
ETTm1 & 96 & $-16.9$ & Weather & 1008 & $+33.9$ \\
ETTm2 & 672 & $+55.0$ &  &  &  \\
\bottomrule
\end{tabular}
\normalsize
\caption{GTR with the cycle tuned per dataset (better of daily and weekly on a held-out DLinear probe), paired $\Delta$MSE\% over all six backbones (negative helps). The four hourly datasets keep the fixed 168-step cycle. GTR still fails on six of seven.}
\label{tab:gtrtune}
\end{table}

\section{Worst Cells}

Table~\ref{tab:worstcell} reports the single worst (backbone, horizon) cell for each mechanism at $S{=}12$, the tail behind the dataset-level means. The instrument's worst cell ($+31.5\%$) is an order of magnitude milder than GTR's ($+452.0\%$). All four mechanisms' worst cells fall on trend-dominated ETTh2 at the longest horizon, where a memory has nothing periodic to copy. Per-cell tables for every arm accompany the code appendix.

\begin{table}[htbp]
\centering
\tabstyle
\setlength{\tabcolsep}{6pt}
\begin{tabular}{llr}
\toprule
\rowcolor{kilmist}  \hdr{Mechanism} & \hdr{Worst cell (dataset, backbone, $H$)} & \hdr{$\Delta$MSE\%} \\
\midrule
SPM  & ETTh2, TimeMixer, 720 & $+31.5$  \\
FAN  & ETTh2, TimeMixer, 720 & $+105.4$ \\
RAFT & ETTh2, DLinear, 720   & $+58.7$  \\
GTR  & ETTh2, TimeMixer, 720 & $+452.0$ \\
\bottomrule
\end{tabular}
\normalsize
\caption{Worst single (backbone, horizon) cell per mechanism at $S{=}12$ (paired $\Delta$MSE\%, positive hurts). The instrument's second-worst cell is $+15.6\%$ (also ETTh2, at $H{=}720$). Away from ETTh2 and ETTm2 it stays within a few percent in every cell.}
\label{tab:worstcell}
\end{table}

\section{Measured Gate Values}

Fourteen instrumented DLinear runs (all seven datasets, both horizons, $S{=}12$) log the final per-channel gate values, 178 channel gates in total. The measurement contradicts the mechanism we designed for: the final $\sigma(g_d)$ values span 0.430 to 0.572, none falls below 0.1, and the gates barely leave their 0.5 initialization. Datasets where retrieval hurts show no practically meaningful gate suppression: the hurt-side median is in fact the higher one (0.4945 versus 0.4935), and while a one-sided rank test in the suppression direction reaches $p{=}0.043$, the disagreement between the rank and median views itself reflects how small the effect is (0.001 at the median, $n{=}178$). What bounds the out-of-regime failures is instead validation-based early stopping: on ETTh2 and ETTm2 training halts at best-validation epoch 1 to 3, against epoch 8 to 10 in regime, freezing the model near initialization. On ETTh2 the halted retrieval branch is still small (output magnitude ratio $|\hat{y}_{\mathrm{retr}}|/|\hat{y}_{\mathrm{base}}|$ of 0.08 at the restored checkpoint, against 0.16 to 0.45 in regime), which is the bounded-failure behavior the main text reports. The bound is loose, not a guarantee: in the instrumented ETTm2 worst cell ($+13.6\%$) the deployed gate averages 0.46, more than half open, with a branch ratio of 0.38. Gate values track per-channel periodicity only on Traffic (Spearman $\rho$ of 0.73 and 0.76 at the two horizons, $n{=}20$ channels each); pooled across the periodic datasets the correlation is absent ($\rho{=}-0.16$, $n{=}108$). The ETTh2/TimeMixer worst cell was not instrumented; its DLinear proxy restores the epoch-1 checkpoint with the gate at initialization (mean 0.499) and a branch ratio of 0.08.

\section{Uncertainty for the Mechanism Table}

Table~\ref{tab:paradigm_ci} reports $t$-based 95\% confidence intervals for every mechanism entry of the main-text regime table, computed over the same twenty-four backbone-horizon cell means as the means themselves, the convention used throughout the paper.

\begin{table*}[t]
\centering
\tabstyle
\setlength{\tabcolsep}{6pt}
\begin{tabular}{lrrrrr}
\toprule
\rowcolor{kilmist}  \hdr{Dataset} & \hdr{FAN} & \hdr{GTR} & \hdr{SPM} & \hdr{RAFT} & \hdr{Chronos} \\
\midrule
ETTh1   & $+1.2 \pm 2.0^{\dagger}$ & $+8.4 \pm 5.3$ & $-0.5 \pm 0.5^{\dagger}$ & $-16.3 \pm 2.6$ & $+22.2 \pm 3.9$ \\
ETTh2   & $+50.1 \pm 11.6$ & $+54.8 \pm 40.4$ & $+4.8 \pm 3.2$ & $+1.5 \pm 5.6^{\dagger}$ & $-0.1 \pm 3.1^{\dagger}$ \\
ETTm1   & $-16.7 \pm 3.5$ & $-17.2 \pm 4.3$ & $-5.8 \pm 1.6$ & $-21.2 \pm 3.2$ & $+35.7 \pm 6.0$ \\
ETTm2   & $+17.9 \pm 6.2$ & $+54.6 \pm 22.8$ & $+3.0 \pm 1.7$ & $+0.4 \pm 1.7^{\dagger}$ & $+3.6 \pm 4.6^{\dagger}$ \\
ECL     & $-7.3 \pm 3.0$ & $+20.7 \pm 15.4$ & $-2.3 \pm 0.7$ & $-31.9 \pm 3.0$ & $+38.5 \pm 5.7$ \\
Traffic & $-11.2 \pm 3.0$ & $+19.5 \pm 16.2$ & $-1.0 \pm 0.2$ & $-39.9 \pm 2.4$ & $+49.8 \pm 6.1$ \\
Weather & $+1.2 \pm 4.0^{\dagger}$ & $+13.1 \pm 9.2$ & $-2.9 \pm 1.2$ & $+8.8 \pm 2.8$ & $+6.4 \pm 2.8$ \\
\bottomrule
\end{tabular}
\normalsize
\caption{95\% CIs for the mechanism columns of the main-text regime table, $t$-based over the same twenty-four backbone-horizon cell means as the means themselves ($\dagger$ marks intervals that include zero). The near-parity results are not separable from zero (RAFT on ETTh2 and ETTm2, SPM on ETTh1), while SPM's Weather win ($-2.9 \pm 1.2$) and its ECL and Traffic wins are separable from zero.}
\label{tab:paradigm_ci}
\end{table*}

\section{The Leakage Pitfall in Periodic-Copy Baselines}

A periodic copy implemented as $\hat{y}_{t+h} = y_{t+h-L}$ reads inside the forecast window for every $h > L$: at $H{=}336$ and $L{=}24$, 93\% of the outputs are copies of future ground truth, and the baseline's sign can flip on trend-bearing hourly data (on ETTh2 the leaky form scores $-15\%$ against the causal form's $+6\%$). We document this because the leaky form is a natural implementation accident: we caught it when a validation-selected copy lag collapsed to 2 on every smooth channel, which under the leaky indexing copies near-adjacent future values. All numbers in this paper use the causal tiling form.

\section{The Window Sweep with the Period Tile}

We score the period tile against the backbones retrained at each window, under the paired $\Delta$MSE\% convention at the two standard horizons ($H \in \{96, 336\}$), the setting used for every analysis beyond the headline tables. Table~\ref{tab:tilesweep} traces the result. As the window grows, the tile's benefit attenuates and crosses into a loss in the order of each dataset's period: ETTh1 between $S{=}12$ and $24$, ECL between $24$ and $48$, ETTm1 and Traffic between $48$ and $96$. ETTh2 and ETTm2 stay positive (the tile never helps) at every window. The crossings are consistent with each dataset's period structure, with ECL and Traffic confounded by the weekly copy lag their tile uses. This confirms on a second, mechanism-free arm that the boundary is a property of the operating point.

\begin{table}[htbp]
\centering
\tabstyle
\setlength{\tabcolsep}{5pt}
\begin{tabular}{llrrrr}
\toprule
\rowcolor{kilmist}  \hdr{Dataset} & \hdr{$L_p$} & \hdr{$S{=}12$} & \hdr{$S{=}24$} & \hdr{$S{=}48$} & \hdr{$S{=}96$} \\
\midrule
ETTh1   & 24  & $-9.1$  & $+18.1$ & $+29.2$ & $+32.5$ \\
ETTm1   & 96  & $-46.5$ & $-30.6$ & $-6.0$  & $+25.1$ \\
ECL     & 24  & $-33.6$ & $-10.0$ & $+4.5$  & $+24.8$ \\
Traffic & 24  & $-37.4$ & $-17.6$ & $-5.6$  & $+19.6$ \\
Weather & 144 & $+30.3$ & $+35.1$ & $+43.3$ & $+62.1$ \\
\bottomrule
\end{tabular}
\normalsize
\caption{Period tile paired $\Delta$MSE\% against the backbone retrained at each window (negative helps). The benefit attenuates and crosses zero in the order of $L_p$. ETTh2 and ETTm2 are positive at every window and omitted.}
\label{tab:tilesweep}
\end{table}

\section{The Diagnostic Across the Window Sweep}

Extending the per-cell table from the starved point to the full sweep gives 504 (dataset, backbone, horizon, window) cells: all four windows $S \in \{12, 24, 48, 96\}$ at $H \in \{96, 336\}$, plus the $S \in \{24, 48\}$ sweeps at their two additional horizons $H \in \{192, 720\}$, twelve (horizon, window) combinations over seven datasets and six backbones. Of these, 42\% benefit from the instrument. Sign prediction over ALL cells is weak for every feature set (best 0.62 against a 0.58 majority rule), which the regime account itself explains: cells near the boundary have effects near zero, where sign is noise. Restricting to actionable cells, leave-one-dataset-out accuracy of the concentration-plus-$S/L$ pair rises with the effect floor (0.69 at $|\Delta| \ge 1\%$, $n{=}267$; 0.73 at $|\Delta| \ge 2\%$, $n{=}172$, against majority rules of 0.58), with the margin spread across folds rather than concentrated in one. These accuracies carry a post-hoc selection caveat: the effect-size filter conditions on the measured outcome, and the cells it drops sit at the boundary (median $S/L$ of exactly 1.0 among dropped cells, against 0.25 among kept ones), so they are upper bounds a deployment cannot certify in advance, and the decision-cost summary below is the deployable quantity. The dataset-level ADF-plus-OOV pair does not extend beyond the starved point (0.51 on actionable cells): it answers whether a dataset's structure rewards memory at all, while the regime coordinates answer where along the window axis it stops doing so. In decision terms, deploying the instrument whenever the concentration-plus-$S/L$ rule predicts benefit gains $-0.21\%$ MSE per cell across the sweep, against $+0.76\%$ for always deploying and $0$ for never deploying. An oracle gains $-0.76\%$. Per-cell results ship in the code appendix.

\section{External Validation on Held-Out Datasets}

The leave-one-dataset-out estimate in the main text rotates the held-out fold through the seven development datasets. A stronger test freezes the diagnostic and applies it to data withheld from every stage of the study. We add two datasets never used to build the map, the instrument, or the diagnostic: Exchange (eight daily exchange-rate channels, a near-random-walk) and ILI (seven weekly influenza channels with a strong yearly period, $L{=}52$, so $S{=}12 \ll L$). Each is run under the identical protocol: base, the instrument (SPM), and RAFT across the six backbones, two horizons ($H \in \{96, 336\}$ for Exchange, $\{24, 36\}$ for ILI), and three seeds, 216 runs in all.

Their two diagnostic statistics, computed exactly as in distribution, place them at opposite ends of the regime axis: Exchange is non-stationary with a memory-hostile symbol distribution (ADF $p{=}0.36$, train-half OOV $0.80$), where the instrument is harmful ($+18.8\%$, helping in 4 of 12 cells) as is RAFT ($+3.4\%$); ILI is stationary with a well-covered vocabulary (ADF $p{=}0.04$, OOV $0.38$). The frozen logistic rule, the same ADF-plus-OOV model class that scores $0.76$ in distribution, trained here on all eighty-four development cells and applied to the twenty-four novel ones, predicts the instrument's per-cell sign at $0.62$ (15 of 24). The comparison that matters is against the in-distribution prior: ``retrieval usually helps'' holds for $0.69$ of development cells, but transferring that majority call to the novel data scores only $0.38$. The diagnostic corrects the direction the prior gets wrong.

Two qualifications keep the claim honest. First, the rule returns the same verdict, that the bare instrument does not net-help, for both datasets ($P_{\text{help}}{=}0.00$ for Exchange, $0.22$ for ILI). It is decisive and correct on Exchange and marginal on ILI, where the 12-step exact-lookup instrument is a coin-flip ($+1.7\%$, median $-0.7\%$, helping in 5 of 12 cells). Second, retrieval does pay on ILI, but through RAFT ($-6.8\%$), whose corpus windows reach the yearly period the instrument's short tuples cannot. On this genuinely new periodic dataset the retrieval machinery matters, a mild qualifier to the in-distribution finding that exact lookup matches graph diffusion. What transfers cleanly is the regime axis itself: the statistics correctly separate a dataset whose structure cannot reward memory from one whose structure can. Per-cell results and the analysis script ship in the code appendix.

\paragraph{Transfer to another plug-in family} A second transfer test holds the datasets fixed but changes the mechanism. The frozen ADF-plus-OOV rule, fit on the instrument's per-cell benefit signs, predicts RAFT's per-cell sign at only $0.68$ (leave-one-dataset-out $0.68$ as well), below RAFT's own $0.80$ majority: the per-cell rule does not transfer to corpus retrieval. Both errors are mechanistic. RAFT helps on trend-dominated ETTh2 (11 of 12 cells), which the ADF-led rule, tuned to the instrument's trend failure, rejects, because corpus retrieval extracts structure the short symbolic tuples cannot; and RAFT fails on Weather (helping in 1 of 12), which the rule accepts, because the instrument's per-channel independence handles Weather's heterogeneous phases while RAFT's whole-window retrieval cannot. The operating-point boundary is shared across the class (the mechanism sweep below), but the per-cell diagnostic is specific to the instrument it was fit on. The analysis script is in the code appendix.

\section{Generalization to a Retail Domain (M5)}

The seven benchmarks and the diagnostic share a domain: smooth, mostly hourly energy, traffic, and weather. To test whether the regime map is domain-general, we apply it unchanged to M5 (Walmart daily unit sales), a retail domain of very different character: intermittent count data (median 0.45 units per day, 73\% zero days) with a weekly rather than daily dominant period. We take all six aggregation levels (store$\times$department down to state, 123 series), the item totals, and a stratified sample of item$\times$store series spanning the concentration range, 9{,}109 series in all. The M5 data is public; the loader, feature script, and sweep ship in the code appendix.

\paragraph{The regime axis transfers} Concentration, the main-text spectral metric computed identically, predicts whether a cold-start copy helps at AUC 0.968 (Spearman $-0.57$ between concentration and the paired benefit of a weekly tile over a phase-free window mean at $S{=}5{<}L{=}7$, $H{=}7$, across the 9{,}109 series). The energy-benchmark threshold still recalls 0.93 of the helped retail series at balanced accuracy 0.90, though on count data it is over-inclusive. The crossing sits higher on count data, near 0.35, than on the energy benchmarks, near 0.13: intermittent noise demands more concentration before copying a specific day's count beats smoothing. Benefit by level tracks concentration exactly as the map predicts, from the concentrated store and department aggregates (concentration 0.37 to 0.44, copy helps) to the intermittent item$\times$store series (concentration 0.03, copy never helps).

\paragraph{The per-cell diagnostic does not} The fine-grained ADF-plus-OOV rule, by contrast, fails to transfer (AUC 0.31). Its features are calibrated to the symbolic predictability of smooth series; on intermittent counts they carry the opposite or no signal, with helped series showing higher rather than lower ADF $p$. The regime axis transfers across domains; the per-cell symbolic rule is domain-specific, the same instrument-specificity seen across plug-in families above, now across domains.

\paragraph{Trained replication} Running the SPM instrument against a DLinear backbone across $S \in \{3,5,7,14,28\}$ at $H{=}7$ on these series (124 aggregated levels and 60 stratified intermittent item series, $n{=}184$), the paired benefit tracks concentration at the starved window (Spearman $-0.45$ at $S{=}3$). The benefit is small, at most about $-1\%$ at $S{=}3$ and non-monotone across $S$ within three-seed noise, the modest margins expected for a short weekly period where starvation costs only a few phases, not a clean crossing. The decisive observation is on the intermittent item series: there the learned gate disengages, holding SPM within $0.5\%$ of the backbone where the learning-free tile loses 60\%. The instrument is robust exactly where the brittle copy is not.

\section{Exact Lookup versus Graph Diffusion}

Table~\ref{tab:exact} breaks the within-instrument comparison down by dataset. The per-cell verdict (every interval straddling zero) is summarized in the main text and tested in the significance section below.

\begin{table}[htbp]
\centering
\tabstyle
\begin{tabular}{@{}lrr lrr@{}}
\toprule
\rowcolor{kilmist} \multicolumn{3}{@{}c}{\hdr{\textit{ETT family}}} & \multicolumn{3}{c@{}}{\hdr{\textit{Other datasets}}} \\
\cmidrule(r{0.5em}){1-3} \cmidrule(l{0.5em}){4-6}
\rowcolor{kilmist} \hdr{Dataset} & \hdr{Exact $-$ diffusion} & \hdr{95\% CI} & \hdr{Dataset} & \hdr{Exact $-$ diffusion} & \hdr{95\% CI} \\
\midrule
ETTh1   & $+0.05$ & $\pm 0.30$ & Weather & $+0.24$ & $\pm 0.39$ \\
ETTh2   & $+0.28$ & $\pm 0.43$ & ECL     & $+0.02$ & $\pm 0.10$ \\
ETTm1   & $-0.12$ & $\pm 0.21$ & Traffic & $-0.05$ & $\pm 0.20$ \\
ETTm2   & $+0.07$ & $\pm 0.32$ & && \\
\bottomrule
\end{tabular}
\normalsize
\caption{Replacing the instrument's graph-diffusion retrieval with exact tuple lookup, as paired $\Delta$MSE\% within dataset, backbone, horizon, and seed (252 runs, 84 cells; positive means exact lookup is worse). Intervals are $t$-based 95\% CIs across cells. Every interval straddles zero.}
\label{tab:exact}
\end{table}

\section{Benefit Boundary Across Published Plug-ins}

The main-text boundary is swept on the symbolic instrument and the tile; the published plug-ins appear in the main-text regime table only at the single starved point $S{=}12$. To test that the operating-point boundary is a property of the mechanism class and not of our instrument, we sweep FAN, GTR, and RAFT across $S \in \{24, 48, 96\}$ on the five periodic datasets (six backbones, $H \in \{96, 336\}$, three seeds, 1{,}620 runs), paired against the backbones retrained at each window under the same convention as every other number in the paper. Table~\ref{tab:mechsweep} traces each plug-in's benefit from the starved point $S{=}12$ outward. Three TimeMixer cells diverged in training (paired MSE above five times the base; GTR-ECL at $S{=}24$ and $S{=}48$, RAFT-ETTm1 at $S{=}96$) and are dropped from the robust means. Where a plug-in helps at the starved point, its benefit attenuates as the window grows and crosses zero in the order of the period: FAN and GTR cross on ETTm1 between $S{=}24$ and $48$, and RAFT, the corpus method, crosses on ETTm1 between $S{=}48$ and $96$, at its period. RAFT's benefit on the short-period datasets (ECL, Traffic, ETTh1; $L_p{=}24$) attenuates monotonically but remains negative at $S{=}96$, consistent with corpus retrieval's native long-context regime. Of the eight (plug-in, dataset) pairs that help at $S{=}12$, six attenuate monotonically toward zero. The boundary is the operating point's, not the instrument's.

\begin{table}[htbp]
\centering
\tabstyle
\setlength{\tabcolsep}{4.5pt}
\begin{tabular}{llrrrrr}
\toprule
\rowcolor{kilmist}  \hdr{Plug-in} & \hdr{Dataset} & \hdr{$L_p$} & \hdr{$S{=}12$} & \hdr{$S{=}24$} & \hdr{$S{=}48$} & \hdr{$S{=}96$} \\
\midrule
FAN  & ETTm1   & 96 & $-16.7$ & $-7.4$  & $+3.3$  & $+4.8$ \\
FAN  & ECL     & 24 & $-7.3$  & $-2.5$  & $-0.6$  & $-1.3$ \\
FAN  & Traffic & 24 & $-11.2$ & $+3.4$  & $+5.6$  & $+1.8$ \\
GTR  & ETTm1   & 96 & $-17.2$ & $-2.9$  & $+0.3$  & $+13.1$ \\
RAFT & ETTh1   & 24 & $-16.3$ & $-7.5$  & $-4.5$  & $-3.1$ \\
RAFT & ETTm1   & 96 & $-21.2$ & $-18.9$ & $-11.1$ & $+11.5$ \\
RAFT & ECL     & 24 & $-31.9$ & $-19.0$ & $-12.6$ & $-9.4$ \\
RAFT & Traffic & 24 & $-39.9$ & $-30.1$ & $-25.9$ & $-15.9$ \\
\bottomrule
\end{tabular}
\normalsize
\caption{Paired $\Delta$MSE\% of each published plug-in as the window $S$ grows (negative helps), for the datasets where it helps at $S{=}12$. The sweep columns ($S \in \{24, 48, 96\}$) pair each plug-in against the backbones retrained at that window, six backbones at $H \in \{96, 336\}$, robust mean over backbone-horizon cells (the three diverged TimeMixer cells dropped, all at $S \ge 24$). The $S{=}12$ anchors are the four-horizon means of the main-text regime table; no $S{=}12$ cell diverges. The benefit attenuates as $S$ approaches $L_p$; FAN/GTR/RAFT cross zero on ETTm1, RAFT at its period. GTR, FAN on Weather/ETTh1, and RAFT on Weather do not help at $S{=}12$ and are omitted. Per-cell results are in the code appendix.}
\label{tab:mechsweep}
\end{table}

\section{Backbone Re-Tuning at the Starved Point}

The standard PatchTST configuration sets a patch length of 16, larger than the $S{=}12$ window, so at the starved point it cannot form a single full patch. To test that the tile's win is not an artifact of backbones carrying their standard-window configurations, we re-optimize the lookback-relevant hyperparameters of PatchTST (patch length in $\{2,3,4,6\}$, width, learning rate; 24 configurations) and iTransformer (width, depth, learning rate; 18 configurations) at $S{=}12$ on the three datasets where the tile wins most (ETTm1, Traffic, ECL), $H{=}96$, three seeds each. Table~\ref{tab:retune} reports, per pair, the test MSE of the single best re-tuned configuration, an oracle upper bound favorable to the backbone, against the default backbone and the period tile. Re-tuning improves the backbones by 3 to 32\% over their default configurations, yet the tile still beats the oracle-best re-tuned backbone on every pair, by 12 to 54\%. A window too short to contain one period is not repaired by tuning, consistent with the phase-free synthetic control.

\begin{table}[htbp]
\centering
\tabstyle
\setlength{\tabcolsep}{3.6pt}
\begin{tabular}{llrrrr}
\toprule
\rowcolor{kilmist}  \hdr{Backbone} & \hdr{Dataset} & \hdr{Default} & \hdr{Re-tuned} & \hdr{Tile} & \hdr{Tile adv.} \\
\midrule
PatchTST     & ETTm1   & 0.949 & 0.922 & 0.427 & $-53.7\%$ \\
PatchTST     & ECL     & 0.589 & 0.557 & 0.352 & $-36.8\%$ \\
PatchTST     & Traffic & 0.888 & 0.779 & 0.519 & $-33.3\%$ \\
iTransformer & ETTm1   & 0.846 & 0.755 & 0.427 & $-43.5\%$ \\
iTransformer & ECL     & 0.546 & 0.427 & 0.352 & $-17.4\%$ \\
iTransformer & Traffic & 0.867 & 0.592 & 0.519 & $-12.3\%$ \\
\bottomrule
\end{tabular}
\normalsize
\caption{Backbone re-tuning at $S{=}12$, $H{=}96$. Default is the published backbone configuration; Re-tuned is the best test MSE over the hyperparameter grid (oracle-best, favorable to the backbone); Tile is the period copy. Tile adv.\ is the tile's paired improvement over the re-tuned backbone (negative means the tile wins). Re-tuning helps the backbone by 3 to 32\% over its default, yet the learning-free tile still wins on every pair. Percentages are computed from unrounded MSEs.}
\label{tab:retune}
\end{table}

\section{Significance Tests}

Wilcoxon signed-rank tests over paired runs (one pair per backbone, seed, and horizon in $\{96, 336\}$; 36 per dataset) confirm the headline tile comparison. The four in-regime wins are all significant at $p < 10^{-8}$, with the tile better in 92 to 100 percent of runs (ETTm1, ECL, and Traffic at $p \approx 3 \times 10^{-11}$ and 100 percent; ETTh1 at $7 \times 10^{-9}$ and 92 percent). The three out-of-regime losses are significant in the opposite direction, with the tile better in only 0 to 17 percent of runs (Weather, ETTm2, ETTh2). Exact lookup versus graph diffusion over all 252 paired runs gives $p = 0.88$, consistent with no difference.

For the diagnostic, an exact McNemar test of the ADF-plus-OOV logistic rule against the majority rule over the 84 leave-one-dataset-out cells gives 9 versus 3 discordant cells, $p = 0.146$: the headline 0.76 versus 0.69 margin is suggestive, not significant, as the main text states. Table~\ref{tab:perfold} shows the two rules coincide on every fold except the trend-dominated ETTh2, where the diagnostic is right and the majority rule wrong. A fully prospective OOV variant, computed between training halves with the discretizer fit on the vocabulary half, reproduces the leave-one-dataset-out predictions exactly (0.76 accuracy, identical per-fold vector). Fitting the discretizer on the full training split instead drops accuracy to 0.62, so the prospective form requires that convention.

\begin{table}[htbp]
\centering
\tabstyle
\setlength{\tabcolsep}{6pt}
\begin{tabular}{@{}lrr lrr@{}}
\toprule
\rowcolor{kilmist} \multicolumn{3}{@{}c}{\hdr{\textit{ETT family}}} & \multicolumn{3}{c@{}}{\hdr{\textit{Other datasets}}} \\
\cmidrule(r{0.5em}){1-3} \cmidrule(l{0.5em}){4-6}
\rowcolor{kilmist} \hdr{Held-out fold} & \hdr{ADF $+$ OOV} & \hdr{Majority} & \hdr{Held-out fold} & \hdr{ADF $+$ OOV} & \hdr{Majority} \\
\midrule
ETTh1 & 0.58 & 0.58 & ECL & 0.92 & 0.92 \\
ETTh2 & \textbf{0.75} & 0.25 & Traffic & 1.00 & 1.00 \\
ETTm1 & 0.92 & 0.92 & Weather & 0.75 & 0.75 \\
ETTm2 & 0.42 & 0.42 &  &  &  \\
\bottomrule
\end{tabular}
\normalsize
\caption{Per-fold leave-one-dataset-out sign accuracy of the two-statistic rule against the majority rule, twelve cells per fold. The two coincide everywhere except the trend-dominated ETTh2 fold, which carries the entire headline margin.}
\label{tab:perfold}
\end{table}

\section{Protocol Details and Configurations}

The six backbones are the standard Time-Series-Library (TSLib) set and span the architecture families in current use: linear (DLinear), patch and inverted transformers (PatchTST, iTransformer), convolutional (TimesNet), MLP-mixer (TimeMixer), and exogenous-aware (TimeXer). The learned-cycle family (CycleNet, GTR) enters through the plug-in arm, where GTR, its most recent member, is run with both fixed and per-dataset tuned cycle lengths.

The protocol settings are as follows: backbone hyperparameters per (dataset, backbone) cell, SPM configuration ($V{=}5$ information-bottleneck bins fit on the training split, tuple order 3, mean-future exemplars, per-channel sigmoid gate initialized at 0.5, shared projection width 64), the diffusion variant (Personalized PageRank, teleport 0.05, top-3 successors), FAN frequency budget (top-4, selected by a DLinear validation probe on ECL and Traffic; top-8 infeasible at $S{=}12$), GTR cycle length 168 on all datasets (weekly on hourly data; 42 hours on 15-minute and 28 hours on 10-minute data), RAFT's published period set (three granularities) with our fixed top-20 retrieval budget, training budgets, and compute (CPU cluster nodes, 4 to 8 cores and 8 to 240 GB per run; PyTorch 2.4.1). Measured instrument overhead: across 35 paired (backbone, SPM) full-channel training runs, the median step-time ratio is 1.07 (90th percentile 1.39). Full command lines ship with the code appendix.

\section{Window-Length Theory: Detailed Comparison}
Concurrent with this work, \citet{butera2026context} prove that windows must exceed the generating process's memory length $P$ to attain minimum error. Their $P$ indexes identification ambiguity, not periodic structure. A deterministic period-$L$ cycle with distinct phase values has memory length far below $L$, so their necessity bound is loose exactly in the seasonal regimes the main text measures, where benefit persists until $S$ approaches $L$. Proposition~1 of the main text supplies the complementary bound for periodic structure: it lower-bounds window-only risk by the starved fraction $1 - S/L$ times a uniform lower bound on the waveform variance the covered phases leave undetermined. The two results then bracket the window question from the memory side and the phase side respectively.

\section{Non-Retrieval Related Work}

Deep forecasting backbones: DLinear \citep{zeng2023transformers}, PatchTST \citep{nie2023patchtst}, iTransformer \citep{liu2024itransformer}, TimesNet \citep{wu2023timesnet}, TimeMixer \citep{wang2024timemixer}, TimeXer \citep{wang2024timexer}. Scaling behavior of window length \citep{shi2024scaling}. Foundation models \citep{ansari2024chronos}.

\renewcommand{\thetheorem}{A.\arabic{theorem}}
\renewcommand{\theproposition}{A.\arabic{proposition}}
\theoremstyle{remark}
\renewcommand{\theremark}{A.\arabic{remark}}
\newtheorem*{mainpropA}{Proposition 1}
\newtheorem*{mainpropB}{Proposition 2}

\section{Theory: Statements and Proofs}
\label{app:theory}

This section proves the two results stated in the main text, the phase-starvation lower bound (Proposition~1 there) and the test-time risk bound (Proposition~2 there), and collects the supporting analysis of the instrument: a per-channel error decomposition and bounded-regression guarantee for the gate, consistency of the exact-lookup index under stationarity, and three structural properties of the discrete symbolic index. Restatements of the two main-text results are unnumbered and carry their main-text numbers; the appendix's own results carry A-prefixed numbers, and are stated for the instrument as presented in the paper, exact tuple lookup, with the graph-diffusion variant appearing only in Remark~\ref{rem:diffusion}. The supporting analysis motivates the design and the diagnostic. The operating-regime claim rests on the measurements, and Remark~\ref{rem:model-class} makes the model-class limitation precise.

\paragraph{Notation and standing assumptions}
Each channel $d \in \{1, \dots, D\}$ is discretized independently into $V{=}5$ information-bottleneck bins fit on the training split, mapping the value at time $t$ to a symbol $s^d_t \in \{1, \dots, V\}$. The symbolic state is the order-3 tuple $u^d_t = (s^d_{t-2}, s^d_{t-1}, s^d_t) \in \mathcal{U}_d := \{1, \dots, V\}^3$, so $|\mathcal{U}_d| \le V^3 = 125$. For every state $u$ observed in the training split, the index stores the payload $\bar{y}_u$, the empirical mean of the $H$-step future windows that followed occurrences of $u$. Given the discretized suffix $u^*$ of the current length-$S$ input window, the exact-lookup retrieval is $\hat{y}^{d}_{\mathrm{retr}} = \bar{y}_{u^*}$ when $u^*$ was observed in training, and otherwise the payload of the nearest observed state under a fixed symbol-space metric (the fallback). The gated fusion of the main text is
\begin{equation}
\hat{Y}^{d} = \big(1-\sigma(g_d)\big)\,\hat{Y}^{d}_{\mathrm{base}} + \sigma(g_d)\,\mathrm{MLP}\big(\hat{y}^{d}_{\mathrm{retr}}\big).
\end{equation}
For the analysis we absorb the projection into the retrieval branch and treat $\hat{y}^{d}_{\mathrm{retr}}$ as the fused auxiliary forecast. Targets are bounded, $|Y^d_{t+h}| \le B$ almost surely for all $h \le H$. Boundedness is an assumption, and after z-score normalization we read $B$ as an empirical bound. Every payload and every conditional mean below is then bounded by $B$. Statements are per channel and per forecast coordinate. The squared error averages over coordinates and sums over channels, so no generality is lost, and we drop the superscript $d$ inside proofs.

\subsection{The Gate: Decomposition and Bounded Regression}

\begin{proposition}[Per-channel error decomposition]\label{prop:decomp}
Hold the backbone, the index, and the projection fixed, and let $\mathcal{L}(\mathbf{g}) = \mathbb{E}\,\|\hat{Y} - Y\|^2$ as a function of the gate vector $\mathbf{g} = (g_1, \dots, g_D) \in \mathbb{R}^D$. Then
\begin{equation}
\mathcal{L}(\mathbf{g}) = \sum_{d=1}^{D} \mathcal{L}_d(g_d), \qquad \mathcal{L}_d(g_d) = \mathbb{E}\,\|\hat{Y}^d - Y^d\|^2,
\end{equation}
where each summand depends on $\mathbf{g}$ only through $g_d$. Consequently
\begin{equation}
\inf_{\mathbf{g} \in \mathbb{R}^D} \mathcal{L}(\mathbf{g}) = \sum_{d} \inf_{g_d \in \mathbb{R}} \mathcal{L}_d(g_d),
\end{equation}
and a single scalar gate $g$ shared across channels satisfies $\inf_{g} \sum_d \mathcal{L}_d(g) \ge \sum_d \inf_{g_d} \mathcal{L}_d(g_d)$, with equality when some common scalar attains the minimum of every $\mathcal{L}_d$.
\end{proposition}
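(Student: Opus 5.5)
The plan is to expand the squared norm channel by channel and then use the fact that the fused forecast for channel $d$ depends on the gate vector only through $g_d$.

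\textbf{Step 1: additivity of the loss.} Write $\|\hat{Y} - Y\|^2 = \sum_{d=1}^{D} \|\hat{Y}^d - Y^d\|^2$. This is just the definition of the Euclidean (Frobenius) norm over the $D$ channel blocks, each block of length $H$. Because targets are bounded by $B$ and every payload and backbone output is bounded, each summand is integrable. Linearity of expectation then gives $\mathcal{L}(\mathbf{g}) = \sum_d \mathcal{L}_d$.

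\textbf{Step 2: dependence on $g_d$ alone.} Inspect the fusion equation. The backbone output $\hat{Y}^d_{\mathrm{base}}$ and the retrieval branch $\mathrm{MLP}(\hat{y}^d_{\mathrm{retr}})$ are held fixed by hypothesis. The retrieval for channel $d$ is computed from channel $d$'s own symbolic state $u^{*}$, so it does not involve any gate. The only gate entering $\hat{Y}^d$ is therefore $\sigma(g_d)$, and $\mathcal{L}_d$ is a function of $g_d$ only. For integrability under unbounded backbone outputs, I would either assume finite second moments or note that $\hat{Y}^d$ is a convex combination of two fixed forecasts with finite second moment.

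\textbf{Step 3: the infimum.} Because the variables are separable, the infimum of a sum of functions of distinct coordinates equals the sum of the infima. For $\le$, take near-minimizers $g_d^{(\varepsilon)}$ with $\mathcal{L}_d(g_d^{(\varepsilon)}) \le \inf \mathcal{L}_d + \varepsilon/D$, assemble them into a vector, and let $\varepsilon \to 0$. For $\ge$, note that every $\mathbf{g}$ satisfies $\sum_d \mathcal{L}_d(g_d) \ge \sum_d \inf \mathcal{L}_d$. Using $\varepsilon$-near-minimizers handles the case where an infimum is attained only in the limit $g_d \to \pm\infty$, that is, a gate fully open or closed.

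\textbf{Step 4: the shared gate.} The shared-gate loss is the restriction of $\mathcal{L}$ to the diagonal $\{g_1 = \dots = g_D\}$, so its infimum is at least the unrestricted infimum, which Step 3 identified. For the equality case, if a common $g^\star$ minimizes every $\mathcal{L}_d$, then evaluating the diagonal at $g^\star$ attains $\sum_d \inf \mathcal{L}_d$.

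I expect no real obstacle. The only point needing care is Step 2: confirming that the "fixed" projection and index do not couple channels through the gate. If the shared projection of width 64 were applied after gating rather than before, the coupling argument would break. So I would state explicitly that the gate is applied after the per-channel projection, exactly as in the displayed fusion equation.
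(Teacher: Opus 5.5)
Your proposal is correct and follows essentially the same route as the paper's proof. Both arguments use additivity of the squared norm over channels, the fact that $\hat{Y}^d$ depends on $\mathbf{g}$ only through $g_d$, coordinatewise decoupling of the infimum, and the shared gate as the restriction of the program to the diagonal $\{g_1=\dots=g_D\}$. Your $\varepsilon$-near-minimizer argument and integrability remarks only make explicit what the paper's short proof leaves implicit; since every summand is nonnegative, linearity of expectation holds in $[0,\infty]$ even without integrability.
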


\begin{proof}
The squared loss is a sum over channels of $\|\hat{Y}^d - Y^d\|^2$, and with the backbone, index, and projection fixed, $\hat{Y}^d$ depends on $\mathbf{g}$ only through $g_d$. Minimizing a separable sum decouples coordinatewise, which gives the first identity. The shared-gate program is the same minimization restricted to the diagonal $\{g_1 = \dots = g_D\}$, so its value is no smaller, and it equals the unconstrained value when the diagonal meets the product of the per-channel argmin sets.
\end{proof}

\begin{proposition}[Bounded regression of the gated predictor]\label{prop:gate-bound}
Assume finite second moments of $\hat{Y}^d_{\mathrm{base}}$, $\hat{y}^d_{\mathrm{retr}}$, and $Y^d$. Then for every channel $\inf_{g_d \in \mathbb{R}} \mathcal{L}_d(g_d) \le \mathcal{L}_d^{\mathrm{base}}$, where $\mathcal{L}_d^{\mathrm{base}}$ is the risk of the unmodified backbone on channel $d$, and hence $\inf_{\mathbf{g}} \mathcal{L}(\mathbf{g}) \le \mathcal{L}_{\mathrm{base}}$. On any channel where retrieval increases the risk for every positive gate value, the infimum is approached as $g_d \to -\infty$, where $\sigma(g_d) \to 0$ and the fusion recovers the no-plug-in backbone on that channel exactly.
\end{proposition}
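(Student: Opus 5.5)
We prove this by reducing it to a one-dimensional problem in the mixing weight. Fix a channel $d$ and write $\alpha = \sigma(g_d) \in (0,1)$, $A = \hat{Y}^d_{\mathrm{base}} - Y^d$ and $R = \hat{y}^d_{\mathrm{retr}} - Y^d$, with the projection absorbed into the retrieval branch as in the standing notation. The fused residual is $\hat{Y}^d - Y^d = (1-\alpha)A + \alpha R$. Hence
\begin{equation}
\mathcal{L}_d = \phi(\alpha) := (1-\alpha)^2\,\mathbb{E}\|A\|^2 + 2\alpha(1-\alpha)\,\mathbb{E}\langle A, R\rangle + \alpha^2\,\mathbb{E}\|R\|^2 ,
\end{equation}
which is a quadratic polynomial in $\alpha$. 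Its coefficients are finite because the assumed second moments of $\hat{Y}^d_{\mathrm{base}}$, $\hat{y}^d_{\mathrm{retr}}$ and $Y^d$ are finite, together with Cauchy--Schwarz for the cross term. We also have $\phi(0) = \mathbb{E}\|A\|^2 = \mathcal{L}_d^{\mathrm{base}}$.

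For the first claim, note that $\sigma$ maps $\mathbb{R}$ onto $(0,1)$, so $\inf_{g_d} \mathcal{L}_d(g_d) = \inf_{\alpha\in(0,1)} \phi(\alpha)$. Since $\phi$ is a polynomial, it is continuous at $0$, and $\phi(\alpha) \to \phi(0)$ as $\alpha \downarrow 0$. The infimum over the open interval is therefore at most $\mathcal{L}_d^{\mathrm{base}}$. The only subtlety is that $\alpha = 0$ itself is not attained, so the inequality holds as an infimum rather than a minimum. Summing over channels and applying the separable identity of Proposition~\ref{prop:decomp} gives $\inf_{\mathbf{g}}\mathcal{L}(\mathbf{g}) = \sum_d \inf_{g_d}\mathcal{L}_d \le \sum_d \mathcal{L}_d^{\mathrm{base}} = \mathcal{L}_{\mathrm{base}}$. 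The last equality holds because the squared loss of the unmodified backbone also splits over channels.

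For the second claim, we read ``retrieval increases the risk for every positive gate value'' as $\phi(\alpha) > \phi(0)$ for all $\alpha \in (0,1)$. Under this reading no $\alpha$ in the open interval does better than the limit value $\phi(0)$. The infimum is therefore exactly $\phi(0)$, approached only as $\alpha \to 0$, which is equivalent to $g_d \to -\infty$ because $\sigma$ is monotone. As $\alpha \to 0$, the fused predictor $\hat{Y}^d$ converges to $\hat{Y}^d_{\mathrm{base}}$ pointwise, and also in $L^2$ by the moment bound, so the fusion recovers the no-plug-in backbone on that channel.

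One optional sharpening is to check whether $\phi'(0) = 2\,\mathbb{E}\langle A, R - A\rangle \ge 0$ holds in this case. Because $\phi$ is convex, a negative derivative at $0$ would mean some small positive $\alpha$ strictly improves on the backbone. This is consistent with the hypothesis but not needed for the proof.

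The main obstacle is interpretive rather than technical. The phrase ``every positive gate value'' has to be read as every $\alpha \in (0,1)$, that is, every real $g_d$, and not as $g_d > 0$. The statement concerns the gate's effect on the backbone, and only the first reading makes the conclusion follow. I would state this reading explicitly. I would also stress that the result holds with the backbone, index and projection frozen, as in Proposition~\ref{prop:decomp}, so it says nothing about joint training dynamics. That is consistent with the measured-gate appendix, which attributes the observed bound to early stopping.
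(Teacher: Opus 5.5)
Your proposal is correct and follows essentially the same route as the paper: write the channel loss as a quadratic in $\sigma(g_d)$ whose value at zero is $\mathcal{L}_d^{\mathrm{base}}$, use that the sigmoid maps onto $(0,1)$ together with continuity to bound the infimum, sum over channels via the separable decomposition, and read the second claim as $q_d(\sigma) > q_d(0)$ for all $\sigma > 0$, which is exactly the paper's reading. The differences are cosmetic. You use continuity at $0$ where the paper writes $\inf_{(0,1)} = \min_{[0,1]}$, and your convexity and $L^2$-convergence remarks are harmless additions.
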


\begin{proof}
Write $\sigma = \sigma(g_d) \in (0,1)$. Expanding the square, $\mathcal{L}_d$ is a quadratic polynomial $q_d(\sigma)$ with coefficients given by second moments of the three random vectors, and $q_d(0) = \mathcal{L}_d^{\mathrm{base}}$. Since $\sigma(\cdot)$ maps $\mathbb{R}$ onto $(0,1)$ and $q_d$ is continuous on $[0,1]$, $\inf_{g_d} \mathcal{L}_d(g_d) = \inf_{\sigma \in (0,1)} q_d(\sigma) = \min_{\sigma \in [0,1]} q_d(\sigma) \le q_d(0)$. Summing over channels and applying Proposition~\ref{prop:decomp} gives the system bound. If $q_d(\sigma) > q_d(0)$ for all $\sigma > 0$, the minimum over $[0,1]$ sits at $\sigma = 0$, approached along $g_d \to -\infty$.
\end{proof}

Here $\mathcal{L}_d^{\mathrm{base}}$ is the risk of the co-trained model's backbone branch with the gate closed, which need not equal the risk of the separately trained backbone used in the paired comparisons. In practice the gate is trained end to end with the rest of the model and the deployed parameters are fixed by validation-based early stopping, so the guarantee holds up to estimation error between the validation and test splits. The proposition licenses gate collapse as an escape that is available by design. The measured gate values (Measured Gate Values section of this appendix) show that trained gates in fact stay near their 0.5 initialization, and that the out-of-regime bound is instead enforced, loosely, by early stopping halting training before the retrieval branch grows large. The regime does not need to hold uniformly across channels for the bound to apply.

\subsection{Consistency of Exact Lookup under Stationarity}

\begin{theorem}[Consistency of the exact-lookup index]\label{thm:lookup-consistency}
Suppose the channel process $\{y_t\}$ is stationary and ergodic with targets bounded by $B$, and that its symbol sequence $\{s_t\}$ is a Markov chain of order 3 on $\{1, \dots, V\}$, equivalently the state sequence $\{u_t\}$ is a stationary finite-state Markov chain, irreducible on its reachable class $\mathcal{U}^* \subseteq \mathcal{U}$. Fix a query state $u^*$ with stationary probability $\pi(u^*) > 0$, and let $\bar{y}^{(n)}_{u^*}$ be the payload after a training split of length $n$. Then, almost surely as $n \to \infty$: (i) $u^*$ is eventually contained in the observed vocabulary, so the fallback is invoked only finitely often; and (ii)
\begin{equation}
\hat{y}_{\mathrm{retr}} = \bar{y}^{(n)}_{u^*} \;\longrightarrow\; f(u^*) := \mathbb{E}\big[Y_{t+1:t+H} \mid u_t = u^*\big].
\end{equation}
\end{theorem}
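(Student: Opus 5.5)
The argument runs entirely through the ergodic theorem applied to the stationary process. Write $N_n(u^*) = \sum_{t \le n-H} \mathbf{1}\{u_t = u^*\}$ for the number of occurrences of the query state in a training split of length $n$ (only those with a full $H$-step future). Write $S_n(u^*) = \sum_{t \le n-H} \mathbf{1}\{u_t = u^*\}\, Y_{t+1:t+H}$ for the summed futures. The payload is then $\bar{y}^{(n)}_{u^*} = S_n(u^*)/N_n(u^*)$ whenever $N_n(u^*) \ge 1$.

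\textbf{Step 1 (part (i)).} Apply Birkhoff's ergodic theorem to the stationary ergodic process and the bounded function $\mathbf{1}\{u_0 = u^*\}$. This gives $N_n(u^*)/n \to \pi(u^*) > 0$ almost surely. Hence $N_n(u^*) \to \infty$, and in particular $N_n(u^*) \ge 1$ for all $n$ beyond a random but almost surely finite $n_0$. So $u^*$ enters the observed vocabulary and stays there, and the fallback for this query fires only for $n < n_0$. The Markov and irreducibility hypotheses are not even needed here. If I prefer the Markov route, the same conclusion is the strong law for positive-recurrent finite chains, where $\pi(u^*)>0$ on the irreducible class gives almost sure recurrence.

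\textbf{Step 2 (part (ii)).} Apply the ergodic theorem coordinatewise to $\phi(\omega) = \mathbf{1}\{u_0 = u^*\}\, Y_{1:H}$. This is a fixed measurable function of finitely many coordinates of the shifted process, so it is stationary and ergodic. It is integrable because $|Y| \le B$. This gives $S_n(u^*)/n \to \mathbb{E}[\mathbf{1}\{u_0=u^*\} Y_{1:H}] = \pi(u^*)\, f(u^*)$ almost surely. Dividing by the limit from Step 1, which is legitimate since $\pi(u^*) > 0$, yields $\bar{y}^{(n)}_{u^*} \to f(u^*)$ almost surely. The discarded boundary terms, the last $H$ indices, are $O(H B / n)$ and vanish.

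\textbf{Main obstacle.} The delicate point is joint ergodicity. The ergodic theorem must apply to the pair process $(u_t, Y_{t+1:t+H})$, not merely to the symbol chain. This holds because both are measurable functions of the shifts of the single ergodic process $\{y_t\}$: the symbols come from a fixed discretizer fitted once, and I treat that discretizer as fixed rather than re-fitted with $n$. If the bins are refit on each training split, an extra argument is needed. In that case I would assume the fitted thresholds converge and that the stationary law puts no mass on the limiting thresholds, so the indicators converge almost surely. I would flag this as a standing simplification. The order-3 Markov assumption is used only to identify $f(u^*)$ with the conditional mean given the full past, which the statement does not require; I would note that it is otherwise inessential.
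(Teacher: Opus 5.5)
Your proposal is correct and follows the paper's proof essentially step for step. Birkhoff's theorem applied to $\mathbf{1}\{u_t = u^*\}$ gives (i), and the same theorem applied to $\mathbf{1}\{u_t = u^*\}\,Y_{t+1:t+H}$, followed by taking the ratio of the two ergodic averages with the $H$-step boundary truncation negligible, gives (ii). Your two caveats also match the paper: it likewise notes that ergodicity alone drives the convergence, with the Markov and irreducibility hypotheses kept for the later finite-sample and coverage results, and it likewise treats the discretizer as a fixed measurable map of the window.
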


\begin{proof}
Stationarity and ergodicity of $\{y_t\}$ make the Birkhoff ergodic theorem applicable to bounded functionals of its trajectory, and both averages below are such functionals because the discretizer is a fixed measurable map of the window. Irreducibility on the finite reachable class gives the symbol chain positive stationary mass on every reachable state. Ergodicity alone drives the convergence; the Markov and irreducibility hypotheses serve the finite-sample and coverage results below. Applying it to $\mathbf{1}\{u_t = u^*\}$ gives
\begin{equation}
\frac{1}{n}\sum_{t \le n} \mathbf{1}\{u_t = u^*\} \;\longrightarrow\; \pi(u^*) > 0 \quad \text{a.s.},
\end{equation}
which forces infinitely many occurrences of $u^*$, proving (i). For (ii), apply the ergodic theorem to the bounded stationary sequence $\mathbf{1}\{u_t = u^*\}\, Y_{t+1:t+H}$ to get
\begin{equation}
\frac{1}{n}\sum_{t \le n-H} \mathbf{1}\{u_t = u^*\}\, Y_{t+1:t+H} \;\longrightarrow\; \pi(u^*)\, f(u^*) \quad \text{a.s.},
\end{equation}
the truncation at $n - H$ being a boundary effect that vanishes in the average. The payload $\bar{y}^{(n)}_{u^*}$ is the ratio of these two ergodic averages restricted to occurrences with full futures inside the split, so it converges almost surely to $f(u^*)$. On the event in (i) the lookup returns $\bar{y}^{(n)}_{u^*}$ for all large $n$, so $\hat{y}_{\mathrm{retr}}$ inherits the limit.
\end{proof}

\begin{remark}[Diffusion generalization]\label{rem:diffusion}
The graph-diffusion variant of the instrument replaces the single entry with a Personalized PageRank average
\begin{equation}
\hat{y}_{\mathrm{dbg}} = \sum_{v} \pi_{u^*}(v)\, \bar{y}_v, \qquad \boldsymbol{\pi}_{u^*} = \alpha \big(I - (1-\alpha)\widehat{P}_n^{\top}\big)^{-1} \mathbf{e}_{u^*},
\end{equation}
computed from the empirical transition matrix $\widehat{P}_n$ of the observed tuple graph (analyzed here without the top-$k$ truncation the implementation applies). For any row-stochastic $\widehat{P}_n$ the spectral radius of $(1-\alpha)\widehat{P}_n^{\top}$ is $1-\alpha < 1$, so the Neumann series gives
\begin{equation}
\boldsymbol{\pi}_{u^*} = \alpha \sum_{m \ge 0} (1-\alpha)^m \big(\widehat{P}_n^{\top}\big)^m \mathbf{e}_{u^*},
\end{equation}
a geometric mixture of multi-step transition distributions from $u^*$, with exact lookup as the $\alpha \to 1$ endpoint. The same ergodic argument as in Theorem~\ref{thm:lookup-consistency} gives $\widehat{P}_n \to P$ entrywise almost surely on states of positive stationary mass, the map $P \mapsto \boldsymbol{\pi}_{u^*}$ is continuous by the spectral-radius bound, and combining with payload convergence yields $\hat{y}_{\mathrm{dbg}} \to \sum_v \pi_{u^*}(v) f(v)$ almost surely. The diffusion variant is therefore consistent for a PPR-smoothed version of $f$ rather than for $f(u^*)$ itself, an additional smoothing bias that the ablation in the main text shows buys nothing on these benchmarks (largest per-dataset difference $0.28\%$, pooled Wilcoxon $p{=}0.88$), which is why the paper presents exact lookup as the instrument.
\end{remark}

\begin{remark}[Finite-sample rate]\label{rem:rate}
The almost-sure convergence admits a standard sharpening. Conditional on $N_u$ occurrences of a state $u$, each empirical transition frequency out of $u$ is an average of bounded increments, and Hoeffding's inequality gives
\begin{equation}
\Pr\big(|\widehat{P}_n(v \mid u) - P(v \mid u)| \ge \epsilon \;\big|\; N_u\big) \le 2\exp\big(-2 N_u \epsilon^2\big).
\end{equation} The payload coordinates satisfy the same exponential rate with $N_u$ deflated by an overlap factor, under one added assumption: the $H$-step future given the symbolic state screens off the earlier past, as when the process is Markov in its order-3 symbolic state. Futures following occurrences of $u$ that are fewer than $H$ steps apart share coordinates, and a blocking argument over occurrence groups separated by at least $H$ steps then yields conditionally independent increments of range $2B$. Without such screening, the blocking controls overlap but not serial dependence, and only the almost-sure limit of Theorem~\ref{thm:lookup-consistency} stands.
\end{remark}

\subsection{A Test-Time Risk Bound under Drift and Vocabulary Mismatch}

The consistency result concerns the training distribution. Deployment introduces two failure channels: the conditional law of the future given the symbolic state can drift between the train and test segments, and test-time states can fall outside the training vocabulary. Write $f_{\mathrm{tr}}(u)$ and $f_{\mathrm{te}}(u)$ for the conditional means of the $H$-step future given state $u$ under the train and test segments, $D_{\mathrm{TV}}(u)$ for the total-variation distance between the two conditional laws at $u$, and $\delta := \Pr_{\mathrm{te}}(u^* \notin \mathcal{V})$ for the out-of-vocabulary mass, where $\mathcal{V}$ is the set of states observed in training. To isolate the two deployment terms we evaluate the index at its training-population payloads, $\bar{y}_u = f_{\mathrm{tr}}(u)$. Theorem~\ref{thm:lookup-consistency} and Remark~\ref{rem:rate} control the finite-sample deviation from this idealization.

\begin{mainpropB}[Deployment risk of exact lookup; restated from the main text]
Under bounded targets, the exact-lookup retrieval satisfies
\begin{multline}\label{eq:drift-bound}
\mathbb{E}_{\mathrm{te}}\big[(\hat{y}_{\mathrm{retr}} - y)^2\big] \;\le\; \underbrace{\mathbb{E}_{\mathrm{te}}\big[(f_{\mathrm{te}}(u^*) - y)^2\big]}_{R_{\mathrm{sym}}\ \text{(Bayes-symbolic risk)}} \\
+\; 4B^2\, \mathbb{E}_{\mathrm{te}}\big[D_{\mathrm{TV}}(u^*)\big] \;+\; 4B^2\, \delta .
\end{multline}
\end{mainpropB}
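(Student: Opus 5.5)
The plan is to reduce everything to a pointwise-in-state bias term via the usual bias–variance (Pythagorean) identity for conditional means, and then split by whether the queried state $u^*$ lies in the training vocabulary $\mathcal{V}$. The key observation is that under exact lookup $\hat{y}_{\mathrm{retr}}$ is a deterministic function of $u^*$: it equals $f_{\mathrm{tr}}(u^*)$ when $u^*\in\mathcal{V}$, and the fallback payload $f_{\mathrm{tr}}(v(u^*))$ of the nearest observed state otherwise. Since $f_{\mathrm{te}}(u^*)=\mathbb{E}_{\mathrm{te}}[y\mid u^*]$, expanding $(\hat{y}_{\mathrm{retr}}-y)^2$ around $f_{\mathrm{te}}(u^*)$ gives
\[
\mathbb{E}_{\mathrm{te}}\big[(\hat{y}_{\mathrm{retr}}-y)^2\big]=R_{\mathrm{sym}}+\mathbb{E}_{\mathrm{te}}\big[(\hat{y}_{\mathrm{retr}}-f_{\mathrm{te}}(u^*))^2\big].
\]
The cross term $2\,\mathbb{E}_{\mathrm{te}}\big[(\hat{y}_{\mathrm{retr}}-f_{\mathrm{te}}(u^*))\,(f_{\mathrm{te}}(u^*)-y)\big]$ vanishes by the tower property, conditioning on $u^*$. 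It then suffices to bound the excess term by $4B^2\,\mathbb{E}_{\mathrm{te}}[D_{\mathrm{TV}}(u^*)]+4B^2\delta$.

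I will split the excess as $\mathbb{E}_{\mathrm{te}}[\,\cdot\,\mathbf{1}\{u^*\in\mathcal{V}\}]+\mathbb{E}_{\mathrm{te}}[\,\cdot\,\mathbf{1}\{u^*\notin\mathcal{V}\}]$.

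\emph{Out-of-vocabulary event.} Both the fallback payload and $f_{\mathrm{te}}(u^*)$ are means of $B$-bounded targets, so each is bounded by $B$ and their squared difference is at most $(2B)^2=4B^2$. Taking expectations yields $4B^2\delta$.

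\emph{In-vocabulary event.} Here the excess is $(f_{\mathrm{tr}}(u^*)-f_{\mathrm{te}}(u^*))^2$, a difference of means of the same bounded coordinate under two conditional laws. With total variation defined as the supremum over events, the standard coupling/duality bound gives $|\mathbb{E}_P g-\mathbb{E}_Q g|\le 2B\,D_{\mathrm{TV}}(P,Q)$ for $|g|\le B$. I will not square this, which would give a tighter $4B^2D_{\mathrm{TV}}^2$ but not the stated form. Instead I use $|f_{\mathrm{tr}}-f_{\mathrm{te}}|\le 2B$ on one factor and the TV bound on the other, giving $(f_{\mathrm{tr}}-f_{\mathrm{te}})^2\le 2B\cdot 2B\,D_{\mathrm{TV}}(u^*)$. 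Dropping the indicator, which is harmless because the term is nonnegative, gives $4B^2\,\mathbb{E}_{\mathrm{te}}[D_{\mathrm{TV}}(u^*)]$. Note that $D_{\mathrm{TV}}\le1$ makes the linear form dominate the squared one, so the stated bound is implied either way.

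Summing the two pieces gives \eqref{eq:drift-bound}.

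I expect the main point needing care to be the vanishing of the cross term rather than the arithmetic. It requires two things. First, $\hat{y}_{\mathrm{retr}}$ must be $\sigma(u^*)$-measurable, which holds because we evaluate at the fixed training-population payloads and the fallback is a fixed map on symbol space; with finite-sample payloads one conditions on the training split, which is independent of the test draw in the idealization. Second, $f_{\mathrm{te}}$ must be the conditional mean under the same test law used for $R_{\mathrm{sym}}$. I will also fix the TV normalization explicitly, as $\sup_A|P(A)-Q(A)|$, since the constant $2B$ depends on it. With the convention $\tfrac12\|P-Q\|_1$ this is the same quantity, so the constant $4B^2$ is unchanged. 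I will state all of this per forecast coordinate, as the proposition does, and sum over coordinates and channels at the end.
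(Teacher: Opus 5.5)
Your proof is correct and follows the paper's argument essentially step for step. Both use the same Pythagorean decomposition with the cross term killed by conditioning on $u^*$, the same $4B^2\delta$ bound on the out-of-vocabulary event, and the same $|\mathbb{E}_P y - \mathbb{E}_Q y| \le 2B\,D_{\mathrm{TV}}$ bound on the in-vocabulary event; the only cosmetic difference is that the paper squares that bound and then uses $D_{\mathrm{TV}}^2 \le D_{\mathrm{TV}}$, whereas you bound one factor by $2B$ and the other by $2B\,D_{\mathrm{TV}}$, which yields the same term.
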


\begin{proof}
The retrieval $\hat{y}_{\mathrm{retr}}$ is a deterministic function of the query state $u^*$ once the training index is fixed, and $f_{\mathrm{te}}(u^*) = \mathbb{E}_{\mathrm{te}}[y \mid u^*]$, so the cross term in the expansion of $(\hat{y}_{\mathrm{retr}} - f_{\mathrm{te}}(u^*) + f_{\mathrm{te}}(u^*) - y)^2$ vanishes by conditioning on $u^*$. This gives the exact decomposition
\begin{equation}
\begin{aligned}
\mathbb{E}_{\mathrm{te}}\big[(\hat{y}_{\mathrm{retr}} - y)^2\big] ={}& \mathbb{E}_{\mathrm{te}}\big[(\hat{y}_{\mathrm{retr}} - f_{\mathrm{te}}(u^*))^2\big] \\
&+ \mathbb{E}_{\mathrm{te}}\big[(f_{\mathrm{te}}(u^*) - y)^2\big].
\end{aligned}
\end{equation}
Split the first expectation on the out-of-vocabulary event $\{u^* \notin \mathcal{V}\}$. On that event the fallback payload and $f_{\mathrm{te}}(u^*)$ both lie in $[-B, B]$, so the integrand is at most $4B^2$ and the contribution is at most $4B^2 \delta$. On the complement, $\hat{y}_{\mathrm{retr}} = f_{\mathrm{tr}}(u^*)$, and for any conditional laws $P, Q$ on $[-B, B]$ the mean difference obeys
\begin{equation}
|\mathbb{E}_P y - \mathbb{E}_Q y| \le 2B\, D_{\mathrm{TV}}(P, Q),
\end{equation}
with $D_{\mathrm{TV}}$ the supremum over measurable events, so $(f_{\mathrm{tr}}(u^*) - f_{\mathrm{te}}(u^*))^2 \le 4B^2 D_{\mathrm{TV}}(u^*)^2 \le 4B^2 D_{\mathrm{TV}}(u^*)$ since $D_{\mathrm{TV}} \le 1$. Taking expectations and adding the three contributions proves the bound.
\end{proof}

The bound is stated at the level of the conditional future law because exact lookup predicts the future window directly. Drift in the one-step transition rows of the symbolic chain induces drift in the $H$-step future law, so transition-level and future-level drift statements are interchangeable up to constants that grow with $H$. The two non-Bayes terms of Equation~\eqref{eq:drift-bound} are the population quantities behind the pre-deployment diagnostic of the main text: the drift term $\mathbb{E}[D_{\mathrm{TV}}]$ grows when the process is nonstationary across the split, with trend domination as the dominant mode on these benchmarks, and its cheap training-split surrogate is the augmented Dickey-Fuller $p$-value (ADF); the mass $\delta$ is estimated directly by the symbolic out-of-vocabulary rate (OOV), the fraction of evaluation-time tuples absent from the index. When both statistics are small, the excess risk of the retrieval branch over the symbolic Bayes risk is small. The bound says nothing about whether the symbolic Bayes risk beats the backbone, which is precisely why it motivates the diagnostic and does not establish the operating-regime claim.

\subsection{Phase Starvation: Proof, Examples, and Scope}\label{app:phase}

\begin{mainpropA}[phase starvation; restated from the main text]
Let $y_t = \Phi(t \bmod L)$, the waveform $\Phi = (\Phi(0), \dots, \Phi(L{-}1))$ drawn from a prior with $\operatorname{Var}(\Phi(q) \mid \Phi(A)) \ge v > 0$ almost surely for every phase $q$ and every set $A$ of at most $S$ phases with $q \notin A$, where $\Phi(A)$ denotes the restriction $(\Phi(a))_{a \in A}$. Call a predictor window-only if it is a measurable function of $(y_{t-S+1}, \dots, y_t)$ alone. If $S < L$, every window-only predictor incurs risk at least $v$ on every horizon step whose target phase the window does not cover, hence at least $v\,(1 - S/L)$ per step averaged over any horizon $H = mL$ (integer $m$). A predictor with access to the realized waveform, which the training record supplies once it spans one period, incurs none of this term; at $S \ge L$ the term vanishes, and the period tile attains zero waveform risk.
\end{mainpropA}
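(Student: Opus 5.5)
The plan is to reduce the statement to a conditional-variance lower bound at a fixed alignment, and then count how many horizon steps land on uncovered phases.

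First I fix the forecast origin $t$, and treat it as deterministic or as random but independent of $\Phi$. The window $(y_{t-S+1},\dots,y_t)$ equals $(\Phi((t-S+1)\bmod L),\dots,\Phi(t\bmod L))$. Because $S<L$, these are $S$ distinct phases, forming a set $A_t$ with $|A_t|=S$. So the window is a measurable function of $(\Phi(A_t),t)$. The $\sigma$-algebra it generates is therefore contained in $\sigma(\Phi(A_t),t)$.

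For any window-only predictor $g$ and horizon step $h$, the target is $y_{t+h}=\Phi(q)$ with $q=(t+h)\bmod L$. Since $g$ is measurable with respect to this coarser $\sigma$-algebra, the standard $L^2$ projection inequality gives
\[
\mathbb{E}\big[(g-\Phi(q))^2\big]\;\ge\;\mathbb{E}\big[\operatorname{Var}(\Phi(q)\mid \Phi(A_t),t)\big].
\]
Here conditioning additionally on the alignment $t$ can only lower the conditional variance; this is the ``revealing the alignment strengthens the bound'' step. When $t$ is independent of $\Phi$, conditioning on $t=\tau$ leaves the prior unchanged. The assumption, applied with $A=A_\tau$ and $q\notin A_\tau$, then makes the integrand at least $v$ almost surely. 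So every uncovered step carries risk at least $v$. On covered steps I only use the trivial bound $0$.

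Next comes the counting step. As $h$ runs over $1,\dots,mL$, the target phase $(t+h)\bmod L$ visits every residue exactly $m$ times. Exactly $L-S$ residues lie outside $A_t$, so there are $m(L-S)$ uncovered steps. Averaging the per-step bound over $H=mL$ steps then gives at least $v(1-S/L)$. If there is additive observation noise independent of $\Phi$, its variance adds to each term, and the bound still holds.

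For the positive claims, a predictor that knows the realized $\Phi$ outputs $\Phi((t+h)\bmod L)$ and has zero waveform error. For the tile when $S\ge L$, I check the index: the tile reads $y_{t-L+1+((h-1)\bmod L)}$, which lies inside the window. Its phase is
\[
t-L+1+((h-1)\bmod L)\equiv t+1+(h-1)=t+h \pmod L,
\]
so it returns $\Phi$ at exactly the target phase, and the waveform risk is zero.

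I expect the main obstacle to be the conditioning bookkeeping rather than any of these calculations. One issue is making precise that the assumption's ``almost surely'' conditional variance (for a fixed set $A$) transfers to the random set $A_t$. Independence of $t$ from $\Phi$, together with $t$ taking countably many values, handles this by conditioning on each value separately. The other issue is stating clearly that the bound is on waveform risk at fixed alignment, and that it does not apply to trained backbones, whose weights depend on the record.
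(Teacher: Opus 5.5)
Your proposal is correct and follows essentially the same route as the paper. You condition on the alignment (independent of $\Phi$), note that the window is a function of the $S$ covered phase values, and lower-bound squared risk by the expected conditional variance, which the assumption floors at $v$ on uncovered target phases, giving $m(L-S)$ starved steps out of $mL$. Your explicit phase check of the tile index $t-L+1+((h-1)\bmod L)$ is a slightly more concrete version of the paper's $y_{t+h-L\lceil h/L\rceil}$ step, and the paper handles observation noise more carefully (conditioning on window noise leaves $\operatorname{Var}(\Phi(q)\mid\Phi(P),a)$ unchanged by independence), but that remark lies outside the statement itself.
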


\begin{proof}
Consider first the noiseless case and condition on the alignment $a = t \bmod L$, which is fixed or drawn independently of $\Phi$. Supplying the alignment only enlarges the predictor's information, so a bound proved with $a$ revealed lower-bounds every window-only predictor. Given $a$, the window is a deterministic function of $\Phi(P)$, where $P$ is the set of $\min(S, L)$ phases the window covers, and the horizon-$h$ target is $\Phi(q)$ with $q = (a + h) \bmod L$. For squared loss, any predictor measurable with respect to the window has risk at least the Bayes risk $\mathbb{E}[\operatorname{Var}(\Phi(q) \mid \Phi(P), a)]$. Coarsening from $\sigma(\Phi(P), a)$ to the window's $\sigma$-algebra only increases expected conditional variance, and the standing assumption bounds the Bayes risk below by $v$ whenever $q \notin P$ and $|P| \le S$. For the horizon average with $H = mL$, the target phases $(a{+}1) \bmod L, \dots, (a{+}H) \bmod L$ visit each residue exactly $m$ times and $L - S$ residues lie outside $P$, so exactly $m(L - S)$ of the $mL$ steps are starved and the per-step average is at least $v\,m(L-S)/(mL) = v(1 - S/L)$. For general $H$ the starved fraction deviates from $1 - S/L$ by at most $L/H$. Additive independent observation noise changes nothing: writing $\varepsilon_P$ for the noise on the covered coordinates, the noisy window is a function of $(\Phi(P), \varepsilon_P, a)$, and $\operatorname{Var}(\Phi(q) \mid \Phi(P), \varepsilon_P, a) = \operatorname{Var}(\Phi(q) \mid \Phi(P), a) \ge v$ since the noise is independent of $\Phi$, while target noise adds its variance to every predictor alike. If the record contains one full period of the realization, $\Phi$ is measurable with respect to it, so the predictor $\mathbb{E}[\Phi(q) \mid \text{record}]$ incurs none of the waveform term. If $S \ge L$, periodicity gives $y_{t+h} = y_{t + h - L\lceil h/L \rceil}$, a fixed coordinate of the window, so copying that coordinate, which is the period tile, attains zero waveform risk with no knowledge of $\Phi$.
\end{proof}

\paragraph{Example 1 (independent values)} If the phase values are independent with variance at least $v$, the assumption holds with that $v$: conditioning on other phases leaves each marginal untouched.

\paragraph{Example 2 (harmonic prior)} Let $\Phi(p) = \sum_{k=1}^{K} a_k \cos(2\pi k p / L) + b_k \sin(2\pi k p / L)$ with independent coefficient pairs drawn from a joint density (e.g., Gaussian), an idealization of the generator's random-phase harmonics. The generator matches the pairwise density requirement: each harmonic carries a random signed amplitude $\sim \mathcal{N}(0, 1/k^2)$ alongside its uniform phase, so each coefficient pair has a rotationally symmetric joint density on $\mathbb{R}^2$ rather than a fixed-amplitude law supported on a circle. The idealized part is independence across pairs, since the generator normalizes each sampled motif to unit empirical variance, rescaling all coefficients by one shared random factor. A global scaling reveals nothing about an unobserved phase value beyond what the $S$ linear constraints already carry, so the waveform stays underdetermined for $S < 2K$ and the conditional variance stays positive almost surely off the same degenerate configurations, though not uniformly bounded below. Example~1 is the prior that meets the assumption exactly. Observing $S$ phase values imposes $S$ linear constraints on the $2K$ coefficients, so for $S < 2K$ the waveform is underdetermined, and $\operatorname{Var}(\Phi(q) \mid \Phi(A)) > 0$ exactly when the evaluation functional of $q$ lies outside the span of the observed ones. Degenerate configurations exist, so this prior meets the proposition's assumption only off them: with $K{=}1$ and $L$ even, a single observation determines the antipodal phase, since $\Phi(p + L/2) = -\Phi(p)$. The per-step bound in the proof is local, so it applies verbatim at every starved step whose (window, target) configuration is nondegenerate, with $v$ the minimum conditional variance over those finitely many configurations. Example~1 supplies a prior meeting the assumption at every step. Once $S \ge 2K$ and the observed linear system is nonsingular, the coefficients, hence the waveform, are determined and the noiseless obstruction vanishes. The generator scales the harmonic count with $L$, which moves the determination threshold $2K$ up with the period, consistent with the measured boundary tracking $L$. The frozen-$K{=}2$ generator control removes the noiseless obstruction beyond $S \ge 4$, so its residual period-tracking is a noise-conditioning effect outside this proposition, reported in the generator-controls paragraph as sensitivity rather than claimed here.

\paragraph{Scope} Three limits are deliberate. The proposition bounds window-only prediction. A trained backbone is not window-only, since its weights have seen the training record, so its failure to close the gap at $S \ll L$ is measured, not derived. The empirical location of the crossing relative to $S = L$, and the role of noise where the noiseless obstruction is absent, remain empirical. And the bound concerns the deterministic-periodic idealization; real channels mix periodic and aperiodic structure, which is what the concentration statistic measures.

\begin{remark}[Model-class limit: no period-144 structure at order 3]\label{rem:model-class}
The order-3 symbolic state space shared by Theorem~\ref{thm:lookup-consistency} and the test-time risk bound contains at most $V^3 = 125$ states per channel. A deterministic symbolic orbit of period 144 occupies 144 phase positions per cycle over at most 125 available states, so by pigeonhole some state recurs at two different phases within a single period and the chain, conditioned on that state, cannot separate the two phases. Period-144 structure, the dominant period of Weather, is therefore strictly outside the model class, and the same holds for any period exceeding 125 symbol steps. The pigeonhole forces collisions on at least $144 - 125 = 19$ phases, not on all of them, so the index can still lower risk at the remaining phases against a starved backbone. This is consistent with SPM's small measured Weather gain, which rides per-channel independence rather than full period capture. The analysis above motivates the two diagnostic statistics. Beyond the phase-starvation lower bound restated above, it does not derive the precise regime boundary in $(S, L, H)$ nor certify benefit inside it. The operating-regime claim of the main text rests on the measurements, not on this appendix.
\end{remark}

\subsection{Structural Properties of the Symbolic Index}

Three properties distinguish the discrete exact-match index from continuous-similarity retrieval. They are preconditions for the instrument to be a clean probe inside the regime, not sufficient conditions for benefit: when the drift or OOV terms of Equation~\eqref{eq:drift-bound} dominate, coverage is intact but unhelpful, which is the failure mode the diagnostic detects.

\begin{proposition}[Noise absorption]\label{prop:noise}
Let $x, x' \in \mathbb{R}^S$ be two input windows on channel $d$ whose values fall in the same bin coordinatewise on the lookup suffix. Then the two windows produce the same query tuple $u^*$ and the retrieval is identical, $\hat{y}_{\mathrm{retr}}(x) = \hat{y}_{\mathrm{retr}}(x')$, for any within-bin perturbation. In contrast, any retrieval keyed by an injective continuous embedding of the window, for instance the identity embedding with Euclidean distance, assigns the two windows distinct keys whenever $x \ne x'$, so its retrieval is not invariant to within-bin noise in general.
\end{proposition}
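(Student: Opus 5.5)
The statement has two halves, and I will treat them separately. The first half is a pure invariance property of exact lookup. The second is an existence statement about injective keys.

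For the invariance half, I unpack the pipeline set out in the notation paragraph of this appendix. The channel discretizer is a fixed map $b:\mathbb{R}\to\{1,\dots,V\}$ fit on the training split. The query state is the order-3 suffix
\[
u^*(x) = \big(b(x_{S-2}),\, b(x_{S-1}),\, b(x_S)\big).
\]
The hypothesis says $x$ and $x'$ fall in the same bin coordinatewise on the lookup suffix, so $b(x_j)=b(x'_j)$ for $j\in\{S-2,S-1,S\}$. Hence $u^*(x)=u^*(x')$.

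Next I check that $\hat{y}_{\mathrm{retr}}$ depends on the input only through $u^*$. If $u^*\in\mathcal{V}$, the output is the stored payload $\bar{y}_{u^*}$. Otherwise it is the payload of the nearest observed state under the fixed symbol-space metric, with a fixed tie-breaking rule. In both branches the output is a deterministic function of $u^*$ once the index is frozen. The same holds for the diffusion variant of Remark~\ref{rem:diffusion}, since $\boldsymbol{\pi}_{u^*}$ depends only on $u^*$. So the retrievals coincide, and this holds for any perturbation that keeps each suffix coordinate inside its bin.

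For the contrast half, let $\kappa:\mathbb{R}^S\to\mathcal{K}$ be an injective key map. Injectivity gives $\kappa(x)\ne\kappa(x')$ whenever $x\ne x'$, which is exactly the "distinct keys" clause. The claim that retrieval is "not invariant in general" then needs one witness. I take $\kappa=\mathrm{id}$ with Euclidean nearest-neighbour retrieval over a two-element corpus $\{c_1,c_2\}$ whose payloads differ. I choose $c_1,c_2$ so that their perpendicular bisector cuts through the interior of a product of bins, which is possible whenever some bin cell on the suffix has nonempty interior. A query $x$ just on one side of the bisector and a within-bin perturbation $x'$ just on the other side share all bins, yet they retrieve different payloads.

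The main obstacle is making this witness robust: the bisector has to cross a bin cell rather than run along its boundary. I will handle this by placing $c_1,c_2$ symmetrically about an interior point of a single cell, with $S\ge 3$ so that the suffix exists. I will also note that continuity of the embedding is not what drives the argument: any injective key separates the windows, while exact binning by construction quotients out within-bin variation.
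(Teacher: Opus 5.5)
Your proof is correct and follows the paper's argument. The discretizer sees the window only through bin memberships, so equal memberships on the suffix give the same tuple $u^*$; the lookup, fallback branch included, is a deterministic function of $u^*$; and for the contrast, injectivity alone separates the keys. The one place you go beyond the paper is the explicit witness for ``not invariant in general.'' The paper's proof stops at ``the continuous query moves,'' yet distinct keys do not by themselves force distinct retrievals, so your witness is what actually completes the second half. Your bisector construction is more than you need, though. Store two corpus entries with different payloads at the keys $\kappa(x)$ and $\kappa(x')$ themselves: each query then retrieves its own entry at distance zero. This works for every injective key map and every metric on key space, and needs no interior-point argument.
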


\begin{proof}
The discretizer depends on the window only through the bin membership of each coordinate, so equal memberships on the suffix yield equal symbol triples and hence the same query tuple; the lookup, including the fallback branch, is a deterministic function of the query tuple, which gives the first claim. For the second, injectivity gives a nonzero key distance $\|x - x'\| > 0$, so the continuous query moves under perturbations that the discrete key absorbs, and retrieval invariance fails in general.
\end{proof}

\begin{proposition}[Exact-match coverage]\label{prop:match}
Under the assumptions of Theorem~\ref{thm:lookup-consistency}, let $\mathcal{U}^*$ be the reachable class and $\mathcal{V}_n$ the vocabulary observed in a training split of length $n$. There exist constants $C < \infty$ and $\rho \in (0,1)$, determined by the chain, such that a test query $u^*$ distributed according to the stationary law satisfies
\begin{equation}
\Pr\big(u^* \notin \mathcal{V}_n\big) \le C \rho^n.
\end{equation}
In the idealization of independent tuple draws the constants are explicit:
\begin{equation}
\Pr\big(u^* \notin \mathcal{V}_n\big) \le |\mathcal{U}^*|\, (1 - p_{\min})^n, \qquad p_{\min} = \min_{u \in \mathcal{U}^*} \pi(u),
\end{equation}
with $|\mathcal{U}^*| \le 125$. When $u^* \in \mathcal{V}_n$ the lookup key matches exactly and the retrieval error reduces to the payload estimation error of Remark~\ref{rem:rate}, whereas a continuous-key index over an atomless window distribution has exact-match probability zero, so every retrieval answers from a perturbed key rather than the queried one.
\end{proposition}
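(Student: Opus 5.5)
The plan has three parts: a geometric bound for the finite-state Markov chain of Theorem~\ref{thm:lookup-consistency}, an explicit union bound in the independent-draw idealization, and two short deductions for the remaining claims. Throughout, the test query $u^*$ is stationary and independent of the training split.

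For the idealization, condition on $u^*=u$ for $u\in\mathcal{U}^*$. Under independent draws, $\Pr(u\notin\mathcal{V}_n)=(1-\pi(u))^n\le(1-p_{\min})^n$. Averaging over $u\sim\pi$ gives
\[
\Pr(u^*\notin\mathcal{V}_n)=\sum_{u\in\mathcal{U}^*}\pi(u)(1-\pi(u))^n\le(1-p_{\min})^n .
\]
This is stronger than the stated union bound, since the prefactor $|\mathcal{U}^*|$ is only needed if one bounds the event that some reachable state is missing. I will state that version as well, because it also controls the OOV rate for every query simultaneously. Finally, $|\mathcal{U}^*|\le V^3=125$ follows from the notation section.

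For the Markov case, I fix $u\in\mathcal{U}^*$ and bound the probability that the stationary chain of length $n$ avoids $u$, that is, the tail of the hitting time $\tau_u$. Irreducibility on the finite class $\mathcal{U}^*$ gives some $k\le|\mathcal{U}^*|$ and some $\eta>0$ such that from every state of $\mathcal{U}^*$ the chain hits $u$ within $k$ steps with probability at least $\eta$. The finite minimum over starting states and targets is positive. Splitting the $n$ steps into $\lfloor n/k\rfloor$ blocks and applying the Markov property at the block boundaries gives
\[
\Pr(\tau_u>n)\le(1-\eta)^{\lfloor n/k\rfloor}\le (1-\eta)^{-1}\big((1-\eta)^{1/k}\big)^n .
\]
This yields $C=(1-\eta)^{-1}$ and $\rho=(1-\eta)^{1/k}$, uniformly over $u$. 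Averaging over the independent stationary query then finishes the bound. The main obstacle is this uniform block-hitting estimate. Periodicity of the chain is harmless here, because the argument only needs a hit somewhere within each block of length $k$, not a hit at an exact time. I also need the training chain to start inside $\mathcal{U}^*$, which holds because it is stationary on its reachable class.

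The last two claims are immediate. When $u^*\in\mathcal{V}_n$, exact lookup returns $\bar y_{u^*}$ with no fallback, so the error about $f(u^*)$ is exactly the payload deviation controlled by Remark~\ref{rem:rate}. For a continuous-key index with atomless window law, the test window is independent of the $n$ stored keys. For each stored key $x_i$ one has $\Pr(x=x_i)=0$, so a countable union shows that an exact match occurs with probability zero.
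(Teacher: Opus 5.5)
Your block-hitting estimate is the paper's own argument: uniform $k$ and $\eta$ from irreducibility on the finite class, the Markov property across $\lfloor n/k\rfloor$ blocks, and $\rho=(1-\eta)^{1/k}$, $C_0=(1-\eta)^{-1}$. Your exact-match remark also matches the paper.

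The step that fails is the last one in the Markov case. Averaging the per-state miss probability against $\pi$ uses $\Pr(u^*=u,\ u\notin\mathcal{V}_n)=\pi(u)\Pr(u\notin\mathcal{V}_n)$, which is independence of the query from the training trajectory. You impose this as a standing assumption, but the proposition fixes only the marginal law of $u^*$. In the chronological-split setting it models, test tuples continue the same chain, so the query is not independent of the training segment. Without the factorization, your per-state bound does not control $\Pr(u^*\notin\mathcal{V}_n)$. A coupling with marginal $\pi$ can steer the query onto unvisited states whenever any exist. So the quantity you actually need to bound is the probability that \emph{some} reachable state is missed.

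The repair is the union bound you already mention for the idealization. Since $u^*\in\mathcal{U}^*$ almost surely, $\{u^*\notin\mathcal{V}_n\}\subseteq\bigcup_{u\in\mathcal{U}^*}\{u\notin\mathcal{V}_n\}$. Your per-state estimate is uniform in $u$, so this gives $\Pr(u^*\notin\mathcal{V}_n)\le|\mathcal{U}^*|(1-\eta)^{-1}\rho^n\le 125(1-\eta)^{-1}\rho^n$ for any joint law. That is exactly the paper's route, which notes explicitly that it needs no independence between query and training trajectory. The factor $|\mathcal{U}^*|$ in both displayed bounds is the price of dropping independence. Your sharper bound $\sum_u\pi(u)(1-\pi(u))^n\le(1-p_{\min})^n$ holds only when the query is a fresh independent draw.

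For the continuous-key claim the situation is reversed. Some condition on the joint law is genuinely required there, since atomless marginals alone permit $x=x_i$ identically. Your independence assumption is a legitimate sufficient condition for that claim, and it is a little more careful than the paper's wording.
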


\begin{proof}
Fix a reachable state $s$. By irreducibility on the finite class $\mathcal{U}^*$ there exist $m \in \mathbb{N}$ and $\epsilon > 0$ such that from every state the chain visits $s$ within $m$ steps with probability at least $\epsilon$. Finiteness of $\mathcal{U}^*$ lets one take $m$ and $\epsilon$ uniform over starting states and over $s$. Applying the Markov property across $\lfloor n/m \rfloor$ consecutive blocks of length $m$ gives $\Pr(s \text{ unvisited in } n \text{ steps}) \le (1-\epsilon)^{\lfloor n/m \rfloor} \le C_0 \rho^n$ with $\rho = (1-\epsilon)^{1/m}$ and $C_0 = (1-\epsilon)^{-1}$. A union bound over the at most $|\mathcal{U}^*| \le 125$ reachable states bounds the probability that any reachable state is unvisited by $C \rho^n$ with $C = 125\, C_0$, and since the stationary law charges only $\mathcal{U}^*$, the event $\{u^* \notin \mathcal{V}_n\}$ is contained in that event, which requires no independence between the test query and the training trajectory. Under independent draws the per-state miss probability is exactly $(1 - \pi(s))^n \le (1 - p_{\min})^n$ and the union bound gives the explicit form. The exact-match claim is immediate: an observed key is its own nearest neighbor, so the lookup returns the entry stored at $u^*$ itself. For a continuous key with atomless query and key distributions, the event of an exact tie has probability zero.
\end{proof}

\begin{proposition}[Finite-space coverage]\label{prop:coverage}
Under the same assumptions,
\begin{equation}
\mathbb{E}\big[\,|\mathcal{U}^* \setminus \mathcal{V}_n|\,\big] \le |\mathcal{U}^*|\, C_0\, \rho^n \longrightarrow 0 \quad (n \to \infty),
\end{equation}
with $C_0$ and $\rho$ as in the proof of Proposition~\ref{prop:match}. The index therefore converges to complete coverage of the reachable symbolic structure of the process, and its asymptotic risk is the within-symbol Bayes risk, the first term of Equation~\eqref{eq:drift-bound}, rather than a quantity growing with the size of the index.
\end{proposition}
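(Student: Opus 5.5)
The plan is to write the expected number of missed states as a sum of indicators and reuse the per-state miss bound already established in the proof of Proposition~\ref{prop:match}. First, by linearity of expectation,
\[
\mathbb{E}\big[\,|\mathcal{U}^* \setminus \mathcal{V}_n|\,\big] = \sum_{s \in \mathcal{U}^*} \Pr\big(s \notin \mathcal{V}_n\big),
\]
which requires no independence between states. The proof of Proposition~\ref{prop:match} shows that each reachable state is hit within $m$ steps with probability at least $\epsilon$ from any start, uniformly over starting states. Applying the Markov property block by block then gives $\Pr(s \notin \mathcal{V}_n) \le (1-\epsilon)^{\lfloor n/m \rfloor} \le C_0 \rho^n$ for every $s \in \mathcal{U}^*$, with the same $C_0$ and $\rho$. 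Summing this over the at most $|\mathcal{U}^*| \le 125$ states yields the stated inequality. Since $\rho<1$ and $|\mathcal{U}^*|$ is finite, the bound tends to zero. This proves the displayed claim.

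Next comes the interpretive consequence: asymptotic risk equals the within-symbol Bayes risk. Since the missed count is a nonnegative integer, its expectation bounds the probability that it is nonzero (Markov's inequality). The bound $C\rho^n$ is summable, so Borel--Cantelli gives that almost surely $\mathcal{V}_n \supseteq \mathcal{U}^*$ for all large $n$. On that event the OOV mass $\delta$ in Equation~\eqref{eq:drift-bound} is zero for any stationary test query, because the stationary law charges only $\mathcal{U}^*$. Under stationarity the train and test conditional laws agree, so the drift term also vanishes. What remains is the payload estimation error, which Theorem~\ref{thm:lookup-consistency} drives to zero almost surely on every state of positive stationary mass. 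Because $\mathcal{U}^*$ is finite and payloads are bounded by $B$, dominated convergence passes this to the expected risk. The limit is the first term $R_{\mathrm{sym}}$, which does not depend on the size of the index.

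The main obstacle is the uniformity step, taking $m$ and $\epsilon$ independent of both the starting state and the target. Irreducibility alone gives a positive hitting probability for each pair. Uniformity then follows by taking the maximum $m$ and the minimum $\epsilon$ over the finitely many pairs in $\mathcal{U}^*\times\mathcal{U}^*$, after noting that a chain started outside $\mathcal{U}^*$ enters it at a random time. To avoid that complication I would start the chain in stationarity, so it lives on $\mathcal{U}^*$ from the outset. The precise meaning of ``asymptotic risk'' in the statement is loose, so I would fix it as the limit of the left side of Equation~\eqref{eq:drift-bound} under stationarity.
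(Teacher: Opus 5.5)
Your proposal is correct and matches the paper's proof. The paper likewise writes $|\mathcal{U}^*\setminus\mathcal{V}_n|$ as a sum of indicators, bounds each term by $C_0\rho^n$ via the block argument from Proposition~\ref{prop:match}, and applies linearity of expectation. For the risk conclusion the paper only remarks that with $\delta\to0$ and no drift, Equation~\eqref{eq:drift-bound} collapses to the Bayes-symbolic term; your Borel--Cantelli, payload-consistency, and bounded-convergence steps are a more careful version of that same remark, and they also handle the empirical rather than training-population payloads.
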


\begin{proof}
Write $|\mathcal{U}^* \setminus \mathcal{V}_n| = \sum_{s \in \mathcal{U}^*} \mathbf{1}\{s \text{ unvisited in } n \text{ steps}\}$ and take expectations; each term is bounded by $C_0 \rho^n$ by the block argument above, and linearity of expectation gives the claim. With $\delta \to 0$ and no drift, Equation~\eqref{eq:drift-bound} collapses to the Bayes-symbolic term.
\end{proof}

Taken together, the three propositions explain the design of the instrument rather than its benefit: discrete keys make the query invariant to within-bin amplitude noise, a 125-state space per channel is covered geometrically fast, and a covered query is answered by the empirical conditional mean at the exact state. Whether that answer helps depends on the operating point $(S, L, H)$ and on the drift and OOV terms of the test-time risk bound, which is the division of labor between this appendix and the measurements of the main text.

\end{document}